\documentclass{CUP-JNL-FMS}

\newif\ifFMSVersion
\FMSVersiontrue
\FMSVersionfalse

\usepackage{graphicx}
\graphicspath{{figures/}}
\usepackage{multicol,multirow}
\usepackage{amsmath,amssymb,amsfonts}
\usepackage{mathtools}
\usepackage{mathrsfs}
\usepackage{amsthm}
\usepackage{dsfont}
\usepackage{bm}
\usepackage{etoolbox,xparse}
\usepackage{tabularx,multirow,array,booktabs}
\usepackage{colortbl}
\usepackage{rotating}
\usepackage{appendix}
\usepackage{enumitem}
\usepackage{xurl}
\usepackage[numbers,sort&compress]{natbib}
\usepackage[T1]{fontenc}
\usepackage{newtxtext}
\usepackage{newtxmath}
\usepackage{textcomp}
\usepackage[colorlinks,allcolors=weblmcolor]{hyperref}

\newtheorem{theorem}{Theorem}[section]
\newtheorem{lemma}[theorem]{Lemma}
\newtheorem{proposition}[theorem]{Proposition}
\newtheorem{corollary}[theorem]{Corollary}

\newtheorem*{lemma*}{Lemma}
\theoremstyle{definition}
\newtheorem{definition}[theorem]{Definition}

\theoremstyle{remark}

\newtheorem*{remark*}{Remark}

\numberwithin{equation}{section}

\setlist[enumerate]{topsep=3pt,itemsep=3pt,parsep=0pt,partopsep=0pt,leftmargin=*,align=right,labelsep=0.600em,itemindent=0.00em,labelindent=0.6em}
\setlist[itemize]{topsep=3pt,itemsep=3pt,parsep=0pt,partopsep=0pt,leftmargin=*,align=right,labelsep=0.600em,itemindent=0.00em,labelindent=0.6em}

\newcommand{\myMFabc}[4]{\expandafter#1\csname#3#4\endcsname{{#2{#4}}}}
\newcommand{\myMFcmd}[4]{\expandafter#1\csname#3#4\endcsname{{#2{\csname#4\endcsname}}}}
\newcommand{\MFabc}[3][\newcommand]{%
    \def\doOld##1##2{\forcsvlist{\myMFabc{#1}{##1}{##2}}{#3}}%
    \providecommand{\do}{do}%
    \RenewDocumentCommand \do { >{\SplitList{,}} m } { \doOld##1 }%
    \docsvlist{#2}%
}
\newcommand{\MFcmd}[3][\newcommand]{%
    \def\doOld##1##2{\forcsvlist{\myMFcmd{#1}{##1}{##2}}{#3}}%
    \providecommand{\do}{do}%
    \RenewDocumentCommand \do { >{\SplitList{,}} m } { \doOld##1 }%
    \docsvlist{#2}%
}

\MFabc{ {\mathfrak,frak}, }{T,u,U,o,O,E,m}
\MFabc{ {\mathbb, }, }{A,N,Z,R,C,Q}

\newcommand{\hatbm}[1]{\widehat{\bm{#1}}}
\newcommand{\tildebm}[1]{\widetilde{\bm{#1}}}
\newcommand{\bmcal}[1]{\bm{\mathcal{#1}}}
\newcommand{\caltilde}[1]{\mathcal{\widetilde{#1}}}
\newcommand{\calhat}[1]{\mathcal{\widehat{#1}}}
\newcommand{\scrtilde}[1]{\widetilde{\mathscr{#1}\mspace{1mu}\mspace{-1mu}}}
\newcommand{\scrhat}[1]{\mathscr{\widehat{#1}\mspace{1mu}\mspace{-1mu}}}
\newcommand{\bmcalhat}[1]{\bm{\mathcal{\widehat{#1}}}}
\newcommand{\bmcaltilde}[1]{\bm{\mathcal{\widetilde{#1}}}}
\MFabc{ {\mathcal,cal}, {\mathscr,scr}, {\mathbb,bb}, {\bmcal,bmcal}, {\bmcal,calbm}, {\caltilde,caltilde}, {\caltilde,tildecal}, {\calhat,calhat}, {\calhat,hatcal}, {\scrtilde,scrtilde}, {\scrtilde,tildescr}, {\scrhat,scrhat}, {\scrhat,hatscr}, {\bmcalhat,bmcalhat}, {\bmcalhat,bmhatcal}, {\bmcalhat,calbmhat}, {\bmcalhat,calhatbm}, {\bmcalhat,hatbmcal}, {\bmcalhat,hatcalbm}, {\bmcaltilde,bmcaltilde}, {\bmcaltilde,bmtildecal}, {\bmcaltilde,calbmtilde}, {\bmcaltilde,caltildebm}, {\bmcaltilde,tildebmcal}, {\bmcaltilde,tildecalbm}}{A,B,C,D,E,F,G,H,I,J,K,L,M,N,O,P,Q,R,S,T,U,V,W,X,Y,Z}

\MFabc{{\bm,bm}, {\mathsf,sf}, {\widehat,hat}, {\widetilde,tilde}, {\hatbm,hatbm}, {\hatbm,bmhat}, {\tildebm,tildebm}, {\tildebm,bmtilde}}{a,b,c,d,e,f,g,h,i,j,k,l,m,n,o,p,q,r,s,t,u,v,w,x,y,z,A,B,C,D,E,F,G,H,I,J,K,L,M,N,O,P,Q,R,S,T,U,V,W,X,Y,Z}

\MFcmd{{\bm,bm}, {\widehat,hat}, {\widetilde,tilde}, {\tildebm, tildebm}, {\tildebm, bmtilde}, {\hatbm, hatbm}, {\hatbm, bmhat}}{alpha,beta,gamma,delta,epsilon,zeta,eta,theta,iota,kappa,lambda,mu,nu,xi,omicron,pi,rho,sigma,tau,upsilon,phi,chi,psi,omega,Alpha,Beta,Gamma,Delta,Epsilon,Zeta,Eta,Theta,Iota,Kappa,Lambda,Mu,Xi,Omicron,Pi,Rho,Sigma,Tau,Upsilon,Phi,Chi,Psi,Omega,varrho,varphi,vartheta,varepsilon,varsigma,ell,eps}

\usepackage{xspace}
\newcommand{\actdef}[1]{%
  \expandafter\def\csname#1\endcsname{%
    {\ensuremath{\mathtt{#1}}}\xspace}}
\forcsvlist{\actdef}{ReLU,LReLU,LeakyReLU,ELU,GELU,SiLU,Softplus,dGELU,dSiLU,dSoftplus,Tanh,Sigmoid,Arctan,Softsign,SRS,dSRS,Swish,dSwish,Mish,dMish,SELU,CELU,dSELU,Sin,SinLU,SinTU,PSinTU,sine,cosine,Sine,Cosine,EUAF,DUAF,PEUAF,UAF}

\newlength{\myLength}

\let\tilde\widetilde

\let\epsilon\varepsilon

\let\subset\subseteq

\let\cdots\customcdots
\let\ldots\cdots
\let\dots\cdots
\let\myforall\forall
\def\forall{{\myforall\, }}
\let\myexists\exists
\def\exists{{\myexists\, }}

\definecolor{mygray}{RGB}{230,230,230}
\definecolor{myorange}{HTML}{ff7f0e}

\let\cite\citep

\ifFMSVersion
\articletype{RESEARCH ARTICLE}
\jyear{2026}
\makeatletter
\gdef\@DOI{}
\gdef\@logourl{}
\renewcommand{\printhistory}{}
\def\@opjournalheader{\textit{\@jname}\space {{(\@jyear),\ \textbf{\@jvol}:\@artid}} {\thepage{--}\pageref*{LastPage}}}
\makeatother

\copytext{\textcopyright\ The Author(s) 2026. This is an Open Access article, distributed under the terms of the Creative Commons Attribution licence (http://creativecommons.org/licenses/by/4.0/), which permits unrestricted re-use, distribution, and reproduction in any medium, provided the original work is properly cited.}
\else 
\articletype{}
\jyear{}
\makeatletter
\gdef\@DOI{}
\gdef\@logourl{}
\renewcommand{\printhistory}{}
\def\@opjournalheader{}
\def\ps@titlepage{%
     \def\@oddhead{}%
     \let\@evenhead\@oddhead%
     \def\@oddfoot{\vbox to 0pt{\vskip-\opshortpage{\printcopyright{\@copyrighttext}}}}
     \let\@evenfoot\@oddfoot}%
\def\ps@headings{%
    \def\@oddfoot{\hfill}
    \let\@evenfoot\@oddfoot
    \def\@evenhead{\thepage\hfill}
    \def\@oddhead{\hfill\thepage}
}
\makeatother
\copytext{}
\fi

\usepackage{lineno}

\begin{document}

\begin{Frontmatter}

\title[Sobolev Approximation by Fixed-Size Neural Networks]{Sobolev Approximation by Fixed-Size Neural Networks with Arbitrary Accuracy}

\author[1]{Baicheng Li}
\author[1]{Haizhao Yang}
\author[2]{Shijun Zhang}

\authormark{B. Li, H. Yang and S. Zhang}

\address[1]{\orgdiv{Department of Mathematics}, \orgname{University of Maryland}, \orgaddress{\city{College Park}, \state{MD}, \country{USA}}; \email{baichl@umd.edu}, \email{hzyang@umd.edu}}
\address[2]{\orgdiv{Department of Applied Mathematics}, \orgname{The Hong Kong Polytechnic University}, \orgaddress{\city{Hong Kong}, \country{China}}; \email{shijun.zhang@polyu.edu.hk}}

\keywords{Sobolev approximation, universal activation function, arbitrary accuracy, fixed-size neural networks, approximation of derivatives}
\keywords[MSC Codes]{\codes[Primary]{41A30}; \codes[Secondary]{41A25, 41A46, 46E35, 68T07}}

\abstract{
In this work, we investigate new activation functions for achieving arbitrary-accuracy Sobolev approximation by fixed-size neural networks.
We first show that any function in $W^{2,\infty}((a,b)^d)$ can be approximated with arbitrary accuracy, measured in the $W^{1,\infty}$-norm, by a fixed-size neural network using the Elementary Universal Activation Function (\EUAF).
To extend this result to $W^{s,\infty}((a,b)^d)$ for $s\in\mathbb N$, we introduce a smooth activation $\DUAF_{\infty}$ from the family of Differentiable Universal Activation Functions (\DUAF{s}). We prove that any function in $W^{s,\infty}((a,b)^d)$ can be approximated with arbitrary accuracy in the $W^{s-1,\infty}$-norm by a fixed-size $\DUAF_{\infty}$--activated network. We further construct sigmoidal variants $\widetilde{\DUAF}_{n}$ and show that, for every
$1\le s\le n$, fixed-size $\widetilde{\DUAF}_{n}$--activated networks still approximate
any $f\in W^{s,\infty}((a,b)^d)$ with arbitrary accuracy in the
$W^{s-1,\infty}$-norm. In all these results, the width and depth bounds are computed
explicitly, and the proposed activations are elementary.
}

\end{Frontmatter}

\ifFMSVersion
\linenumbers
\fi
\section{Introduction}
\label{Introduction}

Deep neural networks have achieved remarkable success in a broad spectrum of applications, including scientific computing, computer vision, natural language processing, and data-driven modeling of high-dimensional systems. Beyond these empirical advances, a substantial body of recent work has focused on the theoretical underpinnings of neural networks, with particular emphasis on their expressive power, approximation properties, and architectural efficiency. Such questions are central to understanding why ostensibly simple architectures can represent highly complex functions arising in practice.

A classical line of research on universal approximation establishes that sufficiently large networks with suitable activation functions can approximate locally integrable functions on compact domains arbitrarily well. These results, however, are predominantly qualitative and do not quantify how the approximation error depends on the dimension or on architectural parameters such as width and depth. Subsequent work has therefore shifted attention to quantitative approximation rates of deep networks, investigating how the network size (in terms of width, depth, or number of parameters) scales with the target accuracy. From an approximation-theoretic viewpoint, these results are powerful yet inherently limited: achieving sufficiently small error typically requires networks whose size grows as the desired accuracy increases. This naturally raises the question of whether one can construct activation functions for which fixed-size networks can approximate certain function classes with arbitrary accuracy. A series of works \citep{yarotsky21a,maiorov99,petersen24,shen22,wang25EUAF} has answered this question affirmatively in the setting of $L^p$-approximation, $p \in [1,\infty)$, by exhibiting so-called super-expressive activation functions.

In many applications, however, approximation in $L^p$ of the function values alone is not sufficient, as it offers essentially no control over derivatives. In PDE-based and variational models, the loss functional and physically relevant quantities typically depend on $\nabla u$, $\Delta u$, or higher-order derivatives. Consequently, an $L^p$-accurate surrogate may still exhibit large PDE residuals or significant errors in energy and other derived quantities. Sobolev norms (e.g., $W^{s,p}$) directly address this issue by measuring the joint error of a function and its derivatives, and thus provide the natural framework for analyzing neural-network surrogates of solutions to differential equations and related variational problems. Motivated by these considerations, there has been growing interest in Sobolev approximation theory for neural networks. Yet, existing results for standard activations such as \ReLU, sigmoid, or $\tanh$ typically suffer from intrinsic limitations: the achievable smoothness is restricted, and the network size must increase as the target accuracy improves, often with a pronounced dependence on the dimension.

To the best of our knowledge, no prior work has constructed activation functions that enable arbitrary-accuracy Sobolev approximation by fixed-size network architectures. This paper bridges this gap by providing explicit, computable activation functions and corresponding architectures that achieve such fixed-size, arbitrary-accuracy approximation guarantees in Sobolev norms. First in Section~\ref{sec:EUAF:main}, we show that any function in $W^{2,\infty}((a,b)^d)$ can be approximated with arbitrary accuracy in the $W^{1,\infty}$-norm by a fixed-size neural network using the Elementary Universal Activation Function (\EUAF) proposed in \citep{shen22}. To extend this result to $W^{s,\infty}((a,b)^d)$, in Section~\ref{sec:duaf-construction}, we introduce a smooth, elementary, and explicitly computable activation function $\DUAF_{\infty}$ from the family of \emph{Differentiable Universal Activation Functions} (\DUAF{s}). We prove that any function in $W^{s,\infty}((a,b)^d)$ can be approximated with arbitrary accuracy in the $W^{s-1,\infty}$-norm by a fixed-size $\DUAF_{\infty}$--activated neural network, with explicitly computed bounds on both the network width and depth. We further construct a sigmoidal activation $\widetilde{\DUAF}_n\in C^{n}(\mathbb R)$, and show that for $1\le s\le n$ and any $f\in W^{s,\infty}((a,b)^d)$, there exists a fixed-size $\widetilde{\DUAF}_n$--activated neural network that approximates $f$ arbitrarily well in the $W^{s-1,\infty}$-norm. The corresponding network width and depth depend only on $d$ and $s$ (and polynomially on $n$), and remain independent of the approximation accuracy. Finally, in Section~\ref{sec:QKST:main}, we show that structured Sobolev
targets allow much smaller fixed-size architectures. If the target admits a
regular Kolmogorov-type approximation with a fixed number $Q$ of channels, then
$\DUAF_\infty$ networks achieve width linear in $dQ$ and depth independent of
$d$. 

\subsection{Comparison with existing super-expressive activations.}
Table~\ref{tab:super-expressive-activation-comparison} separates the approximation
guarantee from the qualitative properties of the activation function.  Existing
super-expressive constructions mainly concern function-value approximation in
$C([a,b]^d)$ or its consequence in $L^p$ spaces.  In contrast,
Theorem~\ref{thm:global-w2}, Corollary~\ref{cor:global-winfty}, and
Theorem~\ref{thm:sigmoidal} give fixed-size arbitrary-accuracy approximation in
Sobolev norms.

\begin{table}[h]
\centering
\caption{Comparison of representative super-expressive activation functions}
\label{tab:super-expressive-activation-comparison}

\begingroup
\scriptsize
\setlength{\tabcolsep}{4.6pt}
\renewcommand{\arraystretch}{1.35}

\resizebox{0.999\textwidth}{!}{
\begin{tabular}{ccccccccc}
\toprule
\textbf{Reference}
& \textbf{Activation}
& \textbf{Target space}
& \textbf{Norm}
& \textbf{Smoothness}
& \textbf{Sigmoidal} \\
\midrule

Maiorov--Pinkus \citep{maiorov99}
& abstract
& $C([a,b]^d)$
& $L^\infty$
& analytic but not elementary
& yes \\

Yarotsky \citep{yarotsky21a}
& $\{\sin,\arcsin\}$
& $C([a,b]^d)$
& $L^\infty$
& analytic but using two activations
& no \\

Shen--Yang--Zhang \citep{shen22}
& \EUAF
& $C([a,b]^d)$ / $L^p([a,b]^d)$
& $L^\infty$ / $L^p$
& $C^0(\bbR)\setminus C^1(\bbR)$; admits $C^s$ variants $\forall s$
& no; admits sigmoidal variants \\

Wang et al. \citep{wang25EUAF}
& $\PEUAF$ family
& $C([a,b]^d)$
& $L^\infty$
& $C^0(\bbR)\setminus C^1(\bbR)$
& partly \\

This paper, Theorem~\ref{thm:global-w2}
& \EUAF
& $W^{2,\infty}((a,b)^d)$
& $W^{1,\infty}$
& $C^0(\bbR)\setminus C^1(\bbR)$
& no \\

This paper, Corollary~\ref{cor:global-winfty}
& $\DUAF_\infty$
& $W^{s,\infty}((a,b)^d)$, $s\in\mathbb N$
& $W^{s-1,\infty}$
& $C^\infty(\bbR)$
& no \\

This paper, Theorem~\ref{thm:sigmoidal}
& $\widetilde{\DUAF}_n$
& $W^{s,\infty}((a,b)^d)$, $1\le s\le n$
& $W^{s-1,\infty}$
& $C^n(\bbR)$
& yes \\

Analogous to Theorem~\ref{thm:sigmoidal}
& $\widetilde{\widetilde{\DUAF}}_n$
& $W^{s,\infty}((a,b)^d)$, $1\le s\le n$
& $W^{s-1,\infty}$
& $C^n(\bbR)$
& yes and strictly increasing \\

\bottomrule
\end{tabular}
}

\vspace{0.4em}

\begin{minipage}{0.98\textwidth}
\scriptsize
\small
\fontsize{9pt}{10.8pt}\selectfont
\emph{Notes.}
For Maiorov--Pinkus \citep{maiorov99}, the activation is analytic and sigmoidal,
but its construction is existential rather than given by a simple elementary
expression. For Wang et al. \citep{wang25EUAF}, ``partly'' means that the proposed
family includes sigmoidal examples, although not every member is necessarily
sigmoidal.
\end{minipage}
\endgroup
\end{table}

\subsection{Contribution and related work}
Classical universal approximation theorems show that sufficiently large neural
networks are dense in broad function spaces, but they do not quantify the dependence
of the architecture on the approximation accuracy
\citep{cybenko89,hornik89,hornik91,funahashi89,hornik90,leshno93,pinkus99survey}.
Quantitative approximation theory gives such dependence and is closely related to
nonlinear approximation \citep{devore98,cohen22,daubechies22}.  For Barron-type
classes, two-layer networks achieve the dimension-independent Monte Carlo rate
$\mathcal{O}(N^{-1/2})$ in the $L^2$--space using width at most $N$ \citep{barron93}.  For classical
smoothness classes, the standard benchmark is algebraic and dimension-dependent, of
the form $\mathcal{O}(N^{-s/d})$ for \(s\)-smooth functions in \(d\) variables
\citep{mhaskar96,devore98}.  For \ReLU networks, \citet{yarotsky17} proved that
\(\varepsilon\)-accuracy for Sobolev-smooth functions can be achieved with
\(\mathcal{O}(\varepsilon^{-d/s})\) nonzero weights up to logarithmic factors, and
\citet{yarotsky18} showed that very deep fixed-width \ReLU networks approximate
continuous functions at the optimal rate
\(\mathcal{O}(\omega_f(W^{-2/d}))\) in terms of the total number \(W\)
of weights.  For \(C^s\)-functions, \citet{lu21} obtained the width-depth bound that
\ReLU networks with width \(\mathcal{O}(N\ln N)\) and depth \(\mathcal{O}(L\ln L)\)
achieve error
\(\mathcal{O}(\|f\|_{C^s}N^{-2s/d}L^{-2s/d})\).  Related refinements for compositional
or phase-diagram regimes were developed in \citet{shen19a,yarotsky20}.  In Sobolev
and Besov spaces, \citet{guhring20} obtained \ReLU approximation rates of the form
\(\mathcal{O}(W^{-(s-r)/d})\) in weaker \(W^{r,p}\)-norms, \citet{deryck21} proved
high-order Sobolev approximation estimates for \(\tanh\)-networks, and \citet{siegel23}
proved optimal deep \ReLU rates \(\mathcal{O}(L^{-2s/d})\) on Sobolev and Besov balls
using fixed width \(25d+31\) and growing depth \(L\).  Piecewise smooth targets,
PDE-related target classes, and functions optimally approximable by affine systems
have also been studied in \citet{petersenVoigtlaender18,elbrachter22,beck20,bolcskei19,
honYang21,yang22}.  These results are sharp or nearly sharp in their respective
settings, but the width, depth, or number of weights necessarily grows as
approximation error approaches zero.

One way to reduce this dependence is to impose additional structure on the target
space or to use more expressive activations.  For analytic, Barron, and band-limited
classes, improved dimension dependence is possible under stronger assumptions on the
target function \citep{eWang18,e2019,eWojtowytsch22,siegelXu21,chenWu19,montanelli21}.
Another approach is to use nonstandard or hybrid activations: \citet{yarotsky20}
obtained root-exponential rates \(\mathcal{O}(e^{-c_d\sqrt W})\) for
\((\sin,\mathrm{\ReLU})\)-networks with \(W\) parameters; \citet{shen21a} obtained
\(\mathcal{O}(\sqrt d\,N^{-\sqrt L})\) for \((\mathrm{Floor},\mathrm{\ReLU})\)-networks
of width \(\mathcal{O}(N)\) and depth \(\mathcal{O}(L)\); and
\citet{shen21b,jiao21} obtained exponential-type rates such as
\(\mathcal{O}(\sqrt d\,2^{-N})\) by allowing stronger activation mechanisms.  These
results improve the rate or dimension dependence, but they are still growing-size
approximation results.

A more radical direction is fixed-size approximation, whose conceptual origin is the
Kolmogorov--Arnold superposition theorem \citep{kolmogorov57,arnold57}.  The classical
Kolmogorov representation uses univariate functions depending on the target, so it
does not yield a target-independent activation function.  \citet{maiorov99} proved
fixed-size universality with a single analytic sigmoidal activation and
\(\mathcal{O}(d^2)\) neurons, but the activation is constructed abstractly and is not
known to be elementary or computable in closed form.  \citet{guliyevIsmailov18}
studied related fixed-weight and fixed-architecture universality results.
\citet{yarotsky21a} introduced elementary super-expressive activations and showed
that fixed architectures depending only on \(d\) can approximate arbitrary continuous
functions, although the construction uses multiple activations and the precise
dimension dependence is not the same explicit width-depth bound as in later $\EUAF$
work. The closest
predecessor of the present paper is \citet{shen22}, which constructs the Elementary
Universal Activation Function \(\EUAF\) and proves that width \(36d(2d+1)\) and
depth \(11\), hence \(\mathcal{O}(d^2)\) neurons, suffice for arbitrary uniform
approximation on \(C([a,b]^d)\), and consequently for density in \(L^p([a,b]^d)\),
\(1\le p<\infty\).  Further \(\EUAF\)-type families and variants were developed in
\citet{wang25EUAF}.

The gap addressed here is specifically a Sobolev gap.  Existing fixed-size
super-expressive results are function-value approximation results: they prove density
in \(C([a,b]^d)\) or \(L^p([a,b]^d)\), but they do not control weak derivatives.
This distinction is essential.  A fixed-size \(L^\infty\)- or \(L^p\)-approximation
theorem cannot be differentiated to obtain \(W^{s,p}\)-approximation.  Moreover, the
gap is not merely that the original \(\EUAF\) is nonsmooth.  Even the \(C^s\)-\UAF and
sigmoidal \UAF variants discussed in \citet[Sections~4.1--4.2]{shen22} remain
statements about uniform approximation of continuous functions; they do not assert
fixed-size density in Sobolev spaces, nor derivative approximation, nor a
\(W^{s-1,\infty}\)-error bound.  Thus, to the best of our knowledge, before the
present work there was not even an existential fixed-size theorem showing that a
smooth or sigmoidal target-independent activation can approximate Sobolev functions
with arbitrary accuracy in Sobolev norms; a fortiori, there was no elementary,
explicit, computable smooth or sigmoidal activation with such a guarantee.

The first contribution of this paper is to show that the fixed-size phenomenon of
\citet{shen22} can be lifted from function-value approximation to first-order Sobolev
approximation.  Theorem~\ref{thm:global-w2} shows that every
\(f\in W^{2,\infty}((a,b)^d)\) can be approximated in \(W^{1,\infty}((a,b)^d)\) by an
\(\EUAF\)-activated network with width \(4^d(5d^2+8d+3)\) and depth \(2d+5\).  Thus
the architecture is independent of \(\varepsilon\), but the conclusion is stronger
than \(L^\infty\)-approximation because it simultaneously controls first derivatives.
The second contribution is the construction of differentiable universal activation
functions. Theorem~\ref{thm:global-ws} proves that for \(1\le s\le n+1\), every
\(f\in W^{s,\infty}((a,b)^d)\) can be approximated in \(W^{s-1,\infty}\) by a
\(\DUAF_n\)-activated fixed-size network with width
\(N_{s,d}=\mathcal{O}(4^d(s+d)\binom{s+d-1}{d})\) and depth
\(L_{s,d}=\mathcal{O}(s+d)\).  In particular, the \(C^\infty\) activation
\(\DUAF_\infty\) yields fixed-size arbitrary-accuracy approximation for every finite
Sobolev order. To the best of our knowledge, this is also the first smooth super-expressive activation function.

The third contribution is a sigmoidal Sobolev version.  Theorem~\ref{thm:sigmoidal}
constructs a bounded, monotone, \(C^n\) sigmoidal activation
\(\widetilde{\DUAF}_n\) and proves fixed-size approximation in \(W^{s-1,\infty}\) for
\(1\le s\le n\), with width \(S_nN_{s,d}\), where \(S_n:=\max\{4n^2+19,26\}\), and depth
\((3n^2+5)L_{s,d}\), i.e., only a polynomial \(\mathcal{O}(n^2)\) overhead relative to
the \(\DUAF_n\) architecture.  This is qualitatively different from the earlier
sigmoidal UAF construction in \citet{shen22}: the activation here is sigmoidal and
\(C^n\), but the approximation guarantee is in Sobolev norm rather than merely in
\(C([a,b]^d)\) or \(L^p([a,b]^d)\).  Finally, Theorem~\ref{thm:QKST} shows that if the target
belongs to the structured Kolmogorov-type class \(K_{d,Q}^s((a,b)^d)\), then the
general tensor-grid width can be replaced by width \(\mathcal{O}(dQs^2)\)
and depth \(\mathcal{O}(s)\).  Hence the paper separates two issues:
for arbitrary Sobolev targets, fixed-size approximation is possible but the width is
exponential in \(d\); for structured Sobolev targets, the same fixed-size phenomenon
is compatible with width linear in \(dQ\).

\subsection{Organization of the paper}
The rest of this paper is organized as follows. Section~\ref{sec:EUAF:main}
states the first-order Sobolev approximation result for the \EUAF activation.
Sections~\ref{sec:duaf-construction}--\ref{sec:sigmoidal-duafs} introduce the
\DUAF family and its sigmoidal variants, and establish the corresponding
fixed-size Sobolev approximation results. Section~\ref{sec:QKST:main} presents
the reduced-complexity result for structured Sobolev targets. Sections
\ref{sec:notation}--\ref{sec:DNN} collect the basic notation, neural-network
conventions, and Sobolev-space preliminaries, followed by the proof roadmap.
Section~\ref{sec:fundamental-lemmas} develops the algebraic and analytic
primitives for the \DUAF constructions. Section~\ref{sec:proof-main} proves the
main \EUAF and \DUAF approximation theorems through local Taylor approximation,
shifted-grid gluing, and affine rescaling. Section~\ref{sec:proof-qkst} proves
the structured-target theorem. Section~\ref{sec:proof-sigmoidal-loc} gives the
main technical proof for the sigmoidal activation replacement, while
Section~\ref{sec:proof-chi-est} proves the cutoff estimates used in that
argument. Finally, Section~\ref{sec:conclusion} concludes the paper and
discusses several open questions.

\section{Main results}

\subsection{Sobolev approximation using \EUAF}
\label{sec:EUAF:main}

The Elementary Universal Activation Function (\EUAF) was introduced in \citep{shen22}. It is defined by
\begin{equation}\label{equation:EUAF}
\EUAF(x):=
\begin{cases}
\tfrac{x}{1-x}, & \text{if } x \le 0,\\[0.6em]
\bigl|x - 2\lfloor \tfrac{x+1}{2} \rfloor\bigr|, & \text{if } x > 0.
\end{cases}
\end{equation}
An illustration of \EUAF on the interval $[-10,10]$ is provided in Figure~\ref{fig:EUAF-sigma}.

\begin{figure}[ht]
  \centering
  \includegraphics[width=0.526185\linewidth]{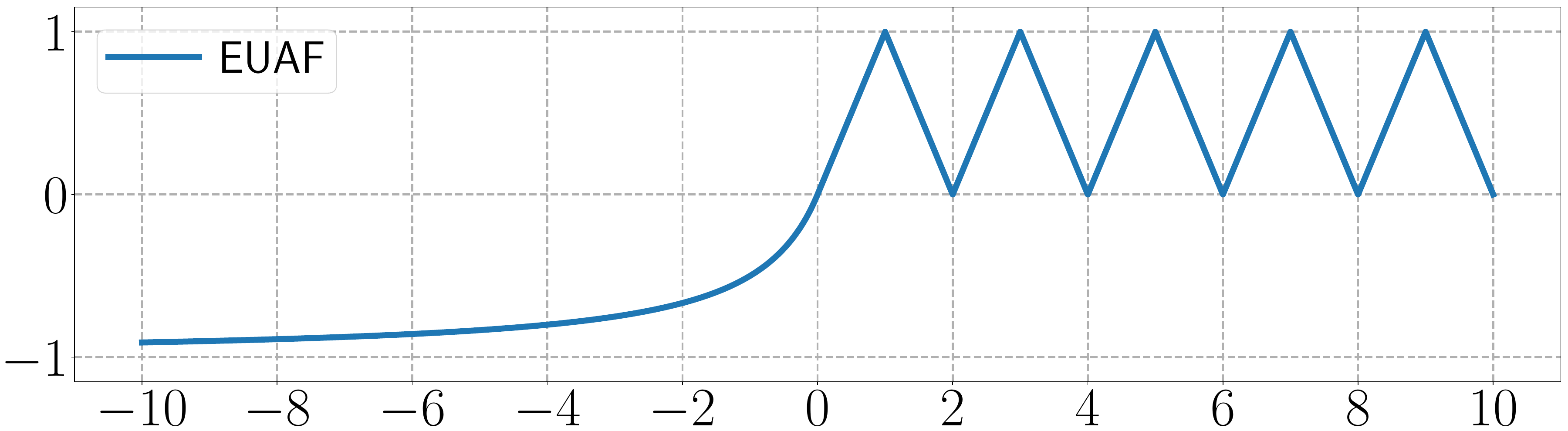}
  \caption{An illustration of  \EUAF on $[-10,10]$}
  \label{fig:EUAF-sigma}
\end{figure}

It was established in \citep{shen22} that fixed-size neural networks with \EUAF activation can approximate any function in $C([a,b]^d)$ arbitrarily well in the $L^\infty$-norm. Extending such fixed-size and arbitrary-accuracy approximation results to Sobolev norms is, however, highly non-trivial. In this work, we bridge this gap by establishing the following theorem.

\begin{theorem}
\label{thm:global-w2}
Given any $f\in W^{2,\infty}((a,b)^d)$ and  $\varepsilon>0$, there exists an $\EUAF$--activated network $\Phi$ with width no greater than $4^d(5d^2+8d+3)$ and depth no greater than $2d+5$, such that
\[
  \|\Phi-f\|_{W^{1,\infty}((a,b)^d)}<\varepsilon.
\]
\end{theorem}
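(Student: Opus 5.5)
The plan is to combine a cellwise first-order Taylor approximation with a smooth-enough gluing over shifted grids, so that the architecture never depends on $\varepsilon$; $\varepsilon$ enters only through the weights. By an affine change of variables, which costs nothing in width or depth, I first reduce to $(0,1)^d$.

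\textbf{Step 1 (Taylor skeleton).} Fix $K\in\mathbb N$ and the grid of cubes $Q_\beta$ of side $1/K$. On $Q_\beta$, let
\[
p_\beta(x)=f(x_\beta)+\nabla f(x_\beta)\cdot(x-x_\beta),
\]
where $x_\beta$ is the center of $Q_\beta$. I use local averages of $f$ and $\nabla f$ in place of point values if needed, since $f\in W^{2,\infty}$ has a $C^1$ representative. Then
\[
\|f-p_\beta\|_{L^\infty(Q_\beta)}\lesssim K^{-2}|f|_{W^{2,\infty}},\qquad \|\nabla(f-p_\beta)\|_{L^\infty(Q_\beta)}\lesssim K^{-1}|f|_{W^{2,\infty}}.
\]
The piecewise-defined function $\sum_\beta p_\beta\mathbf 1_{Q_\beta}$ is discontinuous, so it is not yet a $W^{1,\infty}$ approximant.

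\textbf{Step 2 (shifted-grid gluing).} I take $2^d$ grids, each shifted by $\tfrac{1}{2K}$ in some subset of coordinates. For each shifted grid $\mathfrak s$, I build a tensor-product cutoff $\chi_{\mathfrak s}(x)=\prod_{i=1}^d \phi_{s_i}(x_i)$. Each $\phi_{s_i}$ is a one-dimensional continuous piecewise-linear trapezoid, periodic with period $1/K$, vanishing in a neighbourhood of that grid's cell boundaries, with $\sum_{\mathfrak s}\chi_{\mathfrak s}\equiv 1$ and $|\nabla\chi_{\mathfrak s}|\lesssim K$. The glued function $F=\sum_{\mathfrak s}\chi_{\mathfrak s}\,P_{\mathfrak s}$, with $P_{\mathfrak s}$ the grid-$\mathfrak s$ piecewise Taylor polynomial, is then Lipschitz. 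Since $\sum_{\mathfrak s}\nabla\chi_{\mathfrak s}=0$,
\[
\nabla(F-f)=\sum_{\mathfrak s}\bigl(\nabla\chi_{\mathfrak s}\,(P_{\mathfrak s}-f)+\chi_{\mathfrak s}\,\nabla(P_{\mathfrak s}-f)\bigr).
\]
This is of size $K\cdot K^{-2}+K^{-1}$, so $\|F-f\|_{W^{1,\infty}}\lesssim K^{-1}$. The trapezoids and the cell-index maps $x\mapsto\lfloor Kx_i\rfloor$, the latter needed only where $\chi_{\mathfrak s}\neq 0$, are realized exactly with a few EUAF neurons. This uses the sawtooth branch $x>0$, which is the triangle wave and yields exact periodic piecewise-linear functions after affine pre- and post-processing.

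\textbf{Step 3 (coefficients via EUAF super-expressiveness).} On each shifted grid, the $d+1$ coefficient maps $\beta\mapsto f(x_\beta)$ and $\beta\mapsto\partial_i f(x_\beta)$ are functions on a finite index set. Following \citep{shen22}, I first encode $\beta$ into one integer $k_\beta$ exactly, using a linear combination with base $K$. Then I use the point-fitting property of the sawtooth branch: for any finite set of targets in $[0,1]$, a suitable irrational-type frequency $w$ makes $\EUAF(wk)$ approximate each target to any accuracy $\delta$. Because the coefficient is a function of the index only, it is constant on the support of $\chi_{\mathfrak s}$ intersected with the cell. Its derivative there is zero, so a coefficient error $\delta$ only adds $O(\delta K)$ to the $W^{1,\infty}$ error through $\nabla\chi_{\mathfrak s}$, and $\delta\ll 1/K$ suffices.

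\textbf{Step 4 (products).} The remaining nonlinearity is multiplication: $\chi_{\mathfrak s}\cdot(\text{coefficient})\cdot(x_i-x_{\beta,i})$, together with the $d$-fold product defining $\chi_{\mathfrak s}$. I realize $t^2$ approximately in $C^1$ on bounded sets from the smooth branch $x/(1-x)$ on $x\le 0$ via a second difference quotient,
\[
\tfrac{c}{h^2}\bigl(\sigma(-t_0+ht)+\sigma(-t_0-ht)-2\sigma(-t_0)\bigr)\to t^2 \quad\text{as } h\to 0,
\]
which holds in $C^1$ on bounded sets. Products then come from $xy=\tfrac14\bigl((x+y)^2-(x-y)^2\bigr)$, with the approximation accuracy set by weights only. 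Computing the $d$ products in $\chi_{\mathfrak s}$ sequentially, plus a few layers for indices, coefficients and the final multiply, gives depth of order $2d+5$. Counting neurons over the $2^d$ grids, the $d+1$ coefficients, the parallel channels per product, and a further factor $2^d$ for handling the index extraction robustly, should give width $4^d(5d^2+8d+3)$. I would fix this bookkeeping last.

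\textbf{Main obstacle.} The delicate point is error control in $W^{1,\infty}$ through the composition. Approximate multiplication must hold in $C^1$ uniformly on the relevant bounded ranges. Moreover, the exact-on-support index maps must be composed only with factors that vanish, together with their derivatives' effect, near the discontinuity set of $\lfloor Kx_i\rfloor$; the chain rule can amplify errors there. I would first verify the one-dimensional case $d=1$ in full, checking that every discontinuous intermediate is multiplied by a factor vanishing on a neighbourhood of its jumps, and then tensorize.
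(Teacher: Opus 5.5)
Your overall plan matches the paper's: cellwise Taylor data, a cell-index map feeding a one-parameter point-fitting readout, shifted grids glued by Lipschitz gates, and the Leibniz bound $K\cdot K^{-2}+K^{-1}$. Your pointwise first-order Taylor step, via the $C^{1,1}$ representative, is a fine substitute for the paper's averaged Taylor polynomials. However, the point-fitting step as you state it is false.

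\textbf{Point fitting.} For no $w$ do the values $\EUAF(wk)$, $k=1,\dots,N$, approximate arbitrary targets in $[0,1]$. The frequencies $1,2,\dots,N$ are rationally dependent, so $(kw \bmod 2)_k$ stays on a one-dimensional subtorus. Concretely, for $w>0$ one has $\EUAF(2w)=\EUAF(2\,\EUAF(w))$, so the target pair $(\xi_1,\xi_2)=(0,1)$ is never approached; $w\le0$ only reaches the monotone rational branch. The missing idea is to pass the index through the rational branch first. Since $\EUAF(x)+1=1/(1-x)$ for $x\le0$, a width-one layer maps $k\mapsto \tilde w/(2^{1/N}+1+k/N)$ exactly. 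These frequencies are rationally independent (irreducibility of $x^N-2$), and Kronecker density on $\mathbb T^N$ then gives the fit (Lemmas~\ref{lemma:dense} and~\ref{lemma:point-fit}). Citing \citep{shen22} does not repair the mechanism you actually wrote down.

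\textbf{Index extraction with $2^d$ grids.} Your gluing needs more from the index map than a triangle wave with affine pre- and post-processing can give. Suppose $\phi_0$ is continuous, $\tfrac1K$-periodic, and $\phi_0+\phi_0(\cdot-\tfrac1{2K})\equiv1$. Then $\{\phi_0\neq0\}$ and its half-period shift are open sets covering the connected circle $\mathbb R/K^{-1}\mathbb Z$. Hence $\{\phi_0\neq0\}$ has measure $>\tfrac1{2K}$ per cell, and the index must be exact on all of it. But a linear term plus triangle waves of a single period is flat on at most half of each period. For example, $K\,\EUAF(x)-\tfrac12\EUAF(2Kx)$ equals $k$ only on $[\tfrac kK,\tfrac{2k+1}{2K}]$ and is a ramp of slope $2K$ on the other half.

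You have two options. You could build a staircase that is flat on a larger fraction of each cell; this is possible with triangle waves of two different periods, but it must be constructed and counted. Or you could follow the paper: four shifts $\tau_j=j/(4K)$ per coordinate, with hat gates $\psi(t)=\EUAF(t+1-\EUAF(t+1))$. These satisfy $\sum_{j=1}^4\psi(x+j/2)=1$ and are supported exactly on the shifted active half-cells.

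\textbf{Architecture bounds.} The four-shift construction is where the $4^d$ in the width comes from. Your ``further factor $2^d$ for handling the index extraction robustly'' is fitted to the answer rather than derived. The bounds $4^d(5d^2+8d+3)$ and $2d+5$ are part of the statement, so they must actually be counted. Two facts simplify the count:
\begin{itemize}
\item $\EUAF$ squares exactly: $-4\,\EUAF\bigl(1-(\EUAF(x-1)+\EUAF(-x-1)+2)\bigr)=x^2$ on $[-1,1]$. So multiplication is exact on bounded boxes, and your ``main obstacle'' about $C^1$-accurate approximate products disappears. Your even second-difference multiplier would also work, since $\tilde m(0,y)=\partial_y\tilde m(0,y)=0$, but it is unnecessary.
\item You need a Sobolev extension of $f$ and a subcube such as $(0,9/10)^d$. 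This handles the shifted cells near the boundary and the positivity that the periodic branch requires.
\end{itemize}
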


The proof of Theorem~\ref{thm:global-w2} is provided in Section~\ref{sec:proof-main}. The network size appearing in Theorem~\ref{thm:global-w2} depends only on the input dimension $d$, which is typically prescribed and fixed; in particular, the architecture is independent of the target function and the approximation accuracy, and hence has fixed size. Moreover, Theorem~\ref{thm:global-w2} demonstrates that \EUAF networks can simultaneously approximate a function and its first-order derivatives within such a fixed-size architecture.
Finally, we remark that the exponential dependence of the network width on the dimension arises from the absence of any exploited superposition structure (in the spirit of the Kolmogorov-Arnold representation theorem) at this level of regularity. Developing new proof techniques to alleviate or remove this exponential dependence is an interesting direction for future research, but lies beyond the scope of the present paper.

\subsection{Construction of \DUAF{s}}\label{sec:duaf-construction}
\label{sec:def:DUAFs}
While $\EUAF$ supports simultaneous approximation of functions and their first-order derivatives, its intrinsic regularity limits its applicability to higher-order Sobolev spaces.
To overcome this limitation, we construct a family of smooth activation functions whose regularity can be prescribed explicitly.
The construction of \DUAF{s} separates the roles of transition, gating, and accumulation, allowing precise control of differentiability while preserving computability.

For each $n \in \mathbb{N} \cup \{\infty\}$, we define an increasing and continuous function $q_n \colon \mathbb{R} \to [0,1]$. For $u \notin (0,1)$, we set
\[
q_n(u):=
\begin{cases}
1, & u\ge 1,\\[0.1em]
0, & u \le 0.\\
\end{cases}
\]
While for $u \in (0,1)$, we define
\[
q_n(u):=
\begin{cases}
\frac{\int_0^{u} t^{\,n}(1-t)^{n}\,dt}
      {\int_0^{1} t^{\,n}(1-t)^{n}\,dt}, &   n\in\mathbb N,\\[0.6em]
\frac{e^{-1/u}}{e^{-1/u}+e^{1/(u-1)}}, &   n=\infty.\\
\end{cases}
\]

\begin{figure}[h]
  \centering
  \includegraphics[width=0.6\linewidth]{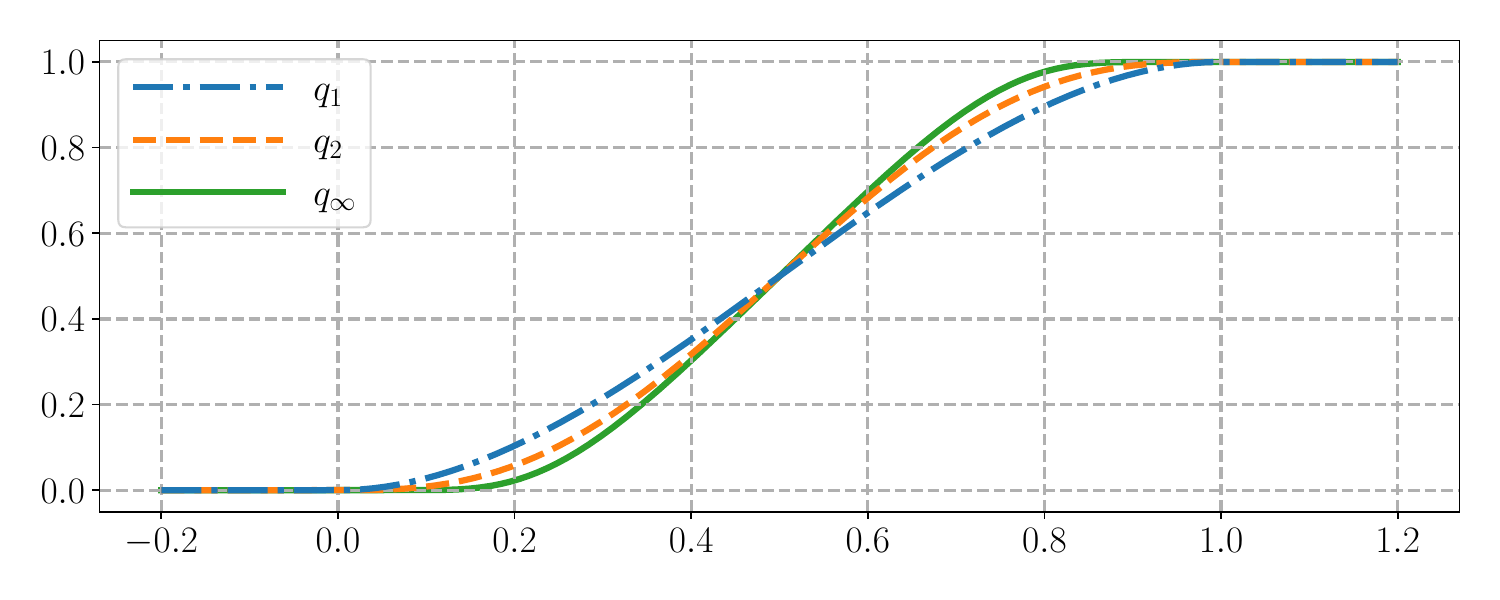}
  \caption{
  Comparison of $q_n$ on $[-0.2, 1.2]$ for $n=1,2, \infty$
  }
  \label{fig:q_n}
\end{figure}

The functions $q_n$ serve as $C^n$-regular interpolation profiles with vanishing derivatives up to order $n$ at their endpoints, and will be used to interpolate between distinct functional regimes in a $C^n$-controlled manner. An illustration of $q_n$ on the interval $[-0.2,1.2]$ is provided in Figure~\ref{fig:q_n}. 
We note that $q_n$ converges pointwise, as $n\to\infty$, to a discontinuous function which is locally constant away from $u=1/2$. In particular, $q_\infty$ does not coincide with this limit, but is instead introduced as a smooth interpolation profile.

Next, for given $n\in \mathbb N\cup\{\infty\}$, we present the detailed construction of $\DUAF_n$. The construction relies on two auxiliary functions $g_n$ and $h_n$, which happen to be the left half and right half of $\DUAF_n$.

\

\noindent\textbf{$C^n$--periodic gate $g_n$ and $\varrho_n$}

For  $n\in\mathbb N \cup\{\infty\}$, 
we define the $C^n$-periodic gate $g_n$ and $\varrho_n$ on single periods by
\begin{equation}\label{eq:def-qs}
    g_n(t)
  :=
  \begin{cases}
    q_n(2t), & t\in[0,\tfrac12],\\[0.2em]
    1 - q_n(2t-1), & t\in(\tfrac12,1],\\[0.2em]
    0, & t\in(1,2].
  \end{cases}
 \qquad
  \varrho_n(t)=
 \begin{cases}
 t, & t\in[0,\tfrac12],\\[0.4em]
 t-q_n(2t-1), & t\in(\tfrac12,1].
 \end{cases}
 \end{equation}
then extend $g_n$ $2$--periodically and $\varrho_n$ $1$--periodically to the entire $\mathbb R$.

The periodic gate $g_n$ acts as a localization mechanism, selectively activating specific regions while maintaining global $C^n$ regularity. Another important property is that $g_n^{(k)}(0)=0$ for $k=0,\ldots,n$. This endpoint vanishing property will allow us to smoothly glue the two components of $\DUAF_n$ at the origin while preserving $C^n$ regularity. The function $\varrho_n$ serves as a periodic coordinate correction, preserving linear behavior on the active region while introducing a smooth periodic modulation that is compatible with the $C^n$--structure (see Figure~\ref{fig:g-rho-h} for an illustration).

\begin{figure}[h]
  \centering
  \includegraphics[width=\linewidth]{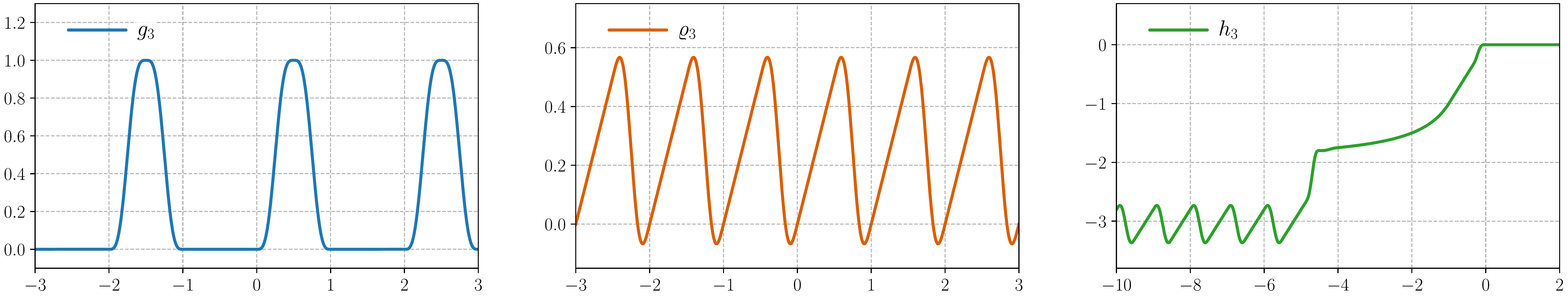}
  \caption{
  Illustration of $g_n$, $\varrho_n$ and $h_n$ for $n=3$
  }
  \label{fig:g-rho-h}
\end{figure}

\

\noindent\textbf{$C^n$--transition function $h_n$}\label{def:T}

For $n\in\mathbb N \cup\{\infty\}$, we define $h_n:\mathbb R\to\mathbb R$ by
\[
h_n(t):=
\begin{cases}
-9/5-\bigl(1-q_n(2t+10)\bigr)\bigl(1+\varrho_n(-t-5)\bigr),
& t\in(-\infty,-\frac92],\\[1em]
-9/5+q_n(2t+9)\!\left(-1/5-t^{-1}\right),
& t\in[-\frac92,-\frac{13}{11}],\\[1em]
-2-t^{-1}+q_n(\eta_n^{-1}(t+1+\eta_n))\left(t+2+t^{-1}\right),
& t\in\left(-\frac{13}{11},-1\right),\\[1em]
(1-q_n(11t/4+1))t
,
& t\in[-1,+\infty).
\end{cases}
\]
where $\eta_n:=\min\left\{\frac{2}{11},\frac{1}{8\|q_n'\|_{L^\infty(\mathbb R)}}\right\}$ (for example, $\eta_1=\frac{1}{12}$, $\eta_2=\frac{1}{15}$, $\eta_3=\frac{2}{35}$, and $\eta_{\infty}=\frac{1}{16}$).

An illustration of $h_3$ is given in Figure~\ref{fig:g-rho-h}. The function $h_n$ is globally $C^n$ on $\mathbb R$ (and $C^\infty$ for $n=\infty$). 
This follows from the endpoint flatness of the transition profile $q_n$, whose derivatives up to order $n$ vanish at $0$ and $1$. 
As a consequence, all derivatives up to order $n$ match at the junctions
\[
t=-\frac{9}{2},\qquad t=-\frac{13}{11},\qquad t=-1, \qquad t=0.
\]
The structure of $h_n$ consists of several explicit regimes. On $(-\infty, -5]$, the function reduces to a shifted periodic wave $-14/5-\varrho_n(-t-5)$. 
On the interval $\left[-4,-13/11\right]$, it coincides with the rational function $-2-t^{-1}$. On the interval $\left[-1,-4/11\right]$, it reduces to the identity function $t$, while on $\left[0,\infty\right)$ it becomes identically $0$. 
All remaining intervals are $C^n$ transition regions connecting these regimes.
Furthermore, $h_n$ is strictly increasing on $\left(-9/2,0\right)$ due to the fact that $h_n'(t)>0$ for all $t$ in this interval. 
At the junction points $t=-9/2, 0$, the derivatives of $h_n$ up to order $n$ vanish, namely
 $h_n^{(k)}(-9/2)=h_n^{(k)}(0)=0$ for $k=1,\dots,n$.
 
Taken together, the construction of $h_n$ yields a $C^n$-smooth interpolation between a periodic wave regime, a rational growth regime, and a linear regime, while preserving strict monotonicity on $\left(-9/2,0\right)$ and attaining a constant plateau for $t\ge0$.

\

\noindent\textbf{The definition of $\DUAF_n$}

The $n$-times continuously differentiable universal activation function $\DUAF_n:\mathbb R\to\mathbb R$ is defined by
  \[
  \DUAF_n(x):=
  \begin{cases}
    h_n(x), & \text{if}\ \ x\le 0,\\
    g_n(x), & \text{if}\ \ x>0.
  \end{cases}
  \]

The construction of $\DUAF_n$ can be viewed as a differentiable extension of the
$\EUAF$ framework. While $\EUAF$ is sufficient for first-order Sobolev approximation,
$\DUAF_n$ is designed to preserve the key super-expressive mechanism while providing
controlled smoothness up to order $n$. Thus, for a fixed differentiability level $n$,
one can approximate Sobolev functions of smoothness order up to $n+1$. The following
theorem makes this transition precise.

\subsection{Sobolev approximation using  $\DUAF_n$}

We first define two constants that will be used frequently through the rest of the paper. They serve as bounds on the network width and depth.
Let $d,s\in\mathbb N$, we define
\[
\begin{aligned}
N_{s,d}
&:= 4^d\left(\frac{d}{d+1}(s+5d+4)\,\binom{s+d-1}{d}+d+4\right),\\
L_{s,d}
&:= \max\{2s+2d-2, 2d+1, 7\}.
\end{aligned}
\]

While Theorem~\ref{thm:global-w2} shows that $\EUAF$--activated networks achieve
fixed-size approximation in the first-order Sobolev norm, the regularity of $\EUAF$
itself is limited by its piecewise-defined structure. In particular, $\EUAF$ is not
sufficiently smooth to support higher-order Sobolev approximation. To overcome this
limitation, we introduce a family of differentiable universal activation functions
$\DUAF_n$, $n\in\mathbb N\cup\{\infty\}$, whose construction was given in
Section~\ref{sec:duaf-construction}. These activations preserve the key expressive
features of $\EUAF$ while achieving higher smoothness through carefully designed
transition profiles. The resulting fixed-size Sobolev approximation theorem is as follows.

\begin{theorem}[Global $W^{s,\infty}$-approximation on a general cube]
\label{thm:global-ws}
Let $d,s\in\mathbb N$ and let $n\in\mathbb N\cup\{\infty\}$ satisfy $1\le s\le n+1$.
Suppose $f\in W^{s,\infty}((a,b)^d)$. Then, for any $\varepsilon>0$, there exists a
$\DUAF_n$--activated network $\Phi$ with width no greater than $N_{s,d}$ and depth no
greater than $L_{s,d}$ such that
\[
\|\Phi-f\|_{W^{s-1,\infty}((a,b)^d)}<\varepsilon.
\]
\end{theorem}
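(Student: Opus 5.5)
We plan to prove Theorem~\ref{thm:global-ws} in three stages: localize, do a Taylor expansion on each grid cell, and glue the pieces with a partition of unity on shifted grids. Throughout, the only super-expressive ingredient is the ability of a fixed-size $\DUAF_n$ network to approximate arbitrary \emph{constants}, i.e.\ arbitrary finite tables of Taylor coefficients, with arbitrary accuracy.

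\textbf{Reduction to the unit cube.} By affine rescaling $x\mapsto a+(b-a)x$ we may assume $(a,b)^d=(0,1)^d$. Rescaling multiplies the $W^{s-1,\infty}$ norms only by constants depending on $a,b,s$, and it is absorbed into the first layer, so the architecture is unchanged.

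\textbf{Local Taylor approximation.} Fix a grid of size $K$, with $K$ large depending on $\varepsilon$. On each cell $Q_\beta$ of the $4^d$ families of shifted grids we use the averaged Taylor polynomial $P_\beta$ of degree $s-1$. The Bramble--Hilbert lemma gives
\[
\|f-P_\beta\|_{W^{k,\infty}(Q_\beta)}\lesssim K^{-(s-k)}|f|_{W^{s,\infty}}, \qquad k\le s-1,
\]
so each local error in $W^{s-1,\infty}$ is $O(K^{-1})$.

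\textbf{Network realization of the local polynomials.} The network will compute $x\mapsto\sum_{\alpha}c_{\alpha}(\beta(x))\,(x-x_\beta)^\alpha$, where $|\alpha|\le s-1$. The coefficient maps $\beta\mapsto c_\alpha(\beta)$ are piecewise constant in $x$. Following \citep{shen22}, we realize them in two steps:
\begin{itemize}
\item a floor-like step function, built from the periodic parts $g_n,\varrho_n$, which is exactly constant on each cell's interior region and hence has zero derivatives there;
\item the bit-extraction and density property of the periodic wave $-14/5-\varrho_n(-t-5)$ on $(-\infty,-5]$, which approximates any finite table of values to arbitrary precision with fixed width.
\end{itemize}

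Monomials of degree at most $s-1$ are produced by products. We obtain them through the identity regime $t$ on $[-1,-4/11]$ and the rational regime $-2-t^{-1}$ on $[-4,-13/11]$. Combining reciprocals gives exact multiplication: $xy$ is recovered from compositions of $1/t$ via a polarization-type identity, which yields squares. Because these identities are exact on the relevant ranges, the derivatives of the monomials are reproduced exactly. This is what lets us control $W^{s-1,\infty}$ and not merely $L^\infty$. Counting the $\binom{s+d-1}{d}$ monomials and the multiplication chains of depth $O(s+d)$ should give the bounds $N_{s,d}$ and $L_{s,d}$.

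\textbf{Gluing.} Let $\{\psi_\gamma\}_{\gamma\in\{0,1,2,3\}^d}$ be a smooth partition of unity subordinate to the shifted grids. Each $\psi_\gamma$ is a product of one-dimensional periodic bumps built from $g_n$, rescaled by $K$. Each bump vanishes on a neighborhood of the cell boundaries, where the step function is not constant, so every cell boundary is handled by some other shift. We then set $\Phi=\sum_\gamma\psi_\gamma\Phi_\gamma$. By Leibniz,
\[
\partial^\alpha\bigl(\psi_\gamma(\Phi_\gamma-f)\bigr)
=\sum_{\mu\le\alpha}\binom{\alpha}{\mu}\partial^{\alpha-\mu}\psi_\gamma\,\partial^\mu(\Phi_\gamma-f),
\]
and these terms have sizes $K^{|\alpha-\mu|}\cdot K^{-(s-|\mu|)}=K^{|\alpha|-s}\le K^{-1}$.

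\textbf{Main obstacle.} Two errors need separate treatment. The first is the coefficient-approximation error $\delta$. After multiplying by $\partial\psi_\gamma=O(K^{s-1})$ it contributes $\delta K^{s-1}$, which is harmless because $\delta$ can be chosen after $K$ with the architecture fixed. The second, and the real difficulty, is making sure every derivative of the step-function and coefficient subnetworks is \emph{exactly} zero wherever $\psi_\gamma\ne0$. This requires the flat plateaus of $g_n$ and the vanishing derivatives $h_n^{(k)}(0)=0$, and it requires verifying that all intermediate values stay inside the regimes where $h_n$ is exactly $t$ or exactly $-2-t^{-1}$. Checking that these ranges are respected uniformly in $K$, using only fixed affine rescalings, is the step I expect to demand the most care. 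The regularity condition $s\le n+1$ enters only because $\Phi\in C^n$ and the $W^{s-1,\infty}$ norm needs $s-1\le n$ derivatives.
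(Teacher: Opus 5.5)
Your plan is correct and follows the paper's proof essentially step for step. It uses averaged Taylor polynomials with Bramble--Hilbert on the active half-cells, a $\varrho_n$-staircase plus the density/point-fitting lemma for the coefficient tables (the paper fits the table with $g_n$, but any nonconstant continuous periodic branch works), exact multiplication from the rational branch, $4^d$ shifted grids glued by $g_n$-gates with the same Leibniz bound, and a coefficient tolerance $K^{-s}$ fixed after $K$. Two routine details differ in the paper. First, it maps $(a,b)^d$ onto $(0,9/10)^d$ and applies a Sobolev extension, so that every shifted point $\bm x+\bm\tau_{\bm i}$ stays in $[0,1)^d$, where the staircase and lookup are exact. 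Second, it expands in global monomials $\bm x^{\bm\alpha}$ rather than your centered $(\bm x-\bm x_\beta)^{\bm\alpha}$; the stated depth bound $L_{s,d}$ is counted for that uncentered realization, so with centering you would need to recheck it.
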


The proof of Theorem~\ref{thm:global-ws} is given in Section~\ref{sec:proof-main}.
In particular, taking $n=\infty$ gives a $C^\infty$ activation for which the smoothness
restriction $1\le s\le n+1$ becomes void. This yields the following important special
case.

\begin{figure}[t]
    \centering
    \includegraphics[width=\textwidth]{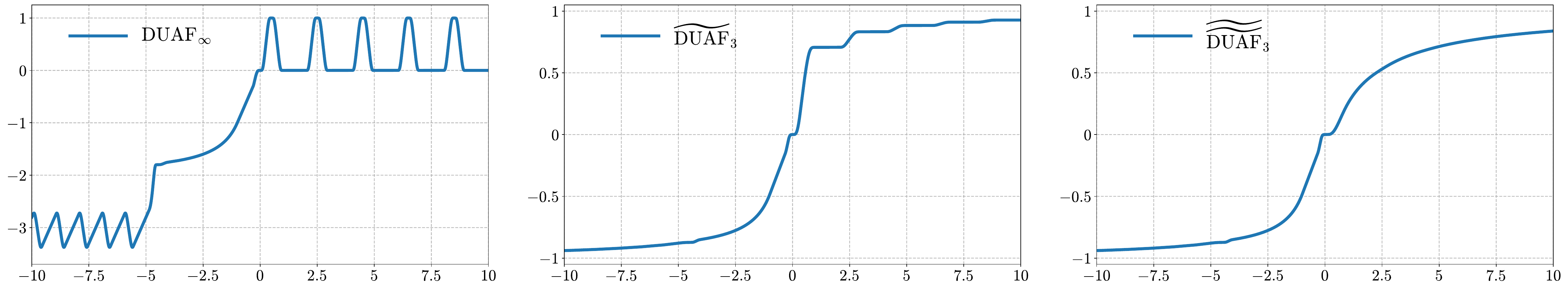}
    \caption{Illustration of $\DUAF_{\infty}$, $\widetilde{\DUAF}_3$, and $\widetilde{\widetilde{\DUAF}}_3$}
    \label{fig:duaf-and-sigmoidal-duafs}
\end{figure}

\begin{corollary}[Smooth Sobolev approximation]
\label{cor:global-winfty}
Let $d,s\in\mathbb N$ and let $f\in W^{s,\infty}((a,b)^d)$.
Then, for any $\varepsilon>0$, there exists a $\DUAF_\infty$--activated network $\Phi$
with width no greater than $N_{s,d}$ and depth no greater than $L_{s,d}$ such that
\[
\|\Phi-f\|_{W^{s-1,\infty}((a,b)^d)}<\varepsilon.
\]
\end{corollary}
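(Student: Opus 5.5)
This corollary is the special case $n=\infty$ of Theorem~\ref{thm:global-ws}, so the plan is simply to check that the hypotheses of that theorem are met and read off the conclusion.

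Fix $d,s\in\mathbb N$, $f\in W^{s,\infty}((a,b)^d)$ and $\varepsilon>0$, and set $n=\infty\in\mathbb N\cup\{\infty\}$. The only constraint in Theorem~\ref{thm:global-ws} linking $s$ and $n$ is $1\le s\le n+1$. With the convention $\infty+1=\infty$, this reduces to $s\ge 1$, which holds because $s\in\mathbb N$. The activation $\DUAF_\infty$ is exactly the member of the family built in Section~\ref{sec:duaf-construction} from $q_\infty$, $g_\infty$, $\varrho_\infty$ and $h_\infty$. Theorem~\ref{thm:global-ws} therefore applies and gives a $\DUAF_\infty$--activated network $\Phi$ with width at most $N_{s,d}$, depth at most $L_{s,d}$, and $\|\Phi-f\|_{W^{s-1,\infty}((a,b)^d)}<\varepsilon$.

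The constants $N_{s,d}$ and $L_{s,d}$ depend only on $s$ and $d$, not on $n$, so the bounds carry over unchanged. No step here is a real obstacle. The substantive point is that the proof of Theorem~\ref{thm:global-ws} must genuinely cover $n=\infty$: it should use only $C^{s-1}$ (indeed $C^{n}$ with $n\ge s-1$) regularity of $\DUAF_n$, together with the structural identities of $h_n$ and $g_n$. Those identities are the plateau, the rational regime $-2-t^{-1}$, the identity regime, and the periodic wave. They hold verbatim for $q_\infty$, with $\eta_\infty=\frac1{16}$ finite. For $\DUAF_\infty$ that regularity holds for every $s$ at once, since $q_\infty$ is $C^\infty$ with all derivatives vanishing at $0$ and $1$.
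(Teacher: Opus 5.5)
Your proposal is correct and matches the paper. The corollary is exactly Theorem~\ref{thm:global-ws} with $n=\infty$: the constraint $1\le s\le n+1$ becomes vacuous, and the bounds $N_{s,d}$, $L_{s,d}$ do not depend on $n$.
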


Here $\DUAF_\infty$ is elementary, explicitly computable, and $C^\infty$-smooth (see
Figure~\ref{fig:duaf-and-sigmoidal-duafs} for an illustration). Therefore, the
fixed-size Sobolev approximation phenomenon is not tied to non-differentiable or
piecewise-linear activations, but persists even in the smooth setting.

\smallskip
{\noindent\itshape
\textbf{Remark.}
Corollary~\ref{cor:global-winfty} is stated in the $W^{s-1,\infty}$-norm. By monotonicity
of Sobolev norms, the same conclusion immediately holds in $W^{r,\infty}((a,b)^d)$
for every integer $0\le r\le s-1$. We do not use, and therefore do not state, any
noninteger interpolation consequence here.
\par}


\subsection{Sigmoidal \DUAF{s} and Sobolev approximation}
\label{sec:sigmoidal-duafs}

The preceding results show that fixed-size Sobolev approximation can be achieved by
the $C^n$--activations $\DUAF_n$ and, in particular, by the smooth activation
$\DUAF_\infty$. These activations, however, are not sigmoidal. Indeed, the periodic
component in the construction prevents the existence of limits at infinity. We now
show that the same approximation mechanism can be adapted to genuinely sigmoidal
activations.

For \(n\in\mathbb N\), we construct a sigmoidal activation from \(\DUAF_n\) by an
integral transformation. Define the normalization constants
\[
d_n:=
-\left(
-\frac95+\int_{-\infty}^{-\frac92}
\frac{\DUAF_n(t)+\frac95}{t^2}\,dt
\right)^{-1}>0,
\qquad
c_n:=
\left(
\int_{0}^{\infty}
\frac{\DUAF_n(t)}{(1+t)^2}\,dt
\right)^{-1}>0.
\]
We then define \(\widetilde{\DUAF}_n:\mathbb R\to\mathbb R\) by
\[
\widetilde{\DUAF}_n(x):=
\begin{cases}
d_n\left(
-\dfrac95+\displaystyle\int_x^{-\frac92}
\frac{\DUAF_n(t)+\frac95}{t^2}\,dt
\right),
& x\in(-\infty,-\tfrac92],
\\[0.8em]
d_n\,\DUAF_n(x),
& x\in[-\tfrac92,0],
\\[0.8em]
\displaystyle\int_0^x \frac{c_n\,\DUAF_n(t)}{(1+t)^2}\,dt,
& x\in(0,\infty).
\end{cases}
\]
This transformation preserves the local expressive structure of \(\DUAF_n\) on the
region used in the approximation construction, while forcing finite limits at
\(\pm\infty\) (see
Figure~\ref{fig:duaf-and-sigmoidal-duafs} for an illustration). More precisely, one verifies that
\[
\widetilde{\DUAF}_n\in C^n(\mathbb R),\qquad
\widetilde{\DUAF}_n'(x)\ge 0 \quad \text{for all } x\in\mathbb R,
\]
and
\[
\lim_{x\to-\infty}\widetilde{\DUAF}_n(x)=-1,
\qquad
\lim_{x\to+\infty}\widetilde{\DUAF}_n(x)=1.
\]
Thus \(\widetilde{\DUAF}_n\) is a bounded, monotone nondecreasing sigmoidal activation.

The following theorem shows that the fixed-size Sobolev approximation property remains
valid for this sigmoidal activation, at the cost of a controlled increase in width and
depth depending only on the regularity order \(n\).

\begin{theorem}\label{thm:sigmoidal} Let $d, n, s\in\mathbb{N}$, $1\le s\le n$, $f\in W^{s,\infty}((a,b)^d)$. Then, for any $\varepsilon>0$, there exists an $\widetilde{\DUAF}_n$-activated network $\tilde\Phi$ with width no greater than $S_nN_{s,d}$, where $S_n:=\max\{4n^2+19,26\}$, and depth no greater than $(3n^2+5)L_{s,d}$ such that 
\[ \|\tilde\Phi-f\|_{W^{s-1,\infty}((a,b)^d)}<\varepsilon. 
\] 
\end{theorem}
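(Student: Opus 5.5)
Our approach is to reduce the statement to Theorem~\ref{thm:global-ws} by a neuron-replacement argument. Since $s\le n$ and, by Theorem~\ref{thm:global-ws}, $\DUAF_n$ networks work for $s\le n+1$, we first fix a $\DUAF_n$--activated network $\Phi$ of width at most $N_{s,d}$ and depth at most $L_{s,d}$ with $\|\Phi-f\|_{W^{s-1,\infty}}<\varepsilon/2$. Each neuron then applies $\DUAF_n$ to an affine function $\ell$ of the previous layer, and on the bounded domain all pre-activations lie in a compact interval $[-M,M]$. It therefore suffices to build a fixed-size $\widetilde{\DUAF}_n$ subnetwork $\Psi_\delta$, of width at most $S_n$ and depth at most $3n^2+5$, such that $\|\Psi_\delta-\DUAF_n\|_{C^{n}([-M,M])}\le\delta$ with $\delta$ arbitrary. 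The argument then closes by composition: by the multivariate chain rule (Faà di Bruno), if every activation is replaced by a $C^{s-1}$-close surrogate on a slightly enlarged compact set, the resulting network is $W^{s-1,\infty}$-close. This step is a standard induction over layers, since derivatives up to order $s-1\le n-1$ of compositions depend continuously in sup norm on the $C^{s-1}$ data of the factors on compacts.

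The local surrogate is built piece by piece. On $[-\tfrac92,0]$ we have $\DUAF_n=\widetilde{\DUAF}_n/d_n$ exactly. On $(0,\infty)$, $\widetilde{\DUAF}_n'(x)=c_n\DUAF_n(x)/(1+x)^2$, and on $(-\infty,-\tfrac92]$, $\widetilde{\DUAF}_n'(x)=-d_n(\DUAF_n(x)+\tfrac95)/x^2$. So $\DUAF_n$ is recovered from derivatives of $\widetilde{\DUAF}_n$ multiplied by the smooth factors $(1+x)^2$ and $x^2$. The plan is as follows.
\begin{enumerate}
\item Approximate derivatives by finite differences, $\widetilde{\DUAF}_n'(x)\approx h^{-1}(\widetilde{\DUAF}_n(x+h)-\widetilde{\DUAF}_n(x))$. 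The error in $C^{n-1}$ is $O(h)$ provided $\widetilde{\DUAF}_n\in C^{n}$ with a uniformly continuous $n$-th derivative on compacts. We lose one derivative here, which is exactly why $s\le n$ rather than $n+1$. The $W^{s-1}$ error only requires $C^{s-1}\subset C^{n-1}$ control, so this works.
\item Realise multiplication by polynomials, and the product with gating cutoffs, using $\widetilde{\DUAF}_n$ itself. Squaring is approximated by a second-order central difference of $\widetilde{\DUAF}_n$ around a point where $\widetilde{\DUAF}_n''\neq0$, for instance inside $(-\tfrac92,0)$ where it equals a rescaled $h_n$, which is strictly increasing and non-affine. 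Products then follow from $xy=\tfrac14((x+y)^2-(x-y)^2)$. Each such gadget has $O(1)$ width and error $O(h)$ in $C^{n}$ on compacts.
\item Glue the three regimes $x<-\tfrac92$, $[-\tfrac92,0]$ and $x>0$ with $C^n$ partition-of-unity cutoffs $\chi$. These cutoffs are themselves approximated by $\widetilde{\DUAF}_n$ networks: $\widetilde{\DUAF}_n(Kx)$ is a smoothed step, and since $\DUAF_n$ vanishes to order $n$ at $0$ and $h_n$ has flat derivatives at $-\tfrac92$, the overlap errors can be controlled. These are the cutoff estimates deferred to Section~\ref{sec:proof-chi-est}.
\end{enumerate}

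Counting neurons, the polynomial multipliers of degree $2$ and the cutoff products require iterated multiplications. Carrying derivatives through $n$ orders costs $O(n^2)$ width and depth, which gives the constants $S_n=\max\{4n^2+19,26\}$ and $3n^2+5$. Each original neuron is replaced by such a block, so the width multiplies by $S_n$ and the depth by $3n^2+5$. Identity maps needed to pass values through the added depth are also emulated by first-order differences of $\widetilde{\DUAF}_n$ at a point where its derivative is positive.

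The main obstacle is step 3 together with the cutoff estimates: keeping the $C^{s-1}$ error uniformly small near the junctions $x=0$ and $x=-\tfrac92$. There, the derivative-recovery formulas and the identity on the middle piece must be blended without the cutoff derivatives, which scale like $K^k$, amplifying errors. My first attempt would be to exploit the order-$n$ flatness of $\DUAF_n$ at $0$ and of $\DUAF_n+\tfrac95$ at $-\tfrac92$. Then each formula, extended a little past its junction, still approximates $\DUAF_n$ there up to $o(1)$ in $C^{n-1}$, so the cutoff scale can be taken fixed (say width $\tfrac14$) rather than sharpening. The error then comes only from finite-difference step sizes $h\to0$, which are free parameters and leave the architecture unchanged.
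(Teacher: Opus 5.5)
Your outer reduction matches the paper's: Theorem~\ref{thm:global-ws}, the layer-replacement Lemma~\ref{lem:old-to-new-1}, and a fixed-size simulation of $\DUAF_n$ as in Theorem~\ref{thm:sigmoidal-loc}, with the tails recovered from weighted finite differences and the middle piece exact. However, your step~3 has a genuine gap, and the fix you propose does not work.

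The three recovery formulas are valid only on the closed pieces $(-\infty,-\tfrac92]$, $[-\tfrac92,0]$ and $[0,\infty)$. Past a junction, each one differs from $\DUAF_n$ by a fixed function that does not depend on $h$.
\begin{itemize}
\item For $0<x\le\tfrac12$ one has $(d_n^{-1}\widetilde{\DUAF}_n)'=d_n^{-1}c_nq_n(2x)/(1+x)^2$, while $\DUAF_n'=2q_n'(2x)$. These vanish at $0$ to different orders, so they cannot agree on any interval $(0,a]$.
\item For $x<0$, your right-tail formula tends to $c_n^{-1}d_n(1+x)^2h_n'(x)$, which is not $h_n(x)$.
\end{itemize}
Flatness makes these discrepancies small only on neighborhoods that shrink to the junction. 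A continuous partition of unity cannot switch exactly at the junction. So with a fixed transition width (your ``width $\tfrac14$''), some wrong formula carries weight of order one on a fixed set, and the error there is a fixed positive quantity that does not tend to $0$ as $h\to0$.

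The cutoffs $\widetilde{\DUAF}_n(Kx)$ make this worse, because they have only algebraic tails: $1-\widetilde{\DUAF}_n(t)\le c_n/(1+t)$ and $|\widetilde{\DUAF}_n^{(j)}(t)|\lesssim(1+t)^{-2}$.
\begin{itemize}
\item At fixed $K$, the wrong formula leaks into all of $[-M,M]$ with weight about $1/(K|x|)$.
\item Even if $K\to\infty$, the $m$-th derivative of the cutoff at a fixed distance from the junction is of order $K^{m-2}|x|^{-2}$, which does not vanish for $m\ge2$.
\end{itemize}

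The missing idea is a transition that sharpens, whose derivative growth is paid for by flatness, and whose tails are suppressed to high order. The paper takes $g_\eta=G\circ u\circ t_\eta$, where
\[
G(y)=\frac{y^{n^2}}{y^{n^2}+(1-y)^{n^2}},\qquad u=\frac{1-\widetilde{\DUAF}_n}{2},\qquad t_\eta(x)=\eta^{-\alpha}\Bigl(\frac{\eta}{2}-x\Bigr),\qquad \alpha=1+\frac2n .
\]
Two estimates then close the argument.
\begin{itemize}
\item Away from the junctions, the $n^2$-fold flatness of $G$ turns the $O(\eta^{\alpha-1})$ tail into $O(\eta^{2n})$. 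This beats the derivative growth $\eta^{-\alpha m}$ for $m\le n-1$ (Lemma~\ref{lem:chi-est}).
\item On the layers $[0,\eta]$ and $[-\tfrac92-\eta,-\tfrac92]$, the bound $|(d_n^{-1}\widetilde{\DUAF}_n-\DUAF_n)^{(i)}|=O(\eta^{n+1-i})$ against $|\chi_{2,\eta}^{(m-i)}|=O(\eta^{-\alpha(m-i)})$ gives $O(\eta^{n+1-\alpha(n-1)})=O(\eta^{2/n})$.
\end{itemize}
Realizing $G$ requires the monomials $x^{n^2}$ (built from the exact multiplier on the rational branch, so your second-difference squaring is unnecessary) and the exact reciprocal of Lemma~\ref{lem:reciprocal-dyadic}. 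This is what produces $S_n=\max\{4n^2+19,26\}$ and depth $3n^2+5$. Your neuron count simply asserts these constants without deriving them.

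A simpler repair also exists. Since $\widetilde{\DUAF}_n$ is constant on $[1,2]$ and on $[3,4]$, the map $x\mapsto\widetilde{\DUAF}_n(\tfrac52+\tfrac32\widetilde{\DUAF}_n(\lambda x))$ is an exact $C^n$ step with transition width $O(1/\lambda)$. This removes the tail problem, but the transition must still shrink and be paid for by flatness.

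Two smaller corrections:
\begin{itemize}
\item You only need, and the paper only proves, $W^{n-1,\infty}$-closeness of the surrogate, not $C^n$-closeness.
\item The finite-difference step loses no derivative. Since $\widetilde{\DUAF}_n'=c_n\DUAF_n/(1+x)^2$, the approximants converge in $W^{n,\infty}([0,M])$. The restriction $s\le n$ comes instead from two places: the Lipschitz composition estimate in Lemma~\ref{lem:old-to-new-1}, which needs $\DUAF_n\in W^{s,\infty}$, and the fact that the gluing only yields $W^{n-1,\infty}$ control.
\end{itemize}
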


The proof of Theorem~\ref{thm:sigmoidal} is given in Section~\ref{D}. The argument
constructs transition networks that emulate the local mechanisms of the original
\(\DUAF_n\)-activated architecture, together with Sobolev error estimates that control
the loss caused by replacing \(\DUAF_n\) with its sigmoidal modification.

We also introduce a further sigmoidal refinement. Define
\(\widetilde{\widetilde{\DUAF}}_n:\mathbb R\to\mathbb R\) by
\[
\widetilde{\widetilde{\DUAF}}_n(x):=
\begin{cases}
\widetilde{\DUAF}_n(x),
& x\in(-\infty,0],
\\[0.8em]
\displaystyle\int_0^x \frac{c_n'\,\widetilde{\DUAF}_n(t)}{(1+t)^2}\,dt,
& x\in(0,\infty),
\end{cases}
\qquad
c_n':=
\left(
\int_0^\infty
\frac{\widetilde{\DUAF}_n(t)}{(1+t)^2}\,dt
\right)^{-1}.
\]
Then
\[
\widetilde{\widetilde{\DUAF}}_n\in C^n(\mathbb R),
\qquad
\lim_{x\to-\infty}\widetilde{\widetilde{\DUAF}}_n(x)=-1,
\qquad
\lim_{x\to+\infty}\widetilde{\widetilde{\DUAF}}_n(x)=1.
\]
Moreover, this refined activation is strictly increasing, with derivative vanishing
only at the transition points \(x=-9/2\) and \(x=0\) (see
Figure~\ref{fig:duaf-and-sigmoidal-duafs} for an illustration). An analogous fixed-size
Sobolev approximation result holds for
\(\widetilde{\widetilde{\DUAF}}_n\), with modified but explicit width and depth
bounds.

\subsection{Reduced complexity via structured targets}\label{sec:QKST:main}

Classical super-expressive constructions often start from one-dimensional approximation
and then use the Kolmogorov--Arnold Superposition Theorem to pass to higher dimensions
\citep{kolmogorov57}. This strategy is effective for uniform approximation, but it is
poorly suited to Sobolev approximation. The reason is that Kolmogorov-type
superpositions generally rely on highly irregular inner and outer functions, whose
singularities cancel only after composition. Thus, one should not expect arbitrary
Sobolev functions to admit superposition representations whose components have
comparable Sobolev regularity.

Moreover, even if one only asks for approximation by regular superpositions, there is
no general guarantee that a linear number of components suffices for arbitrary
Sobolev targets. The derivative interactions in high dimensions are typically too
rich to be captured by such a restricted structure. For this reason, we do not pursue
regular superposition approximations for all Sobolev functions. Instead, we isolate a
restricted class of targets, called the Kolmogorov Approximation Class, for which such
regular superposition approximations are available. For this class,
$\DUAF_{\infty}$--activated networks admit substantially smaller architectures.

\subsubsection*{The class $K_{d,Q}^s(\Omega)$}\label{def:qkst}
Let $d,s,Q\in\mathbb N$ and $\Omega\subset\mathbb R^{d}$ be an open region. For each $N\in\mathbb N$, let
$K_{d,Q,N}^s(\Omega)$ be the collection of all functions
$f\in W^{s,\infty}(\Omega)$ for which there exist
\[
    g_q,h_{q,p}\in W^{s,\infty}_{\mathrm{loc}}(\mathbb R),
    \qquad 
    q=1,\ldots,Q,\qquad p=1,\ldots,d,
\]
such that
\[
    \left\|
        f-
        \sum_{q=1}^Q 
        g_q\!\left(
            \sum_{p=1}^d h_{q,p}(x_p)
        \right)
    \right\|_{W^{s-1,\infty}(\Omega)}
    <\frac{1}{N}.
\]
We then define
\[
    K_{d,Q}^s(\Omega)
    :=
    \bigcap_{N=1}^{\infty}
    K_{d,Q,N}^s(\Omega).
\]

\subsubsection*{A PDE source of $K_{d,Q}^s(\Omega)$}
The class $K_{d,Q}^s(\Omega)$ is not merely an abstract superposition class;
it also contains natural solution families arising from separable linear
PDEs. In particular, finite-spectral solutions of classical
Sturm--Liouville-type boundary value problems provide concrete examples of
functions in $K_{d,Q}^s(\Omega)$ with finite $Q$.

Let $\Omega=I_1\times\cdots\times I_d$, $I_p=(a_p,b_p)$, and let $L_p$ be the
Dirichlet Sturm--Liouville operator
\[
    L_p\phi=-\frac{d}{dx_p}\!\left(A_p(x_p)\frac{d\phi}{dx_p}\right)
    +V_p(x_p)\phi,\qquad \phi(a_p)=\phi(b_p)=0.
\]
Assume that $A_p$ is uniformly positive and that the coefficients are regular
enough (for example, smooth) so that the eigenfunctions satisfy $\phi_{p,k}\in W^{s,\infty}(I_p)$.
Let $L_p\phi_{p,k}=\lambda_{p,k}\phi_{p,k}$ and set
$\Phi_{\bm{k}}(\bm{x}):=\prod_{p=1}^d\phi_{p,k_p}(x_p)$. For a finite
$\Lambda\subset\mathbb N^d$, define
$E_\Lambda:=\operatorname{span}\{\Phi_{\bm{k}}:\bm{k}\in\Lambda\}$.

Given $P\in\mathbb R[z_1,\ldots,z_d]$, let $\mathcal L:=P(L_1,\ldots,L_d)$.
Then
$\mathcal L\Phi_{\bm{k}}
=P(\lambda_{1,k_1},\ldots,\lambda_{d,k_d})\Phi_{\bm{k}}$.
Hence, if $\mu_{\bm{k}}:=P(\lambda_{1,k_1},\ldots,\lambda_{d,k_d})\neq0$
for all $\bm{k}\in\Lambda$, then $\mathcal L:E_\Lambda\to E_\Lambda$ is
invertible. Thus every $f\in E_\Lambda$ gives a unique finite-spectral
solution $u\in E_\Lambda$ of $\mathcal L u=f$.
We claim that $E_\Lambda\subset K_{d,2^d|\Lambda|}^s(\Omega)$. Indeed, for
$v\in E_\Lambda$, write
$v(\bm{x})=\sum_{\bm{k}\in\Lambda}c_{\bm{k}}\prod_{p=1}^d\phi_{p,k_p}(x_p)$.
By the polarization identity, each summand satisfies
\[
    c_{\bm{k}}\prod_{p=1}^d\phi_{p,k_p}(x_p)
    =
    \sum_{\varepsilon\in\{-1,1\}^d}
    \frac{c_{\bm{k}}}{2^d d!}
    \left(\prod_{p=1}^d\varepsilon_p\right)
    \left(\sum_{p=1}^d\varepsilon_p\phi_{p,k_p}(x_p)\right)^d.
\]
Equivalently, this is a sum of terms
$g_{\bm{k},\varepsilon}(\sum_{p=1}^d h_{\bm{k},\varepsilon,p}(x_p))$ with
$g_{\bm{k},\varepsilon}(t)=\frac{c_{\bm{k}}}{2^d d!}
(\prod_{p=1}^d\varepsilon_p)t^d$ and
$h_{\bm{k},\varepsilon,p}(t)=\varepsilon_p\phi_{p,k_p}(t)$. Since
$\phi_{p,k_p}\in W^{s,\infty}(I_p)$, these inner functions may be extended
to $W^{s,\infty}(\mathbb R)$, while the outer functions are polynomials.
Thus $v$ has an exact superposition representation with at most
$2^d|\Lambda|$ terms, and hence $v\in K_{d,2^d|\Lambda|}^s(\Omega)$.
Consequently, every finite-spectral solution $u\in E_\Lambda$ of
$\mathcal L u=f$ belongs to $K_{d,2^d|\Lambda|}^s(\Omega)$.

For instance, take $\Omega=(0,1)^d$ and $L_p=-d^2/dx_p^2$ with Dirichlet
boundary conditions. Then $\phi_{p,k}(x_p)=\sin(k\pi x_p)$ and
$\lambda_{p,k}=k^2\pi^2$. Choosing
$P(z_1,\ldots,z_d)=z_1+\cdots+z_d$ gives the classical Dirichlet Poisson
problem
\[
    -\Delta u=f
    \quad\text{in }(0,1)^d,
    \qquad
    u=0
    \quad\text{on }\partial(0,1)^d.
\]
If, for instance, $d=2$ and
\[
    f(x,y)=b_1\sin(\pi x)\sin(2\pi y)+b_2\sin(3\pi x)\sin(\pi y),
\]
then
\[
\begin{aligned}
u(x,y)
=
\sum_{\varepsilon=\pm1}
\frac{\varepsilon b_1}{20\pi^2}
\bigl(\sin(\pi x)+\varepsilon\sin(2\pi y)\bigr)^2
+
\sum_{\varepsilon=\pm1}
\frac{\varepsilon b_2}{40\pi^2}
\bigl(\sin(3\pi x)+\varepsilon\sin(\pi y)\bigr)^2.
\end{aligned}
\]
Thus $u\in K_{2,4}^s((0,1)^2)$.

The preceding discussion shows that $K_{d,Q}^s(\Omega)$ contains natural
finite-spectral solution families of separable linear PDEs. For this structured
class, one obtains a substantially sharper fixed-size Sobolev approximation
result. In contrast to general fixed-size constructions whose width may grow
rapidly with the dimension, the width below scales only linearly with $dQ$.
Moreover, the depth is independent of the dimension and depends only linearly
on the smoothness order $s$. 

\begin{theorem}
\label{thm:QKST}
Let $d, s\in\mathbb{N}$, and let $f\in K_{d,Q}^s((a,b)^d)$. 
Then, for any $\varepsilon>0$, there exists a $\DUAF_\infty$-activated network $\Phi$ 
with width no greater than $2dQ(s^2+9s+10)$ and depth no greater than $\max\{4s, 14\}$ such that
\[
\|\Phi-f\|_{W^{s-1,\infty}((a,b)^d)}<\varepsilon.
\]    
\end{theorem}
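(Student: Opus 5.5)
The plan is to reduce Theorem~\ref{thm:QKST} to one-dimensional approximation problems, followed by a composition and stability argument in the $W^{s-1,\infty}$-norm. Fix $\varepsilon>0$. By the definition of $K_{d,Q}^s((a,b)^d)$, choose $N$ with $1/N<\varepsilon/2$ and functions $g_q,h_{q,p}\in W^{s,\infty}_{\mathrm{loc}}(\mathbb R)$ such that
\[
F(\bm x):=\sum_{q=1}^Q g_q\Bigl(\sum_{p=1}^d h_{q,p}(x_p)\Bigr)
\]
satisfies $\|f-F\|_{W^{s-1,\infty}}<\varepsilon/2$. It then suffices to approximate $F$ within $\varepsilon/2$. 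Set $z_q(\bm x):=\sum_p h_{q,p}(x_p)$. This map takes values in a compact interval $J_q$, because each $h_{q,p}$ is continuous on $[a,b]$. Enlarge $J_q$ to an open interval $\widetilde J_q$ of margin $1$.

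The key step is a one-dimensional version of Theorem~\ref{thm:global-ws} (case $d=1$, $n=\infty$): every $u\in W^{s,\infty}$ on an interval can be approximated in $W^{s-1,\infty}$, to any accuracy $\delta$, by a $\DUAF_\infty$ network of width $O(s^2)$ and depth $O(s)$. The bound $N_{s,1}=4\bigl(\tfrac12(s+9)s+5\bigr)=2(s^2+9s+10)$ matches the factor in the claimed width exactly. I would therefore use Theorem~\ref{thm:global-ws} with $d=1$ directly. This gives width $N_{s,1}$ and depth $L_{s,1}=\max\{2s,7\}$.

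I build the network as follows. The inner networks $\phi_{q,p}\approx h_{q,p}$ act on $(a,b)$, each of width $N_{s,1}$. For each $q$, the $d$ inner networks run in parallel, for a total width $dQN_{s,1}=2dQ(s^2+9s+10)$. Their outputs are summed by an affine map into $\widetilde z_q$. The outer networks $\psi_q\approx g_q$ act on $\widetilde J_q$ and are stacked on top. Their total width is $QN_{s,1}$, which is at most the inner width. The total depth is at most $2\max\{2s,7\}=\max\{4s,14\}$, as claimed. Identity channels are not needed, since the sum is affine and can be absorbed into the first layer of $\psi_q$.

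The main obstacle is the error estimate for the composition in $W^{s-1,\infty}$. Write
\[
g_q\circ z_q-\psi_q\circ\widetilde z_q=(g_q-\psi_q)\circ\widetilde z_q+(g_q\circ z_q-g_q\circ \widetilde z_q).
\]
I expand derivatives of order $k\le s-1$ with the Faà di Bruno formula.
\begin{itemize}
\item For the first term, the derivatives of $g_q-\psi_q$ up to order $s-1$ are small on $\widetilde J_q$. They are multiplied by products of derivatives of $\widetilde z_q$ up to order $s-1$. Those factors are uniformly bounded once $\|\phi_{q,p}-h_{q,p}\|_{W^{s-1,\infty}}\le 1$, so this term is controlled. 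The first term also requires $\widetilde z_q\in \widetilde J_q$, which holds as soon as the inner error is below $1/d$.
\item For the second term, the factor $g_q^{(j)}\circ z_q-g_q^{(j)}\circ\widetilde z_q$ with $j\le s-1$ is bounded by $\|g_q^{(j+1)}\|_{L^\infty(\widetilde J_q)}\,|z_q-\widetilde z_q|$. This uses that $g_q\in W^{s,\infty}$, so $g_q^{(s-1)}$ is Lipschitz. The differences of products of derivatives of $z_q$ and $\widetilde z_q$ are bounded by the inner $W^{s-1,\infty}$ error.
\end{itemize}
This is precisely why the class requires $W^{s,\infty}$ outer functions while the error is measured only in $W^{s-1,\infty}$. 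The order of choices is: first fix $\psi_q$ with error $\delta_{\mathrm{out}}$, then choose the inner accuracy $\delta_{\mathrm{in}}$ small relative to constants depending only on $g_q$ and $h_{q,p}$. This yields total error below $\varepsilon/2$.

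A subtle point is that $\psi_q$ has only finite smoothness constants. The bound uses only the regularity of $g_q$, not of $\psi_q$, so I would arrange the splitting as above and never difference $\psi_q$ itself.
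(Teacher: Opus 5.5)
Your proposal is correct and follows the paper's proof essentially step for step. You use the same reduction to a superposition approximant and the one-dimensional case of Theorem~\ref{thm:global-ws}, which gives width $N_{s,1}=2(s^2+9s+10)$ and depth $\max\{2s,7\}$; the same splitting $(g_q-\psi_q)\circ\widetilde z_q+(g_q\circ z_q-g_q\circ\widetilde z_q)$, controlled by Fa\`a di Bruno with $g_q\in W^{s,\infty}$ supplying the Lipschitz bound; and the same parallel-then-compose architecture count. The only cosmetic differences are that you use per-channel intervals $\widetilde J_q$ and two separate tolerances, whereas the paper uses one common interval $I$ and a single $\delta$.
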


The proof of Theorem~\ref{thm:QKST} can be found in Section~\ref{sec:proof-qkst}. Consequently, this theorem also provides an efficient approximation guarantee for a nontrivial class of PDE solutions.

\section{Preliminaries and proof roadmap}

\subsection{Basic notations}
\label{sec:notation}

Below is a summary of the fundamental notations used throughout this paper.

\begin{itemize}
    \item The difference between two sets \( A \) and \( B \) is denoted by \( A \backslash B \coloneqq \{ x : x \in A, \ x \notin B \} \).
    
    \item The symbols \( \mathbb{N} \), \( \mathbb{Z} \), \( \mathbb{Q} \), and \( \mathbb{R} \) represent the sets of natural numbers, integers, rational numbers, and real numbers, respectively. We also denote $\mathbb N_{0}=\mathbb N\cup \{0\}$.
    
    \item The floor and ceiling functions of a real number \( x \) are given by
    \( \lfloor x \rfloor = \max \{ n : n \le x, \ n \in \mathbb{Z} \} \) and \( \lceil x \rceil = \min \{ n : n \ge x, \ n \in \mathbb{Z} \} \).
    \item For a $d$-dimensional multi-index 
    $\bm{\alpha}=(\alpha_1,\alpha_2,\ldots,\alpha_d)\in \mathbb N_0^d$, 
    we use the following notation:
    \[
        |\bm{\alpha}|
        := \alpha_1+\alpha_2+\cdots+\alpha_d,
    \]
    \[
        \bm{x}^{\bm{\alpha}}
        := x_1^{\alpha_1}x_2^{\alpha_2}\cdots x_d^{\alpha_d},
        \qquad 
        \bm{x}=(x_1,x_2,\ldots,x_d)^\top,
    \]
    and
    \[
        \bm{\alpha}!
        := \alpha_1!\alpha_2!\cdots \alpha_d!.
    \]
    \item For any \( p \in [1, \infty] \), the \( p \)-norm (or \( \ell^p \)-norm) of a vector \( \bm{x} = (x_1, \dots, x_d) \in \mathbb{R}^d \) is defined as
    \begin{equation*}
        \|\bm{x}\|_p = \|\bm{x}\|_{\ell^p} \coloneqq \big( |x_1|^p + \dots + |x_d|^p \big)^{1/p}\quad \text{for } p \in [1, \infty),
    \end{equation*}
    and
    \begin{equation*}
        \|\bm{x}\|_{\infty} = \|\bm{x}\|_{\ell^\infty} \coloneqq \max\big\{ |x_i| : i = 1, 2, \dots, d \big\}.
    \end{equation*}
    \item Let $\overline{B}_{r,|\cdot|}(\bm{x})\subset \mathbb R^d$ denote the closed ball centered at 
    $\bm{x}\in\mathbb R^d$ with radius $r>0$ measured by the Euclidean norm. 
    Similarly, let $\overline{B}_{r,\|\cdot\|_{\ell^\infty}}(\bm{x})\subset\mathbb R^d$ denote the closed ball centered at 
    $\bm{x}\in\mathbb R^d$ with radius $r>0$ measured by the $\ell^\infty$-norm.
     
    \item Let $\bm{n}\in\mathbb N^m$. We write $f(\bm{n})=\mathcal{O}(g(\bm{n}))$ if there exists a positive constant $C$, independent of $\bm{n}$, such that
    \[
        f(\bm{n})\le Cg(\bm{n}),
    \]
    as all components of $\bm{n}$ tend to $+\infty$.
\end{itemize}

\subsection{Neural networks}
\label{sec:DNN}

We summarize the basic notation used for deep neural networks as follows.

\begin{enumerate}

   \item Let $\sigma:\mathbb R\to\mathbb R$ be an activation function. 
    We also use $\bm{\sigma}$ to denote its componentwise 
extension to vectors. Namely, for any 
$\bm{x}=(x_1,\ldots,x_d)^\top\in\mathbb R^d$, we define
\[
    \bm{\sigma}(\bm{x})
    :=
    \begin{pmatrix}
        \sigma(x_1)\\
        \vdots\\
        \sigma(x_d)
    \end{pmatrix}.
\]
Thus, each activation layer applies the same scalar activation function to every 
component of its input vector.

    \item Let $L\in\mathbb N$, $N_0=d$, $N_{L+1}=1$, 
    $N_\ell\in\mathbb N$ for $\ell=1,2,\ldots,L$, and $N:=\max_{0\le i \le L+1}N_i$.

    An $\sigma$-activated network $\phi:\mathbb R^d\to\mathbb R$ with width $N$ and depth $L$ is described by
    \[
        \bm{x}=\widetilde{\bm{h}}_0
        \xrightarrow{\bm{W}_1,\bm{b}_1}
        \bm{h}_1
        \xrightarrow{\bm{\sigma}}
        \widetilde{\bm{h}}_1
        \xrightarrow{\bm{W}_2,\bm{b}_2}
        \cdots
        \xrightarrow{\bm{W}_L,\bm{b}_L}
        \bm{h}_L
        \xrightarrow{\bm{\sigma}}
        \widetilde{\bm{h}}_L
        \xrightarrow{\bm{W}_{L+1},\bm{b}_{L+1}}
        \phi(\bm{x})=\bm{h}_{L+1}.
    \]
    Here
    \[
        \bm{W}_\ell\in\mathbb R^{N_\ell\times N_{\ell-1}},
        \qquad 
        \bm{b}_\ell\in\mathbb R^{N_\ell},
    \]
    are the weight matrix and bias vector in the $\ell$-th affine transform, respectively. Namely,
    \[
        \bm{h}_\ell
        :=
        \bm{W}_\ell \widetilde{\bm{h}}_{\ell-1}+\bm{b}_\ell,
        \qquad 
        \ell=1,\ldots,L+1,
    \]
    and
    \[
        \widetilde{\bm{h}}_\ell
        :=
        \bm{\sigma}(\bm{h}_\ell),
        \qquad 
        \ell=1,\ldots,L.
    \]
\end{enumerate}

\subsection{Sobolev spaces}

Let $\Omega\subset\mathbb R^d$ be an open set. For a multi-index
$\bm{\alpha}=(\alpha_1,\ldots,\alpha_d)\in\mathbb N_0^d$, we write
\[
    |\bm{\alpha}|:=\alpha_1+\cdots+\alpha_d,
    \qquad
    D^{\bm{\alpha}}
    :=
    D_1^{\alpha_1}\cdots D_d^{\alpha_d},
\]
where $D_j$ denotes the weak derivative with respect to the $j$-th variable.

\begin{definition}[Sobolev spaces]
Let $s\in\mathbb N_0$ and $1\le p\le \infty$. The Sobolev space
$W^{s,p}(\Omega)$ is defined by
\[
    W^{s,p}(\Omega)
    :=
    \left\{
        f\in L^p(\Omega):
        D^{\bm{\alpha}}f\in L^p(\Omega)
        \text{ for all }
        \bm{\alpha}\in\mathbb N_0^d
        \text{ with } |\bm{\alpha}|\le s
    \right\}.
\]
It is equipped with the norm
\[
\|f\|_{W^{s,p}(\Omega)}
:=
\left\{
\begin{array}{ll}
\displaystyle
\biggl(
\sum_{|\bm{\alpha}|\le s}
\|D^{\bm{\alpha}}f\|_{L^p(\Omega)}^p
\biggr)^{1/p},
& 1\le p<\infty,\\[1.1em]

\displaystyle
\max_{|\bm{\alpha}|\le s}
\|D^{\bm{\alpha}}f\|_{L^\infty(\Omega)},
& p=\infty.
\end{array}
\right.
\]
The corresponding Sobolev semi-norm is defined by
\[
|f|_{W^{s,p}(\Omega)}
:=
\left\{
\begin{array}{ll}
\displaystyle
\biggl(
\sum_{|\bm{\alpha}|=s}
\|D^{\bm{\alpha}}f\|_{L^p(\Omega)}^p
\biggr)^{1/p},
& 1\le p<\infty,\\[1.1em]

\displaystyle
\max_{|\bm{\alpha}|=s}
\|D^{\bm{\alpha}}f\|_{L^\infty(\Omega)},
& p=\infty.
\end{array}
\right.
\]
\end{definition}

\subsubsection*{Averaged Taylor polynomials.}
Let $\Omega\subset\mathbb R^d$ be an open subset,
$s\in\mathbb N$, and $f\in W^{s,\infty}(\Omega)$. For a ball
$B:=\overline{B}_{r,|\cdot|}(\bm{x}_0)\subset \Omega$, let $b_B\in C_c^\infty(B)$ satisfy
$b_B\ge 0$ and
\[
    \int_B b_B(\bm{y})\,d\bm{y}=1.
\]
For $\bm{y}\in B$, define
\[
    T_{\bm{y}}^s f(\bm{x})
    :=
    \sum_{|\bm{\alpha}|\le s-1}
    \frac{D^{\bm{\alpha}}f(\bm{y})}{\bm{\alpha}!}
    (\bm{x}-\bm{y})^{\bm{\alpha}},
    \qquad
      Q_B^s f(\bm{x})
    :=
    \int_B T_{\bm{y}}^s f(\bm{x})b_B(\bm{y})\,d\bm{y}.
\]
Then $Q_B^s f$ is a polynomial of degree at most $s-1$, namely
\[
    Q_B^s f(\bm{x})
    =
    \sum_{|\bm{\alpha}|\le s-1}
    c_{f,\bm{\alpha}}\bm{x}^{\bm{\alpha}}.
\]
Moreover, if $B\subset (0,1)^d$, then
\[
    |c_{f,\bm{\alpha}}|
    \le
    C_{2}(s,d)\,
    \|f\|_{W^{s-1,\infty}(B)},
    \quad 
    |\bm{\alpha}|\le s-1,
    \qquad
        C_{2}(s,d)
    :=
    \sum_{|\bm{\alpha}+\bm{\beta}|\le s-1}
    \frac{1}{\bm{\alpha}!\bm{\beta}!}.
\]
$Q_B^s f$ is called the averaged Taylor polynomial of order $s$ over $B$.

\subsubsection*{Bramble--Hilbert estimate.}
Let $\Omega\subset\mathbb R^d$ be open and bounded. We say that $\Omega$ is
star-shaped with respect to a set $B\subset\Omega$ if
\[
    \operatorname{conv}(\{\bm{x}\}\cup B)\subset \Omega,
    \qquad \forall\,\bm{x}\in\Omega,
\]
where $\operatorname{conv}$ denotes the convex hull. Define
\[
    r_{\max}^*
    :=
    \sup\left\{
        r>0:
        \Omega \text{ is star-shaped with respect to }
        \overline{B}_{r,|\cdot|}(\bm{x}_0)
        \text{ for some } \bm{x}_0\in\Omega
    \right\}.
\]
We call
\[
    \gamma
    :=
    \frac{\operatorname{diam}(\Omega)}{r_{\max}^*}
\]
the chunkiness parameter of $\Omega$.

\begin{lemma}[Bramble--Hilbert Lemma {\normalfont\citep[Theorem~1]{brambleHilbert70}}]\label{lem:bramble-hilbert}
Let $\Omega\subset\mathbb R^d$ be open and bounded, and suppose that $\Omega$ is
star-shaped with respect to $B:=\overline{B}_{r,|\cdot|}(\bm{x}_0)\subset\Omega$ with
$r\ge \frac12 r_{\max}^*$. Let $\gamma$ be the chunkiness parameter of $\Omega$ and
$h:=\operatorname{diam}(\Omega)$. Then, for every $s\in\mathbb N$ and
$1\le p\le\infty$, there exists a constant $C_{\mathrm{BH}}(s,d,\gamma)>0$ such that
for every $f\in W^{s,p}(\Omega)$,
\[
    |f-Q_B^s f|_{W^{r,p}(\Omega)}
    \le
    C_{\mathrm{BH}}(s,d,\gamma)\,
    h^{s-r}
    |f|_{W^{s,p}(\Omega)},
    \qquad
    r=0,1,\ldots,s.
\]
Here $Q_B^s f$ denotes the averaged Taylor polynomial of order $s$ over $B$.
\end{lemma}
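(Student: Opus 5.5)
The plan is to follow the classical Sobolev-representation proof, as in Brenner--Scott. The argument first reduces to the case $r=0$, then writes $f-Q_B^sf$ as an explicit integral of the top-order derivatives of $f$, and finally bounds that integral operator on $L^p$ by a Riesz-potential estimate whose constant depends only on $s$, $d$ and $\gamma$. By density of $C^\infty(\Omega)\cap W^{s,p}(\Omega)$ in $W^{s,p}(\Omega)$ (Meyers--Serrin), it suffices to prove the estimate for smooth $f$ when $p<\infty$. For $p=\infty$ we either work with the weak derivatives directly in the representation formula below, or deduce the bound from the $L^q$ case with constants uniform in $q$ and let $q\to\infty$.

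\emph{Step 1: reduction to $r=0$.} We first show the commutation identity $D^{\bm\alpha}Q_B^sf=Q_B^{s-|\bm\alpha|}D^{\bm\alpha}f$ for $|\bm\alpha|\le s-1$. It follows by differentiating under the integral in the definition of $Q_B^sf$ and using $D^{\bm\alpha}_{\bm x}T^s_{\bm y}f(\bm x)=T^{s-|\bm\alpha|}_{\bm y}D^{\bm\alpha}f(\bm x)$, which is a direct multi-index computation. For $|\bm\alpha|=s$ both sides vanish, since $Q_B^sf$ has degree at most $s-1$, so the case $r=s$ is trivial with constant $1$. Consequently $D^{\bm\alpha}(f-Q_B^sf)=D^{\bm\alpha}f-Q_B^{s-r}D^{\bm\alpha}f$ for $|\bm\alpha|=r<s$. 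Applying the $r=0$ estimate of order $s-r$ to $D^{\bm\alpha}f$, and taking the maximum (or $\ell^p$-sum) over $|\bm\alpha|=r$, gives the claim with $h^{s-r}$.

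\emph{Step 2: integral representation for $r=0$.} For smooth $f$, $\bm x\in\Omega$ and $\bm y\in B$, the segment $[\bm y,\bm x]$ lies in $\Omega$ by star-shapedness. Taylor's formula with integral remainder gives
\[
f(\bm x)-T^s_{\bm y}f(\bm x)=s\sum_{|\bm\alpha|=s}\frac{(\bm x-\bm y)^{\bm\alpha}}{\bm\alpha!}\int_0^1 t^{s-1}D^{\bm\alpha}f(\bm x+t(\bm y-\bm x))\,dt .
\]
We multiply by $b_B(\bm y)$, integrate over $B$, and substitute $\bm z=\bm x+t(\bm y-\bm x)$, interchanging the $t$- and $\bm z$-integrals. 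This yields
\[
f(\bm x)-Q_B^sf(\bm x)=\sum_{|\bm\alpha|=s}\int_{C_{\bm x}}k_{\bm\alpha}(\bm x,\bm z)\,D^{\bm\alpha}f(\bm z)\,d\bm z,
\]
where $C_{\bm x}=\operatorname{conv}(\{\bm x\}\cup B)\subset\Omega$ and the kernel is
\[
k_{\bm\alpha}(\bm x,\bm z)=\frac{s}{\bm\alpha!}(\bm x-\bm z)^{\bm\alpha}\int_0^1 t^{-d-1}\,\mathbb{1}\Big(\bm x+\tfrac{\bm z-\bm x}{t}\in B\Big)\,b_B\Big(\bm x+\tfrac{\bm z-\bm x}{t}\Big)dt .
\]

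\emph{Step 3: kernel bound and Young's inequality.} This is the main obstacle, because the constant must depend only on $s$, $d$ and $\gamma$. We choose $b_B$ to be a rescaled standard bump with $\|b_B\|_\infty\le c_d r^{-d}$. Writing $\bm y=\bm x+(\bm z-\bm x)/t$, we have $|\bm y-\bm x|\le h$, which forces $t\ge|\bm z-\bm x|/h$. Integrating $t^{-d-1}$ from that lower bound gives
\[
|k_{\bm\alpha}(\bm x,\bm z)|\le C(s,d)\,r^{-d}h^d\,|\bm x-\bm z|^{s-d}=C(s,d)\Big(\tfrac{h}{r}\Big)^d|\bm x-\bm z|^{s-d}.
\]
Since $r\ge\tfrac12 r^*_{\max}$, we have $h/r\le2\gamma$, so the prefactor is $C(s,d)(2\gamma)^d$. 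The function $\bm w\mapsto|\bm w|^{s-d}\mathbb{1}_{|\bm w|\le h}$ has $L^1$-norm $\le C(d)h^s/s$ because $s\ge1$. Young's inequality (or Schur's test) on $\Omega$ therefore gives
\[
\|f-Q_B^sf\|_{L^p(\Omega)}\le C(s,d)\gamma^d h^s|f|_{W^{s,p}(\Omega)} .
\]
The same bound holds for $p=\infty$ directly. The points needing care are the justification of the change of variables and Fubini interchange, where the singularity at $t=0$ is integrable precisely because $s\ge1$, and the control of the kernel uniformly in $B$ via the normalization $\int b_B=1$ and the scaling of $b_B$. Combined with Step 1, this proves the lemma with $C_{\mathrm{BH}}(s,d,\gamma)$ equal to the maximum of the constants obtained for orders $1,\dots,s$.
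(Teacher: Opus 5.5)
Your proof is correct, but it follows a different route from the paper: the paper gives no proof of Lemma~\ref{lem:bramble-hilbert} and simply cites Bramble and Hilbert. Your argument is the Dupont--Scott proof as presented by Brenner and Scott:
\begin{itemize}
\item the commutation $D^{\bm{\alpha}}Q_B^s f=Q_B^{s-|\bm{\alpha}|}D^{\bm{\alpha}}f$ reduces everything to the zeroth-order $L^p$ estimate;
\item the averaged Taylor remainder is a Riesz-type potential of the top derivatives over $\operatorname{conv}(\{\bm{x}\}\cup B)$;
\item the kernel satisfies $|k_{\bm{\alpha}}(\bm{x},\bm{z})|\le C(s,d)(h/r)^d|\bm{x}-\bm{z}|^{s-d}\le C(s,d)(2\gamma)^d|\bm{x}-\bm{z}|^{s-d}$, with $r$ the radius of $B$;
\item Young's inequality finishes.
\end{itemize}
The density step is also sound: Meyers--Serrin handles $p<\infty$, and for $p=\infty$ you let $q\to\infty$, which works because the Young constant does not depend on $q$.

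The cited original theorem is a compactness-type statement, about functionals annihilating polynomials of degree $<s$ or equivalently the norm modulo such polynomials. Its constant depends on the domain in an unquantified way. The averaged-Taylor form with explicit $\gamma$-dependence is really the later Dupont--Scott version, which is what you reproduce. Your route gives something the paper actually needs: an explicit, scale-invariant constant $C(s,d)\gamma^d$. This is what justifies applying the lemma in Theorem~\ref{thm:taylor-g-alpha-K} to all $K^d$ cells $Q_{\bm{i},\bm{m}_*}$ (each with $\gamma=2\sqrt d$ and $h=\sqrt d/(2K)$) with one $K$-independent constant.

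Two further observations.

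\emph{The choice of $b_B$.} In Step~3 you take $b_B$ to be a rescaled bump with $\|b_B\|_\infty\le c_d r^{-d}$. The paper's definition allows any nonnegative $b_B\in C_c^\infty(B)$ of unit mass. For $p<\infty$ this restriction is genuinely needed. Take $s=1$, $p<d$, and let $b_B$ concentrate at a point where $f$ has a tall, thin spike with bounded $W^{1,p}$-seminorm; then no $b_B$-independent constant can exist. So your proof makes explicit a hypothesis the statement leaves implicit.

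\emph{The case $p=\infty$.} This is the only case the paper uses, and there the potential estimate is unnecessary. Fix $\bm{y}\in B$ and apply Taylor's formula along $[\bm{y},\bm{x}]\subset\Omega$, after mollifying on the compact set $\operatorname{conv}(\{\bm{x}\}\cup B)$. This gives $|f(\bm{x})-T^s_{\bm{y}}f(\bm{x})|\le C(s,d)h^s|f|_{W^{s,\infty}(\Omega)}$. Averaging against any probability density $b_B$ yields the zeroth-order estimate with a constant independent of both $\gamma$ and $b_B$, and your Step~1 then gives all higher orders.
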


\subsection{Proof ideas and roadmap}

In dimension one the proof can be viewed as a fixed-size pipeline; see
Figure~\ref{fig:proof-idea-1d}.  Let
\[
    C_k=\Big[\frac{k}{K},\frac{k+1}{K}\Big],\qquad
    I_{1,k}=\Big[\frac{k}{K},\frac{2k+1}{2K}\Big],
    \qquad
    \Omega_1=\bigcup_{k=0}^{K-1} I_{1,k}.
\]
Only the left halves \(I_{1,k}\) are used in one chart.  This separation is
essential: the coefficient row must change from \(k\) to \(k+1\), and these
jumps are kept inside the unused half-cells rather than in the region where the
Taylor polynomial is evaluated.

\begin{figure}[htbp]
\centering
\includegraphics[width=0.98\textwidth]{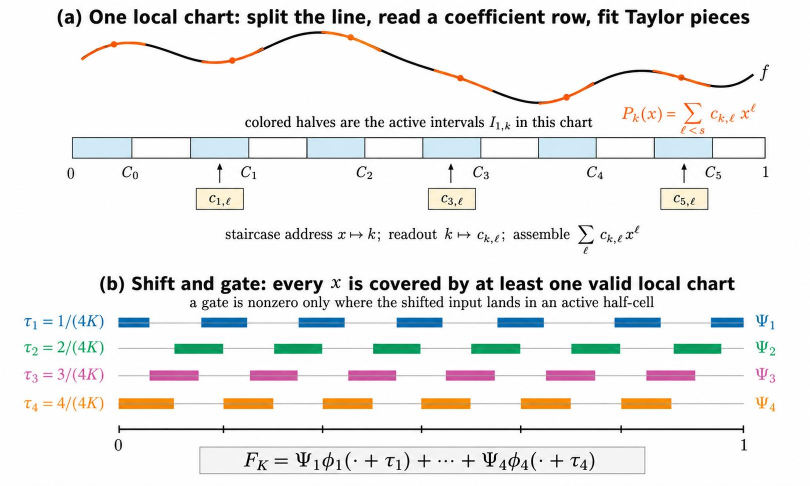}
\caption{One-dimensional cartoon of the construction.  A single local chart fits
averaged Taylor pieces only on active half-intervals.  Four shifted charts,
weighted by realizable gates, cover the target interval and glue these local
pieces into a global approximant}
\label{fig:proof-idea-1d}
\end{figure}

The local chart has three \DUAF-realizable steps.  First, on each active interval,
Bramble--Hilbert gives an averaged Taylor piece
\[
    P_k(x)=\sum_{\ell=0}^{s-1} c_{k,\ell}x^\ell,
    \qquad
    \|f-P_k\|_{W^{r,\infty}(I_{1,k})}\le C K^{r-s},
    \quad 0\le r\le s .
\]
Second, a staircase subnet realizes the address \(x\mapsto k\) on \(I_{1,k}\),
and a fixed-width point-fitting readout realizes
\[
    k\longmapsto c_{k,\ell},\qquad 0\le \ell\le s-1 .
\]
Third, multiplication subnets assemble
\[
    \phi_1(x)\approx \sum_{\ell<s} c_{k,\ell}x^\ell
    \quad\text{on } I_{1,k}.
\]
Choosing the readout accuracy \(K^{-s}\) gives
\[
    \|f-\phi_1\|_{W^{r,\infty}(\Omega_1)}
    \le C K^{r-s},
    \qquad 0\le r\le s-1 .
\]
Thus \(K\) enlarges the coefficient table, but the address, readout, and assembly
modules have fixed architecture.

The unused halves are covered by four shifted copies.  Set
\(\tau_j=j/(4K)\), \(j=1,2,3,4\).  For each shift, construct a local network
\(\phi_j\) for \(f(\cdot-\tau_j)\) on the shifted active set.  The gate
\(\Psi_{j,n,K}\) is a \DUAF network with
\[
    \sum_{j=1}^4 \Psi_{j,n,K}=1,\qquad
    \operatorname{supp}\Psi_{j,n,K}\subset\{x:\ x+\tau_j\in\Omega_1\},
    \qquad
    \|\Psi_{j,n,K}^{(m)}\|_\infty\le C K^m .
\]
The global approximant is
\[
    F_K(x)=\sum_{j=1}^4
        \Psi_{j,n,K}(x)\,\phi_j(x+\tau_j).
\]
By the support property, each local error is only used where the corresponding
shifted chart is valid.  Leibniz's rule then gives, for \(0\le r\le s-1\),
\[
    \|f-F_K\|_{W^{r,\infty}}
    \le
    C\sum_{m=0}^r K^m K^{r-m-s}
    \le C K^{r-s}\to0 .
\]

The formal $d$-dimensional argument is the same construction with rectangles,
multi-indices, and $4^d$ shifts.  The full proof architecture is summarized as
follows.

\begin{figure}[htbp]
\centering
\footnotesize
\setlength{\tabcolsep}{3.2pt}
\renewcommand{\arraystretch}{1.15}
\fbox{%
\begin{minipage}{0.985\textwidth}
\vspace{0.25em}
\begin{tabularx}{\textwidth}{@{}>{\bfseries\raggedright\arraybackslash}p{0.155\textwidth}>{\raggedright\arraybackslash}p{0.325\textwidth}>{\raggedright\arraybackslash}p{0.315\textwidth}>{\raggedright\arraybackslash}p{0.16\textwidth}@{}}
\rowcolor{mygray}
\textbf{Step} & \textbf{Concrete operation} & \textbf{Tool and estimate} & \textbf{Delivers}\\
\midrule
Taylor data &
Cut $[0,1]^d$ into $K^d$ cells.  On each active half-cell $Q_{\bm{i},\bm{m}_*}$,
store the coefficients $c_{\bm{i},\bm{\alpha}}$ of an averaged Taylor polynomial. &
Bramble--Hilbert and Theorem~\ref{thm:taylor-g-alpha-K} give the local error
$K^{r-s}$ and bounded coefficient tables $g_{f,\bm{\alpha},\bm{m}_*}$. &
Taylor data for Proposition~\ref{prop:local}.\\
\addlinespace[0.2em]
Address &
For an input $\bm{x}$, the staircase subnet returns the cell indices
$(i_1,\ldots,i_d)$ and encodes them as one integer
$p=\sum_j i_jK^{j-1}$. &
Lemma~\ref{lemma:steps}.  The map has fixed width and depth; only its slopes and
biases depend on $K$. &
A table row selected by $\bm{x}$.\\
\addlinespace[0.2em]
Readout &
For each multi-index $\bm{\alpha}$, a width-one subnet maps the integer $p+1$ to
the prescribed coefficient $c_{\bm{i},\bm{\alpha}}$. &
Lemmas~\ref{lemma:dense} and~\ref{lemma:point-fit}; for $\DUAF_n$, use
Lemma~\ref{lem:sigma_n-properties}. &
Fixed-size realization of $g_{f,\bm{\alpha},\bm{m}_*}$.\\
\addlinespace[0.2em]
Assemble &
Multiply the read coefficient by $\bm{x}^{\bm{\alpha}}$ and sum over
$|\bm{\alpha}|\le s-1$. &
Lemmas~\ref{lemma:multi} and~\ref{lemma:poly}, plus the cellwise product bound
\eqref{eq:cellwise-product-bound}. &
Local network: Proposition~\ref{prop:local}.\\
\addlinespace[0.2em]
Shift--glue &
The local lookup is valid only on active half-cells, so run it on $4^d$ shifted
grids and glue the copies by gates $\Psi_{\bm{i},n,K}$. &
Lemma~\ref{lem:gn-properties}: partition of unity, localization, and
$\|D^{\bm{\gamma}}\Psi_{\bm{i},n,K}\|_\infty\lesssim K^{|\bm{\gamma}|}$.  With local tolerance $K^{-s}$,
Leibniz gives an $O(K^{-1})$ Sobolev error. &
Theorems~\ref{thm:shift-W1-d} and~\ref{thm:shift-Ws-d}.\\
\addlinespace[0.2em]
Rescale &
Extend and normalize $f$ on $(0,1)^d$, approximate on $(0,9/10)^d$, then apply an
affine change of variables back to $(a,b)^d$. &
Sobolev extension, normalization, and affine rescaling. &
Theorems~\ref{thm:global-w2}, \ref{thm:global-ws}; Corollary~\ref{cor:global-winfty}.\\
\midrule
Structured &
Do not build a $K^d$ table.  Approximate the univariate functions $h_{q,p}$ and
$g_q$, then compose $g_q(\sum_p h_{q,p})$. &
Corollary~\ref{cor:global-winfty} in dimension one; Fa\`a di Bruno controls
Sobolev errors through the compositions. &
Theorem~\ref{thm:QKST}.\\
\addlinespace[0.2em]
Sigmoidal &
Do not redo the grid proof.  First simulate the scalar activation $\DUAF_n$ by a
fixed $\widetilde{\DUAF}_n$-subnet, then replace every activation layer. &
Theorem~\ref{thm:sigmoidal-loc}: the middle branch is exact up to scaling and the
tails are recovered from the integral definition by finite differences; Lemma~\ref{lem:old-to-new-1} performs layer replacement. &
Theorem~\ref{thm:sigmoidal}.\\
\bottomrule
\end{tabularx}
\vspace{0.25em}
\end{minipage}}
\caption{Proof blueprint.  The number of cells grows with $K$, but the modules in
this table do not; only their parameters change}
\label{fig:proof-roadmap}
\end{figure}

The key estimate is in the gluing step.  If a derivative $D^{\bm{\alpha}}$ falls
partly on the local error and partly on a gate, the typical term is
$(K^{-s}+K^{|\bm{\beta}|-s})K^{|\bm{\alpha}|-|\bm{\beta}|}$, which is $O(K^{-1})$
for $|\bm{\alpha}|\le s-1$.  Hence increasing $K$ improves the Sobolev accuracy
without changing the architecture.  The structured theorem replaces the
$d$-dimensional coefficient table by univariate tables, while the sigmoidal theorem
keeps the same proof and only replaces the activation.

\section{Algebraic and analytic properties of \DUAF{s}} \label{sec:fundamental-lemmas}

This section collects the fixed-size primitives used in the proof roadmap.  The
lemmas are ordered according to their role in the construction: first we encode
finite tables by one scalar parameter, then we turn this encoding into neural
network primitives, then we build algebraic operations, and finally we record the
smooth gates needed for the local-to-global Sobolev estimates.

The first lemma is the arithmetic/topological source of the whole coefficient
encoding mechanism.  It says that a single parameter $w$ can drive a periodic
function through a dense subset of an arbitrary finite-dimensional cube.  Later,
this is used to encode all Taylor-coefficient values on a grid without increasing
the network width.

\begin{lemma}[Denseness of one-parameter periodic tables]\label{lemma:dense}
	Let $g:\mathbb R \to \mathbb R$ be a continuous periodic function with $\min_{x\in\mathbb R}g(x)=0$ and $\max_{x\in\mathbb R}g(x)=1$. Then, for any $N\in \mathbb N$, the following point set
	\begin{equation*}
	\Bigg\{\Big[g\Big(\tfrac{w}{2^{1/N}+1+\frac{1}{N}}\Big),\   
	g\Big(\tfrac{w}{2^{1/N}+1+\frac{2}{N}}\Big),\   \cdots,\  
	g\Big(\tfrac{w}{2^{1/N}+2}\Big)\Big]^T\,  :\, 
	w\in\mathbb R   \Bigg\}\subseteq [0,1]^N
\end{equation*}
is dense in $[0,1]^N$.
\end{lemma}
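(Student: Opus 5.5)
The plan is to reduce the statement to Kronecker's theorem on dense linear flows on a torus. The key input is a rational-independence fact about the reciprocals of the denominators, and the definition of $\beta$ below is exactly what provides it.

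\textbf{Setup.} Let $T>0$ be a period of $g$, let $\beta:=2^{1/N}+1$, and let $a_j:=\beta+\tfrac{j}{N}$ for $j=1,\ldots,N$. The $j$-th coordinate of the vector is $g(w/a_j)$. Define $G:\mathbb R^N\to[0,1]^N$ by
\[
G(\bm{t}):=\big(g(Tt_1),\ldots,g(Tt_N)\big),
\]
which is $\mathbb Z^N$-periodic and so descends to a continuous map on the torus $\mathbb T^N=\mathbb R^N/\mathbb Z^N$. Our point set is $\{G(w\bm{v}):w\in\mathbb R\}$ with
\[
\bm{v}:=\Big(\tfrac{1}{Ta_1},\ldots,\tfrac{1}{Ta_N}\Big).
\]

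\textbf{Step 1: surjectivity of $G$.} Since $g$ is continuous, periodic, and attains both $0$ and $1$ on one period, the intermediate value theorem gives $g([0,T])=[0,1]$. Hence $G(\mathbb T^N)=[0,1]^N$.

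\textbf{Step 2: density via Kronecker.} By Kronecker's theorem for continuous flows, the line $\{w\bm{v}\bmod \mathbb Z^N : w\in\mathbb R\}$ is dense in $\mathbb T^N$ as soon as $v_1,\ldots,v_N$ are linearly independent over $\mathbb Q$. This is equivalent to the rational independence of $1/a_1,\ldots,1/a_N$, since the common factor $1/T$ does not matter. Granting this, take any $\bm{y}\in[0,1]^N$. Step 1 gives $\bm{t}$ with $G(\bm{t})=\bm{y}$. Density gives $w$ with $w\bm{v}$ arbitrarily close to $\bm{t}$ modulo $\mathbb Z^N$. Uniform continuity of $G$ on the compact torus then makes $G(w\bm{v})$ arbitrarily close to $\bm{y}$, which proves the lemma.

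\textbf{Step 3: rational independence of the $1/a_j$ (the main obstacle).} Suppose $\sum_j r_j/a_j=0$ with $r_j\in\mathbb Q$. Clearing denominators gives $P(\beta)=0$, where
\[
P(x):=\sum_{j=1}^N r_j\prod_{i\neq j}\Big(x+\tfrac{i}{N}\Big)\in\mathbb Q[x],\qquad \deg P\le N-1.
\]
The minimal polynomial of $2^{1/N}$ over $\mathbb Q$ is $x^N-2$, which is irreducible by Eisenstein at $2$. Therefore $\beta=2^{1/N}+1$ is algebraic of degree exactly $N$, and $P(\beta)=0$ forces $P\equiv0$. Evaluating at $x=-k/N$ kills every summand except the $k$-th, giving
\[
0=P(-k/N)=r_k\prod_{i\neq k}\frac{i-k}{N}.
\]
The product is nonzero, so $r_k=0$ for every $k$. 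This establishes independence and completes the plan.
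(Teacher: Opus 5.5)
Your proposal is correct and follows the paper's proof essentially step for step. The rational-independence argument is the same: clear denominators, use that $x^N-2$ is Eisenstein-irreducible, and evaluate at the roots of the linear factors. The only change is that you shift to $\beta=2^{1/N}+1$, whereas the paper keeps $x=2^{1/N}$ with factors $x+1+k/N$. The transfer from the torus to $[0,1]^N$ through continuity of the coordinatewise map is also the same.

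The one real difference is that you cite Kronecker's theorem for linear flows as a black box. The paper proves that density step itself via Pontryagin duality: a proper closed subgroup of $\mathbb T^N$ is annihilated by some nontrivial character $\chi_{\bm m}$, which forces $\bm m\cdot\bm a=0$.
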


\begin{proof}  
We split the proof into three steps. The first step shows the rational independence of a particular sequence. The second step demonstrates the general denseness phenomenon of the fractional curve generated by an arbitrary rational independent sequence on the torus. The last step generalizes the fractional curve to arbitrary continuous periodic functions and shows the denseness result of their images in the unit cube.

\noindent\textbf{Step 1}: Rational Independence of $\Big(\tfrac{1}{2^{1/N}+1+\frac{1}{N}}, \tfrac{1}{2^{1/N}+1+\frac{2}{N}},\ldots,\tfrac{1}{2^{1/N}+2}\Big)$

Let $\alpha_i=\tfrac{1}{2^{1/N}+1+\frac{i}{N}}$, we claim that $(\alpha_1,\dots,\alpha_N)$ are linearly independent over $\mathbb Q$.
Indeed, suppose $c_1,\dots,c_N\in\mathbb Q$ satisfy
$\sum_{i=1}^N c_i\alpha_i=0$, i.e.
$\sum_{i=1}^N \frac{c_i}{2^{1/N}+1+i/N}=0$.
Multiplying both sides by $\prod_{k=1}^N(2^{1/N}+1+\frac{k}{N})$ yields
$P(2^{1/N})=0$, where the polynomial $P\in\mathbb Q[x]$ is given explicitly by
\[
P(x)
=
\sum_{j=1}^{N}
c_j
\prod_{\substack{k=1\\ k\ne j}}^{N}\Big(x+1+\frac{k}{N}\Big).
\]
Clearly $\deg P\le N-1$.
Since the polynomial $x^{N}-2$ is irreducible over $\mathbb Q$ by Eisenstein's criterion, we must have $[\mathbb Q(2^{1/N}):\mathbb Q]=N$ and hence $P\equiv 0$.  For $1\le i\le N$, we have $c_i\frac{1}{N^{N-1}}\prod_{k\neq i}(k-i)=P(-1-\frac{i}{N})=0$. Since $\prod_{k\neq i}(k-i)\neq0$, this forces $c_i=0$. 

\smallskip
\noindent\textbf{Step 2}: Denseness on $\mathbb T^N$.

Suppose now $\bm{a}:=(a_1,\dots,a_N)\in\mathbb R^N$ is rationally independent. For $r\in\mathbb R$, let $\{r\}\in[0,1)$ denote its fractional part. We claim that the following set
\[
\Big\{\big[\{a_1x\},\dots,\{a_Nx\}\big]^{T}\,  :\, 
	x\in\R   \Big\}\subseteq [0,1)^N
\]
is dense in $[0,1]^N$.

In order to prove this claim, we translate the original space into $N$-torus and analyze the topological properties there. Let $\mathbb T^N:=\mathbb R^N/\mathbb Z^N$ be the $N$-torus and let $\pi:\mathbb R^N\to\mathbb T^N$ be the quotient map.
We define the group homomorphism and its induced homomorphism
\[
L:\mathbb R\to\mathbb R^N,\qquad L(x):=x\bm{a}, \qquad \phi:=\pi\circ L:\mathbb R\to\mathbb T^N.
\]
Set
\[
H:=\overline{\phi(\mathbb R)}\subset\mathbb T^N.
\]
Since $\phi$ is a group homomorphism, the set $\phi(\mathbb R)$ is a subgroup of the topological group $\mathbb T^N$, and therefore
$H$ is a closed subgroup of $\mathbb T^N$.

For each $\bm{m}\in\mathbb Z^N$, define a character $\chi_{\bm{m}}:\mathbb T^N\to \mathbb S^1$ by $\chi_{\bm{m}}(\pi(\bm{t})):=e^{2\pi i\, \bm{m}\cdot \bm{t}}$, for $\bm{t}\in\mathbb R^N$.
This is well defined since $\bm{t}-\bm{t'}\in\mathbb Z^N$ implies $\bm{m}\cdot(\bm{t}-\bm{t'})\in\mathbb Z$.

Assume, for contradiction, that $H\subsetneq \mathbb T^N$.
Consider the quotient compact abelian group $Q:=\mathbb T^N/H$, and let $\rho:\mathbb T^N\to Q$ be the quotient homomorphism.
Since $H$ is a proper closed subgroup, the quotient $Q$ is nontrivial.
For a locally compact abelian group $G$, define its Pontryagin dual by
\[
\widehat G
:=
\{\chi:G\to \mathbb S^1 \mid \chi \text{ is a continuous group homomorphism}\},
\]
equipped with the compact--open topology. Pontryagin duality \citep[Theorem~1.7.2, \S1.7]{rudin62fourier}, which goes back to Pontrjagin \citep{pontryagin34}, says that the canonical evaluation map
\[
\iota:G\to\widehat{\widehat G},\qquad
\iota(x)(\chi):=\chi(x),
\]
is a topological group isomorphism.
In particular, if $G$ is a compact abelian group, then
$G$ is trivial if and only if $\widehat G$ is trivial.
By applying this result to our compact abelian group $Q$, it follows immediately that
$\widehat Q\neq\{1\}$, and hence we may choose a nontrivial character $\psi\in\widehat Q$, i.e., $\psi\neq 1$.
Define $\chi:=\psi\circ\rho\in\widehat{\mathbb T^N}$.
Then $\chi\neq 1$ and $\chi(h)=\psi(\rho(h))=\psi(0)=1$ for all $h\in H$, so $\chi$ is a nontrivial character of $\mathbb T^N$
that is identically $1$ on $H$.

It is standard (follows by lifting characters to $\mathbb R^N$ and using $\mathbb Z^N$-periodicity) that
\[
\widehat{\mathbb T^N}\cong \mathbb Z^N,\qquad
\chi=\chi_{\bm{m}} \ \text{for a unique } \bm{m}\in\mathbb Z^N,
\]
and $\chi$ is nontrivial if and only if $\bm{m}\neq \bm{0}$.
Thus there exists $\bm{m}\in\mathbb Z^N\setminus\{0\}$ such that $\chi_{\bm{m}}(h)=1$ for all $h\in H$.

In particular, for every $x\in\mathbb R$ we have $\phi(x)\in H$, hence
\[
1=\chi_{\bm{m}}(\phi(x))
=\chi_{\bm{m}}(\pi(x\bm{a}))
=e^{2\pi i\, \bm{m}\cdot(x\bm{a})}
=e^{2\pi i\, x(\bm{m}\cdot \bm{a})}.
\]
This forces $\bm{m}\cdot \bm{a}=0$.
By the rational independence of $\bm{a}=(a_1,\dots,a_N)$, this implies $\bm{m}=\bm{0}$, a contradiction.
Hence $H=\mathbb T^N$, i.e. $\phi(\mathbb R)$ is dense in $\mathbb T^N$ (Figure~\ref{fig:torus-density} illustrates the denseness phenomenon).

Next, we consider the map $f:\mathbb R\to[0,1)^N$,where $f(x):=[\{a_1x\},\dots,\{a_Nx\}]^{T}$.
By the definitions, $L(x)-f(x)\in\mathbb Z^N$ for every $x\in\mathbb R$, hence $\phi=\pi\circ f$. 

Let $U\subset(0,1)^N$ be an arbitrary nonempty open set.
Since the restriction $\pi|_{[0,1)^N}$ is injective, we have
\[
\pi^{-1}(\pi(U))\cap[0,1)^N = U.
\]
Because $\pi(f(\mathbb R))=\phi(\mathbb R)$ is dense in $\mathbb T^N$ and $\pi$ is a covering map and so an open map, there exists
$x\in\mathbb R$ such that $\pi(f(x))\in\pi(U)$.
As $f(x)\in[0,1)^N$, it follows that
\[
f(x)\in \pi^{-1}(\pi(U))\cap[0,1)^N = U.
\]
Hence $f(\mathbb R)$ intersects every nonempty open subset of $(0,1)^N$, and therefore
$f(\mathbb R)$ is dense in $(0,1)^N$, and thus also dense in $[0,1]^N$. This proves the claim.

\begin{figure}[htbp]
  \centering
  \includegraphics[width=0.33\textwidth]{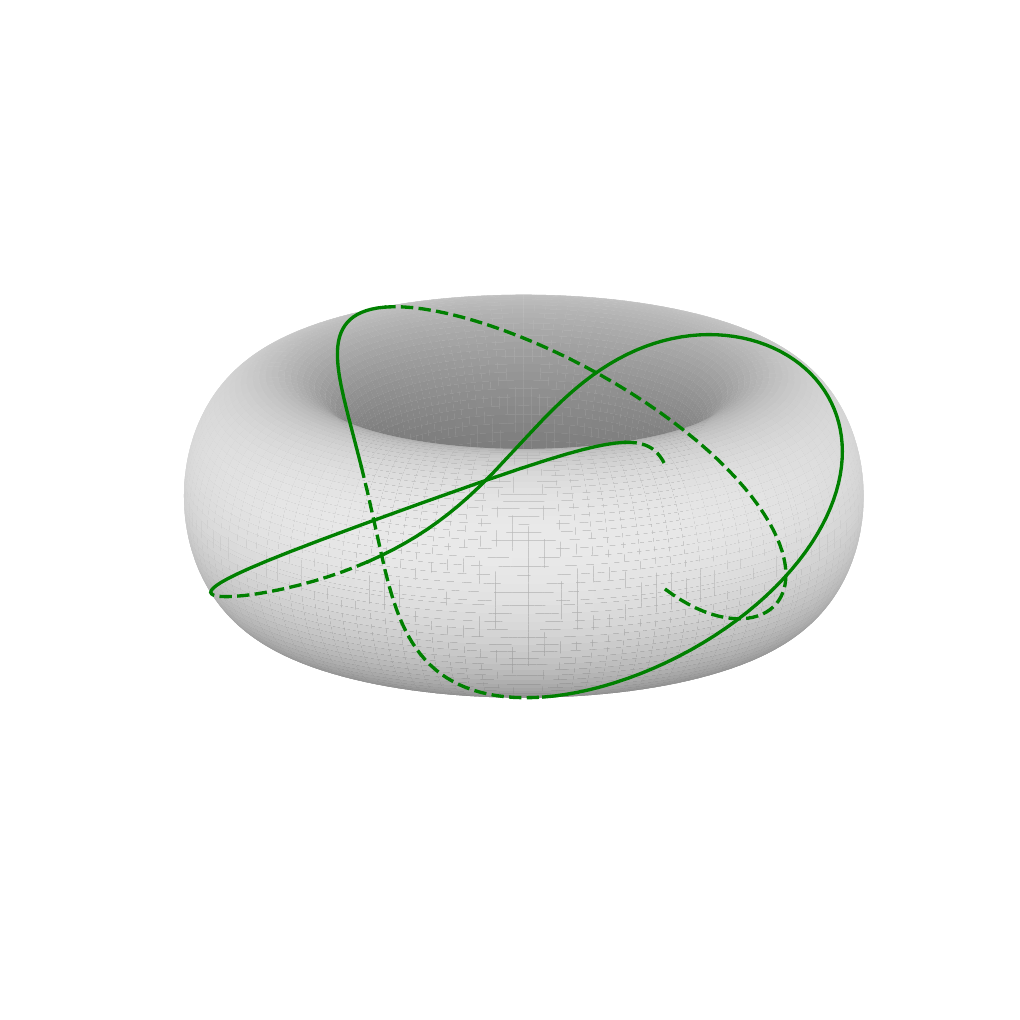}\hfill
  \includegraphics[width=0.33\textwidth]{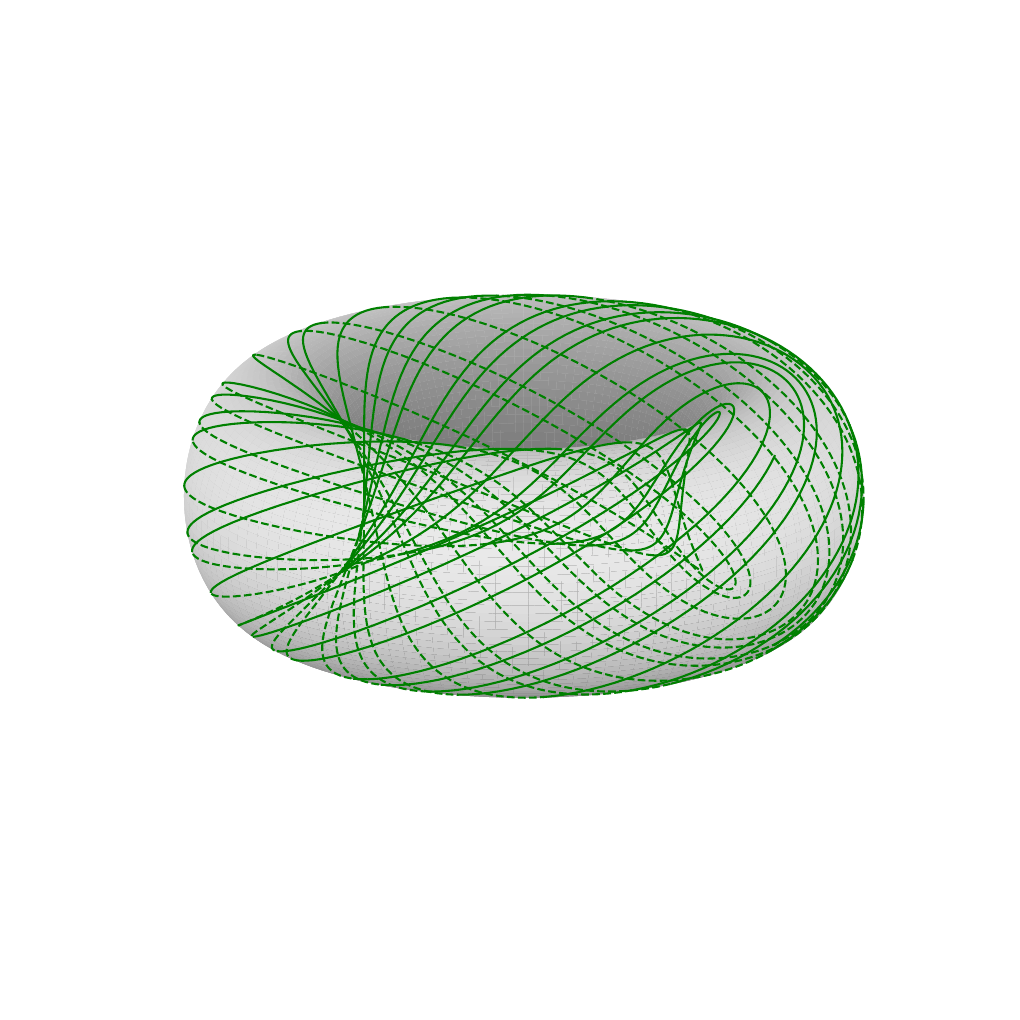}\hfill
  \includegraphics[width=0.33\textwidth]{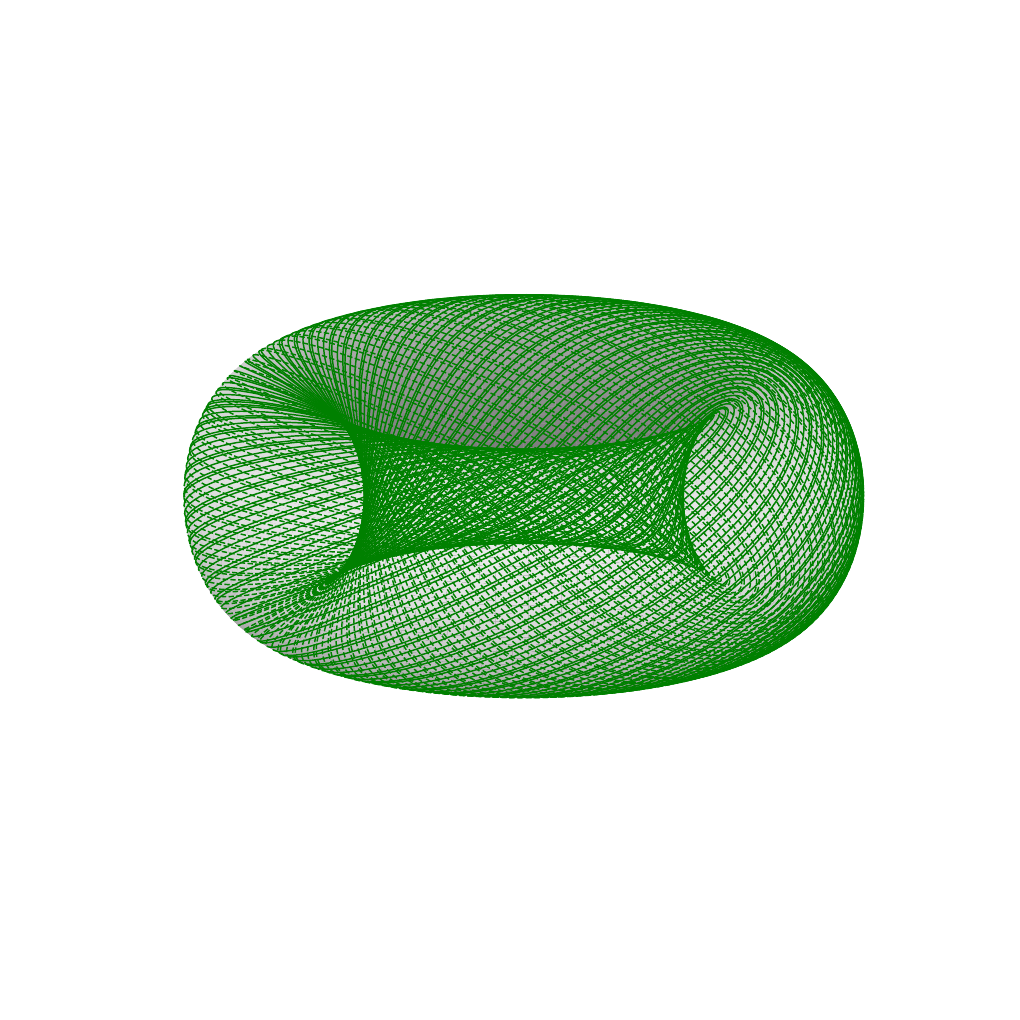}
  \vspace{-6pt}
  \caption{The green curves represent $\phi([-1,1])$, $\phi([-10,10])$, and $\phi([-50,50])$ for $\bm{a}=(1,\sqrt{2})$ on $\mathbb T^2$ canonically embedded into $\mathbb R^3$}
  \label{fig:torus-density}
\end{figure}

\smallskip
\noindent\textbf{Step 3}: Denseness on $[0,1]^N$. Let $P>0$ be a period of $g$ and set $h(u):=g(Pu)$. Then $h$ is continuous and $1$-periodic, so $h(u)=h(\{u\})$. For $k=1,\dots,N$, set $a_k:=1/\bigl(P(2^{1/N}+1+k/N)\bigr)$. By Step 1, $(a_1,\dots,a_N)$ is rationally independent, and hence Step 2 gives that \[ \mathcal D:=\{[\{a_1w\},\dots,\{a_Nw\}]^T:w\in\mathbb R\} \] is dense in $[0,1]^N$. Moreover, \[ g\Bigl(\frac{w}{2^{1/N}+1+k/N}\Bigr) =h\Bigl(\frac{w}{P(2^{1/N}+1+k/N)}\Bigr) =h(a_kw)=h(\{a_kw\}), \] so the point set in the lemma is $\mathcal H(\mathcal D)$, where $\mathcal H(u_1,\dots,u_N):=[h(u_1),\dots,h(u_N)]^T$. Since $\mathcal H$ is continuous and $\mathcal D$ is dense in the compact cube $[0,1]^N$, \[ \overline{\mathcal H(\mathcal D)} =\mathcal H([0,1]^N). \] Finally, $h([0,1])=g([0,P])=[0,1]$ by continuity and the assumptions $\min g=0$, $\max g=1$. Therefore $\mathcal H([0,1]^N)=h([0,1])^N=[0,1]^N$, and the desired point set is dense in $[0,1]^N$.
\end{proof}

The preceding denseness statement is not yet a network construction.  The next
lemma converts it into the finite-table fitting primitive used in
Proposition~\ref{prop:local}: once a grid cell is represented by an integer index,
a width-one, depth-two network can assign an essentially arbitrary value to each
index.

\begin{lemma}[Finite point fitting by a fixed-size network]\label{lemma:point-fit}
For any $\varepsilon>0$, $\xi_i\in[0,1]$, $i=1,2,\dots,N$, 
there exists an $\EUAF$-activated network $\varphi$ of width $1$ and depth $2$  such that
\[
    \max_{1\le i \le N}\bigl|\varphi(i)-\xi_i\bigr| < \varepsilon.
\]
\end{lemma}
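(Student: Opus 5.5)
We will realize $\varphi$ as a width-one, depth-two $\EUAF$ network of the form
\[
\varphi(x)=\EUAF\!\Bigl(w_2\,\EUAF(w_1x+b_1)+b_2\Bigr),
\]
and read the table off through the periodic branch of $\EUAF$. The idea is to apply Lemma~\ref{lemma:dense} with $g$ equal to the periodic part of $\EUAF$. For $x>0$, $\EUAF(x)=|x-2\lfloor (x+1)/2\rfloor|$, the triangle wave of period $2$ with minimum $0$ and maximum $1$. Let $g$ denote this triangle wave, extended $2$-periodically to all of $\mathbb R$. Then $g$ is continuous and periodic, with $\min g=0$ and $\max g=1$, so Lemma~\ref{lemma:dense} applies directly.

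\textbf{Step 1 (choose the hidden weight).} Set $\alpha_i:=1/(2^{1/N}+1+i/N)$. By Lemma~\ref{lemma:dense}, the vectors $[g(\alpha_1w),\dots,g(\alpha_Nw)]^T$, $w\in\mathbb R$, are dense in $[0,1]^N$. Hence there is $w$ with $|g(\alpha_iw)-\xi_i|<\varepsilon$ for all $i$. Since $g$ is even ($g(-x)=g(x)$) and periodic, we may replace $w$ by $|w|+2m\,\mathrm{lcm}$-type shifts. More simply, $g(\alpha_i w)=g(\alpha_i|w|)$, so we can take $w>0$.

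\textbf{Step 2 (produce the numbers $w\alpha_i$ from the inputs $i$).} The obstacle is that the arguments $w\alpha_i=w/(2^{1/N}+1+i/N)$ depend on $i$ rationally, not affinely, so a single affine map followed by the triangle wave will not do. This is exactly where the first hidden layer and the rational branch $x/(1-x)$ of $\EUAF$ for $x\le0$ come in. We look for an affine map $x\mapsto w_1x+b_1\le 0$ such that the first layer produces a quantity affine in $1/(c+i/N)$ with $c=2^{1/N}+1$. Solving $\EUAF(u)=u/(1-u)=-1+1/(1-u)$ with $1-u=c+i/N$ gives $u=1-c-i/N$, so we take $w_1=-1/N$ and $b_1=1-c$. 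Because $c>1$, we have $u<0$ for all $i\ge1$, and therefore $\EUAF(u)=-1+\alpha_i$.

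\textbf{Step 3 (output layer).} Take $w_2=w$ and $b_2=w$. The second pre-activation is then $w(-1+\alpha_i)+w=w\alpha_i>0$. On $(0,\infty)$ the activation $\EUAF$ agrees with $g$, so $\varphi(i)=g(w\alpha_i)$, which lies within $\varepsilon$ of $\xi_i$. The network has one neuron per layer and two activation layers, hence width $1$ and depth $2$.

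The one place needing care is keeping every argument on the intended branch of $\EUAF$. The first pre-activation must be $\le0$, which holds since $1-c-i/N<0$, and the second must be $>0$, which holds since $w>0$ and $\alpha_i>0$. The substantive difficulty, the density statement, is already supplied by Lemma~\ref{lemma:dense}, so beyond these sign checks the proof is just this construction.
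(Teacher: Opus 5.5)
Your proposal is correct and matches the paper's proof almost exactly: you use the same first-layer pre-activation $-2^{1/N}-x/N$ on the rational branch to get $\EUAF=-1+\alpha_i$, and the same output weight and bias $w$ to recover $w\alpha_i$. The only difference is how the final argument is placed on the periodic branch. You use evenness of the triangle wave to take $w\ge 0$; note that $\EUAF(0)=0=g(0)$, so $w=0$ is harmless, and the aside about ``lcm-type shifts'' should simply be deleted. The paper instead keeps $\tilde w$ of arbitrary sign and adds the even integer $2\lfloor|\tilde w|\rfloor+2$ to the bias, relying on $2$-periodicity.
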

\begin{proof}
Let
\[
  g_{\EUAF}(t):=\EUAF(t),\qquad t\ge0.
\]
On the nonnegative half-line, $g_{\EUAF}$ is continuous, $2$-periodic, and has range $[0,1]$. By Lemma~\ref{lemma:dense} applied to $g_{\EUAF}$, there exists a $\tilde{w}\in\mathbb R$ such that
\[
  \max_{1\le i \le N}\left|g_{\EUAF}\!\left(\frac{\tilde{w}}{2^{1/N}+1+i/N}\right)-\xi_i\right| < \varepsilon.
\]
We claim that
\[
\varphi(x)=\EUAF\Big(\tilde{w}\cdot \EUAF\Big(-2^{1/N}-\frac{1}{N}x\Big)+\tilde{w}+2\lfloor|\tilde{w}|\rfloor+2\Big)
\]
is the network we want. We first observe that $-2^{1/N}-x/N<0$ for $x\ge0$. This leads to the following computation
\[
\tilde{w}\cdot\EUAF\Big(-2^{1/N}-\frac{1}{N}x\Big)+\tilde{w}=\tilde{w}\left(\frac{-2^{1/N}-\frac{1}{N}x}{|-2^{1/N}-\frac{1}{N}x|+1}+1\right)=\frac{\tilde{w}}{2^{1/N}+1+\frac{x}{N}}.
\]
Notice that $\frac{\tilde{w}}{2^{1/N}+1+x/N}+2(\lfloor|\tilde{w}|\rfloor+1)>0$ for $x\ge 0$. It follows from the $2$-periodicity of the nonnegative branch $g_{\EUAF}$ that
\[
\varphi(x)=\EUAF\left(\frac{\tilde{w}}{2^{1/N}+1+x/N}+2(\lfloor|\tilde{w}|\rfloor+1)\right)=g_{\EUAF}\left(\frac{\tilde{w}}{2^{1/N}+1+x/N}\right), \qquad x\in [0, \infty).
\]
Therefore,
\[
\max_{1\le i \le N}\bigl|\varphi(i)-\xi_i\bigr|
=\max_{1\le i \le N}\left|g_{\EUAF}\!\left(\frac{\tilde{w}}{2^{1/N}+1+i/N}\right)-\xi_i\right|< \varepsilon.
\]
This completes the proof.
\end{proof}

After the coefficient values have been encoded, the network must know which grid
cell the input belongs to.  The following staircase lemma provides exactly this
cell-indexing device on the active half of each grid interval.  It is the bridge
between the geometric grid and the one-dimensional table in Lemma~\ref{lemma:point-fit}.

\begin{lemma}[Staircase map]\label{lemma:steps} 
There exists an $\EUAF$-activated network $\varphi_1$ of width $2$ and depth $1$  such that
\[
    \varphi_1(x)=i,  \qquad x\in 
    \left[\,\frac{i}{N},\ \frac{2i+1}{2N}\,\right], \qquad i=0,\ldots,N-1.
\]
\end{lemma}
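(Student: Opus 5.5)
The plan is to realize the staircase explicitly as $\lfloor Nx\rfloor = Nx-\{Nx\}$, using the periodic triangle-wave branch of $\EUAF$ on $x>0$ to produce the fractional part on the active half-intervals. By \eqref{equation:EUAF}, for $t>0$ we have $\EUAF(t)=|t-2\lfloor (t+1)/2\rfloor|$. This is the $2$-periodic triangle wave with $\EUAF(t)=t$ on $[0,1]$ and $\EUAF(t)=2-t$ on $[1,2]$; at $t=0$ both branches give $0$.

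\textbf{Step 1 (linear part).} A single neuron produces $Nx$ on $[0,1]$. For $x\in[0,1]$ we have $x/2\in[0,\tfrac12]$, so $\EUAF(x/2)=x/2$ and hence $Nx=2N\,\EUAF(x/2)$.

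\textbf{Step 2 (fractional part).} The second neuron is $\EUAF(2Nx)$. For $x\in[\frac iN,\frac{2i+1}{2N}]$, write $t=2Nx\in[2i,2i+1]$.
\begin{itemize}
\item For $t<2i+1$ we have $\lfloor (t+1)/2\rfloor=i$, so $\EUAF(t)=t-2i$.
\item At the endpoint $t=2i+1$ we have $\lfloor (t+1)/2\rfloor=i+1$, so $\EUAF(t)=|2i+1-2i-2|=1$, which still equals $t-2i$.
\end{itemize}
Thus $\EUAF(2Nx)=2Nx-2i$ on the whole closed interval. The point $x=0$ (the case $i=0$) needs only $\EUAF(0)=0$, which holds from the $x\le 0$ branch.

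\textbf{Step 3 (assembly).} Define
\[
\varphi_1(x):=2N\,\EUAF\!\left(\tfrac{x}{2}\right)-\tfrac12\,\EUAF(2Nx).
\]
By Steps 1 and 2, on $[\frac iN,\frac{2i+1}{2N}]$ this equals $Nx-\tfrac12(2Nx-2i)=i$. The network has one hidden layer with two neurons, with first-layer weights $(1/2,\,2N)$, zero biases, and output weights $(2N,-1/2)$. So it has width $2$ and depth $1$ in the paper's convention.

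The only delicate point is the bookkeeping at the closed right endpoints $x=\frac{2i+1}{2N}$, where the floor in the definition of $\EUAF$ jumps. Step 2 handles this by noting that the triangle wave is continuous there and takes the value $1$. Everything else is a direct evaluation.
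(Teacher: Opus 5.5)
Your proof is correct and essentially the paper's: the paper takes $\varphi_1(x)=N\,\EUAF(x)-\tfrac12\EUAF(2Nx)$, using that $\EUAF(x)=x$ on all of $[0,1]$, while your $2N\,\EUAF(x/2)$ gives the same linear term $Nx$. (That halving is exactly what the paper later needs for $\DUAF_n$, where $\varrho_n$ is the identity only on $[0,\tfrac12]$.) Your explicit check of the closed right endpoints $t=2i+1$ and of $x=0$ supplies verification that the paper leaves to the reader.
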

\begin{proof}
$\varphi_1(x)=N\cdot\EUAF(x)-\frac{1}{2}\EUAF(2Nx)$ is a $\EUAF$--NN of width 2 and depth 1 which satisfies this property.
\end{proof}

The local Taylor models contain products of coefficient tables and monomials.  The
next lemma shows that multiplication itself can be implemented exactly on bounded
sets with fixed width and depth.  This algebraic primitive is used repeatedly: for
monomials, for local polynomial terms, and later for gate products.

\begin{lemma}[Exact multiplication on bounded intervals]\label{lemma:multi}
For any $M>0$, there exists a function $\varphi_4$ generated by an $\EUAF$-activated network of width $6$ and depth $2$  such that
\[
    \varphi_4(x,y)=xy, 
    \quad \text{for any } x,y\in[-M,M].
\]
\end{lemma}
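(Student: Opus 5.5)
We will realize $xy$ exactly through the polarization identity
\[
xy=\tfrac14\bigl((x+y)^2-(x-y)^2\bigr),
\]
so it suffices to build an exact squaring unit for $u\in[-2M,2M]$ and run two copies in parallel, one with $u=x+y$ and one with $u=x-y$. Both $u$'s are affine in the input $(x,y)$, and the final combination is affine, so it can be absorbed into the first and last affine maps of the network. The only nonlinearity we need is the negative branch $r(t):=\EUAF(t)=\tfrac{t}{1-t}$, $t\le 0$. We never touch the periodic branch, because every pre-activation will be arranged to be nonpositive on the relevant domain.

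\textbf{Reciprocal primitive.} For $w\ge 1$ we have $1-w\le 0$, and
\[
r(1-w)=\frac{1-w}{w}=\frac1w-1 .
\]
Hence $1/w=\EUAF(1-w)+1$ is realized exactly by a single neuron whenever $w\ge1$.

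\textbf{First layer.} Fix $a:=2M+1$. For $|u|\le 2M$ both $a-u$ and $a+u$ lie in $[1,4M+1]$, so two neurons give exactly $1/(a-u)$ and $1/(a+u)$. Their sum is
\[
S(u)=\frac{2a}{a^2-u^2}.
\]
This is an affine function of the two first-layer outputs, so it can be formed inside the second affine map.

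\textbf{Second layer.} Set $w:=aS(u)=\dfrac{2a^2}{a^2-u^2}$. Since $0<a^2-u^2\le a^2$, we have $w\ge 2\ge1$, so the reciprocal primitive applies again. One neuron with pre-activation $1-aS(u)$ therefore outputs exactly $\tfrac{1}{w}-1=\tfrac{a^2-u^2}{2a^2}-1$. Recovering $u^2$ is then an affine operation:
\[
u^2=a^2-2a^2\Bigl(\EUAF\bigl(1-aS(u)\bigr)+1\Bigr).
\]

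\textbf{Architecture count.} The two squaring units for $x\pm y$ need $2+2=4$ neurons in the first hidden layer and $1+1=2$ in the second. The output layer takes the affine combination $\frac14(\cdot-\cdot)$. The depth is therefore $2$ and the width is at most $4\le 6$.

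The only point requiring care is the sign bookkeeping: every pre-activation, namely $1-(a\mp u)$ in the first layer and $1-aS(u)$ in the second, must be $\le0$ uniformly for $x,y\in[-M,M]$. This is what the choice $a=2M+1$ guarantees. Once that holds, all identities are exact and no approximation error arises.
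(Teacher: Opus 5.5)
Your proof is correct and rests on the same mechanism as the paper's: the paper's squaring unit $-4\,\EUAF\bigl(1-(\EUAF(x-1)+\EUAF(-x-1)+2)\bigr)=x^2$ on $[-1,1]$ is your double-reciprocal construction with shift $a=2$ (no extra factor $a$ is needed there, since $4/(4-x^2)\ge 1$), combined with polarization after rescaling the inputs by $1/(2M)$. The differences are minor but real. You absorb the range $[-M,M]$ into the shift $a=2M+1$ rather than rescaling, and you use $xy=\tfrac14\bigl((x+y)^2-(x-y)^2\bigr)$ with two squaring units instead of $(x+y)^2-x^2-y^2$ with three, which gives width $4$ rather than the paper's $6$.
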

\begin{proof}
Observe that for any $x\in[-1,1]$
\begin{align*}
\tildevarphi_4(x)
&:=-4\,\EUAF\Bigl(1-\bigl(\EUAF(x-1)+\EUAF(-x-1)+2\bigr)\Bigr)
\notag\\
&=-4\,\EUAF\left(
1-\left(\frac{x-1}{2-x}+\frac{-x-1}{x+2}+2\right)
\right)
\notag\\
&=-4\,\EUAF\left(
1-\frac{4}{4-x^2}
\right)
=-4\,\EUAF\left(
-\frac{x^2}{4-x^2}
\right)
\notag\\
&=-4\cdot
\frac{-\frac{x^2}{4-x^2}}{1+\frac{x^2}{4-x^2}}
\notag\\
&=x^2.
\end{align*}
Using polarization identity, for any $x,y\in[-M,M]$, we have
\begin{align*}
\varphi_4 (x,y):=2M^2\left(\tilde{\varphi}_4\left(\frac{x+y}{2M}\right)-\tilde{\varphi}_4\left(\frac{x}{2M}\right)-\tilde{\varphi}_4\left(\frac{y}{2M}\right)\right)=xy.
\end{align*}
Since $\tilde{\varphi_4}$ is a $\EUAF$-NN with width 2 depth 2, $\varphi_4$ is a $\EUAF$-NN with width $6$ and depth $2$.

\end{proof}

With multiplication available, we can build monomials by repeated products.  The
following lemma gives the monomial subnet that will be paired with the coefficient
subnet in Proposition~\ref{prop:local}.

\begin{lemma}[Exact monomial realization]\label{lemma:poly} 
If $\bm{\alpha}\in\mathbb{N}_0^d$, $|\bm{\alpha}|=s>0$, then there exists an $\EUAF$--activated network 
of width $\le s+3d$ and depth $\le 2s+2d-4$  such that
\[
    \varphi_{\bm{\alpha}}(\bm{x})=\bm{x}^{\bm{\alpha}}, \quad  \bm{x}\in[0,1]^d.
\]
The constant monomial $\bm{x}^{\bm{0}}=1$ is represented by an affine output layer; zero components of $\bm{\alpha}$ are treated as constant factors.
\end{lemma}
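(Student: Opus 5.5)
The plan is to build $\bm{x}^{\bm{\alpha}}$ by repeated exact multiplication using Lemma~\ref{lemma:multi}. Since $\bm{x}\in[0,1]^d$, every partial product stays in $[0,1]$, so a single bound $M=1$ serves for all multiplication subnets. Write $\bm{x}^{\bm{\alpha}}=y_1y_2\cdots y_s$, where the factor list $(y_1,\dots,y_s)$ contains $x_j$ exactly $\alpha_j$ times. Zero components of $\bm{\alpha}$ contribute no factors. The case $s=1$ is a single coordinate $x_j$; it is obtained by an identity subnet and is handled separately. For $s\ge2$ we form a chain $z_1=y_1$, $z_{k+1}=\varphi_4(z_k,y_{k+1})$. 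After $s-1$ multiplications this gives $z_s=\bm{x}^{\bm{\alpha}}$ exactly on $[0,1]^d$.

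The network-level issue is that the later factors $y_{k+1}$ must be carried forward unchanged through the layers while the multiplications are performed. For this I first establish an identity gadget for $\EUAF$ on bounded inputs. One option is the width-two, depth-one map
\[
t=\EUAF(t+2)-\EUAF(2-t)\quad\text{for } t\in[0,1]
\]
or a similar piecewise-linear combination using the positive, sawtooth branch. Its validity is checked directly from the formula $|x-2\lfloor (x+1)/2\rfloor|$ on the relevant interval; for example, on $x\in[0,1]$, $\EUAF(x)=x$. A simpler route exists because inputs lie in $[0,1]$ and $\EUAF(t)=t$ there, so a single neuron carries $t\mapsto t$ exactly through any number of layers. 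Every intermediate value stays in $[0,1]$: coordinates remain in $[0,1]$, and products of numbers in $[0,1]$ remain in $[0,1]$. The only exception is inside $\varphi_4$, which is self-contained. Each carried quantity therefore costs width one per layer.

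Next I count resources. Each multiplication step uses width $6$ and depth $2$. Running the $s-1$ steps sequentially gives depth $2(s-1)$, and the carried factors cost at most $s$ extra neurons, since one may equivalently carry only the $d$ coordinates and select $x_j$ at each step. I intend to carry the $d$ coordinates, costing width $d$, plus the $6$ neurons of the active multiplication, giving a total width of $d+6\le s+3d$ when $d\ge1$. To obtain the general form of the stated bounds $s+3d$ and $2s+2d-4$, a more conservative scheme also works: first compute the powers $x_j^{\alpha_j}$ coordinatewise in parallel, then multiply the $d$ partial results together. That scheme has depth at most $2(\max_j\alpha_j-1)+2(d-1)\le 2s+2d-4$, and its width is bounded by the $d$ chains plus the carried values, which falls within $s+3d$ after the bookkeeping above.

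The main obstacle is making sure that the input to each $\varphi_4$ stays in the domain where the lemma's exact formula holds. The lemma is stated for $[-M,M]$, and with $M=1$ the internal arguments $(x+y)/2$ lie in $[-1,1]$. The delicate point is the identity pass-through when it is merged into the same layers as the $\varphi_4$ neurons. I will verify that the pass-through neurons receive only values in $[0,1]$, where $\EUAF$ is the identity. If a value could fall outside that range, I would shift it into $[0,1]$ with a bias and subtract the shift in the next affine map. Finally, I use the paper's convention that a depth bound means "no greater than": a branch of depth smaller than the total depth is padded with identity layers. This completes the plan.
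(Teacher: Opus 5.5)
\textbf{Gap in the width count.}
Your main route is a single sequential chain $z_{k+1}=\varphi_4(z_k,y_{k+1})$ that carries the $d$ coordinates alongside. This differs from the paper's route, which computes each $x_j^{\alpha_j}$ in parallel and then takes a $d$-fold product. Its depth, $2s-2\le 2s+2d-4$, is fine. So is its range control: $M=1$, and every carried value lies in $[0,1]$, where $\EUAF(t)=t$.

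The width claim, however, is wrong. The inequality $d+6\le s+3d$ is equivalent to $s+2d\ge 6$, which fails for $d=1$, $s\in\{2,3\}$. In those cases your chain realizes $x^2=\varphi_4(x,x)$ with width $6>5$, and $x^3$ with width $7>6$ (the six neurons of $\varphi_4$ plus the carried $x$).

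The ``more conservative'' parallel scheme does not rescue this, and its width bound is asserted rather than checked. Built from $\varphi_4$ alone, a chain for $x_j^2$ has width $6$ and a chain for $x_j^3$ has width $7$. Both exceed the per-coordinate budget $\alpha_j+3$ behind $\sum_j(\alpha_j+3)=s+3d$. For instance, $\bm{\alpha}=(2,2)$ gives total width $12>10$.

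\textbf{The missing ingredient.}
What you need is the width-$2$, depth-$2$ squaring network $\tilde\varphi_4$ from the proof of Lemma~\ref{lemma:multi}. The paper uses it as its base case: $W_2=2$, whence $W_s=\max\{6,W_{s-1}+1\}=s+3$ for $s\ge 3$. Perform the first product of equal factors with $\tilde\varphi_4$, or merge the duplicate neurons of $\varphi_4(x,x)$ to get width $4$. Then the two exceptional cases drop to width $2\le 5$ (for $s=2$) and $\max\{3,6\}=6\le 6$ (for $s=3$).

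In every other case your chain already meets the stated bounds. It gives width $d+6$, independent of $s$, and depth $2s-2$, independent of $d$. This is no worse than the paper's count and is considerably sharper for large $s$ or $d$. So once patched, your sequential route is a genuinely different and more economical construction than the paper's parallel-powers-then-product scheme.

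\textbf{Identity gadget.}
Your first identity gadget is incorrect. On $[0,1]$, both $\EUAF(t+2)$ and $\EUAF(2-t)$ equal $t$, so their difference is identically $0$. The route you actually rely on, $\EUAF(t)=t$ on $[0,1]$, is correct and suffices.
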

\begin{proof}
We first consider the case $d=1$. From the proof of Lemma~\ref{lemma:multi}, there is an $\EUAF$--activated network of width $2$ and depth $2$ that represents $x^2$ on $[0,1]$. Let $W_s$ and $L_s$ denote the width and depth required to generate $x^s$. For $s>2$, since $x^s=\varphi_4(x,x^{s-1})$ and $x\mapsto(x,x^{s-1})$ is an $\EUAF$--activated network of width $W_{s-1}+1$ and depth $L_{s-1}$, we have $W_s=\max\left\{6, W_{s-1}+1\right\}=s+3$ and $L_s=L_{s-1}+2=2s-2$.
For arbitrary dimension, we first compute the architecture of $x_1x_2\cdots x_d$. If $d>2$, then $x_1x_2\cdots x_d=\varphi_4(x_d,x_1\cdots x_{d-1})$. Using the same recursion pattern as above, $x_1x_2\cdots x_d$ can be represented by an $\EUAF$--activated network with width $4+d$ and depth $2d-2$.
By applying the computation from $d=1$, we conclude that the map
\[
(x_1,\ldots,x_d)\mapsto (x_1^{\alpha_1},\ldots,x_d^{\alpha_d})
\]
has width $(\alpha_1+3)+\cdots+(\alpha_d+3)=s+3d$ and depth $2\max_{1\le i \le d}\alpha_i-2\le2s-2$. Finally, composing with the $d$-tuple multiplication map, $\bm{x}^{\bm{\alpha}}$ can be generated by an $\EUAF$--activated network of width $\le s+3d$ and depth $\le 2s+2d-4$.
\end{proof}

The previous lemmas build the local polynomial pieces.  To pass from local pieces
to a global Sobolev approximant, we need smooth gates that form a partition of
unity after suitable shifts.  The next definition introduces those gates.

\begin{definition}[Multidimensional $C^n$-gates]
Let $K, d\in\mathbb N$ and $n\in\mathbb N\cup\{\infty\}$. For 
$\bm{i}=[i_1,\dots,i_d]^{T}\in\{1,2,3,4\}^d$ and 
$\bm{x}=[x_1,\dots,x_d]^{T}\in\mathbb R^d$ define
\[
  \Psi_{\bm{i},n,K}(\bm{x})
  :=
  \prod_{j=1}^d
  g_n\Bigl(2K x_j + \frac{i_j}{2}\Bigr),
\]
where $g_n$ is the $C^n$--periodic gate defined by \eqref{eq:def-qs}.
\end{definition}

The first five lemmas were proved directly for \EUAF\@.  Higher-order Sobolev
approximation requires the smoother activation $\DUAF_n$.  The next lemma records
that the same finite-data, staircase, multiplication, and monomial primitives also
exist for $\DUAF_n$, and that the multidimensional gates are themselves realizable
by fixed-size $\DUAF_n$ networks.

{\begin{lemma}[Basic properties of $\DUAF_n$ ]\label{lem:sigma_n-properties}
Let $n\in \mathbb N \cup\{\infty\}$, we have the following properties:
\begin{enumerate} 

  \item[\textnormal{(i)}] 
  Lemma~\ref{lemma:point-fit}, Lemma~\ref{lemma:steps}, Lemma~\ref{lemma:multi} and Lemma~\ref{lemma:poly} hold for $\DUAF_n$-activated networks.  Moreover, for any given $ m, L\in\mathbb N$, $M>0$, $\DUAF_n$-activated networks can represent the identity map in $[-M,M]^m$ with width $m$ and depth $L$.

  \item[\textnormal{(ii)}]
  $\Psi_{\bm{i},n,K}$ can be realized as  a $\DUAF_n$--activated network with width $d+4$ and depth $2d-1$ on $[0, \infty)^d$.

\end{enumerate}
\end{lemma}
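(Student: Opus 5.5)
The plan is to re-derive each primitive from three explicit branches of $\DUAF_n$. The construction of $h_n$ and $g_n$ gives:
\begin{itemize}
\item[(a)] on $(0,\infty)$, $\DUAF_n=g_n$, a continuous $2$-periodic function with $\min g_n=0$ and $\max g_n=1$;
\item[(b)] on $[-4,-\tfrac{13}{11}]$, $\DUAF_n(t)=-2-t^{-1}$ (rational branch);
\item[(c)] on $[-1,-\tfrac4{11}]$, $\DUAF_n(t)=t$ (identity branch);
\item[(d)] on $(-\infty,-5]$, $\DUAF_n(t)=-\tfrac{14}{5}-\varrho_n(-t-5)$ (periodic-coordinate branch).
\end{itemize}
Each of the EUAF proofs used only analogous branches, namely $x/(1-x)$ on the negative side and the periodic wave on the positive side. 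So we replace each EUAF formula by the matching $\DUAF_n$ branch, with affine pre-scalings chosen so that every intermediate argument stays inside the intended interval.

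\textbf{Identity.} For $y\in[-M,M]$, set $t=\tfrac{y}{4M}-\tfrac{7}{11}$. This lies in $[-1,-\tfrac4{11}]$, so $y=4M\bigl(\DUAF_n(t)+\tfrac7{11}\bigr)$. Applying this componentwise and repeating it gives the identity on $[-M,M]^m$ with width $m$ and any depth $L$.

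\textbf{Point fitting.} For $x\in\{1,\dots,N\}$, let $t=-(2^{1/N}+1+x/N)$. Then $t\in[-4,-2)$, which is inside the rational branch, so
\[
\DUAF_n(t)+2=\frac{1}{2^{1/N}+1+x/N}.
\]
Lemma~\ref{lemma:dense} applies to $g_n$ and yields a parameter $\tilde w$. The network
\[
\varphi(x)=\DUAF_n\Bigl(\tilde w\,\bigl(\DUAF_n(t)+2\bigr)+2\lfloor|\tilde w|\rfloor+2\Bigr)
\]
then has a strictly positive outer argument. By $2$-periodicity of $g_n$ it equals $g_n\bigl(\tilde w/(2^{1/N}+1+x/N)\bigr)$, so it has width $1$ and depth $2$, and the proof of Lemma~\ref{lemma:point-fit} goes through verbatim.

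\textbf{Staircase.} Use branch (d). For $x\in[0,1]$, take $t=-Nx-5\le-5$, so that $\DUAF_n(t)=-\tfrac{14}5-\varrho_n(Nx)$. On $[\tfrac iN,\tfrac{2i+1}{2N}]$ the fractional part of $Nx$ lies in $[0,\tfrac12]$, hence $\varrho_n(Nx)=Nx-i$. Producing $Nx$ by the identity branch as $2N\bigl(\DUAF_n(\tfrac x2-1)+1\bigr)$, we get
\[
\varphi_1(x)=2N\Bigl(\DUAF_n\bigl(\tfrac x2-1\bigr)+1\Bigr)+\DUAF_n(-Nx-5)+\tfrac{14}5=i .
\]
This has width $2$ and depth $1$.

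\textbf{Multiplication and monomials.} For $x\in[-1,1]$, both $x-3$ and $-x-3$ lie in $[-4,-2]$, so
\[
v:=\DUAF_n(x-3)+\DUAF_n(-x-3)+4=\frac{6}{9-x^2}\in\Bigl[\tfrac23,\tfrac34\Bigr].
\]
Then $-2v\in[-\tfrac32,-\tfrac43]$ is again in the rational branch, and
\[
x^2=9-12\bigl(\DUAF_n(-2v)+2\bigr).
\]
This is a width-$2$, depth-$2$ squaring network. Polarization as in Lemma~\ref{lemma:multi} gives exact multiplication on $[-M,M]^2$ with width $6$ and depth $2$. Lemma~\ref{lemma:poly} then follows by the identical recursion, using the identity networks above to carry $x$ and the partial products forward.

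\textbf{Part (ii).} For $\bm x\in[0,\infty)^d$ and $i_j\ge1$, the argument $2Kx_j+i_j/2$ is positive. So a single layer with weights $2K$ and biases $i_j/2$ outputs the $d$ factors $g_n(2Kx_j+i_j/2)\in[0,1]$ exactly (width $d$, depth $1$). Their product is formed by $d-1$ successive exact multiplications, each of depth $2$, while the factors not yet used are carried by identity neurons. This gives depth $1+2(d-1)=2d-1$, and width at most $6$ for the multiplier plus $d-2$ carried factors, i.e.\ $d+4$.

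The main obstacle is bookkeeping rather than ideas. We must verify that every intermediate argument stays inside the exact branch used, including after the rescalings needed for $[-M,M]$ in the monomial recursion. We must also check that the width and depth counts of Lemma~\ref{lemma:poly} are unchanged when carried signals go through identity neurons instead of being passed linearly. I would first fix the scaling constants so that all values stay in $[-1,1]$, and only then count layers.
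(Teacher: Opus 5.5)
Your proposal is correct and follows the paper's proof essentially step by step: the same rational-branch point-fitting network with the even shift $2\lfloor|\tilde w|\rfloor+2$, the same squaring network (your $9-12(\DUAF_n(-2v)+2)$ is exactly the paper's $-12\DUAF_n(-2\DUAF_n(x-3)-2\DUAF_n(-x-3)-8)-15$), polarization for multiplication, and the same count $1+2(d-1)=2d-1$ with width $6+(d-2)=d+4$ for part (ii). The only cosmetic difference is in the staircase, where you produce the linear term $Nx$ from the identity branch as $2N(\DUAF_n(x/2-1)+1)$ rather than as $2N\varrho_n(x/2)$ from the periodic branch; both realizations have width $2$ and depth $1$ on the range actually needed.
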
}
\begin{proof}

(i) Since $g_n$ is continuous and $2$-periodic, with $\min_{x\in\mathbb R}g_n(x)=0$ and $\max_{x\in\mathbb R}g_n(x)=1$, Lemma~\ref{lemma:dense} holds for $g_n$. In addition, by the defining left branch of $\DUAF_n$, we have $\DUAF_n(x)=h_n(x)$ for $x\le0$ and $h_n(x)=-2-x^{-1}$ on $[-4,-13/11]$. Hence $\DUAF_n(x)+2=-\frac{1}{x}$ when $-4\le x\le -13/11$. Observe that 
\[
x\in[0, N]\Longrightarrow -2^{1/N}-1-\frac{1}{N}x\in[-2^{1/N}-2, -2^{1/N}-1]\subset [-4, -2]\subset\Big[-4, -\frac{13}{11}\Big].
\]
Therefore, for $x\in[0, N]$, the following $\DUAF_n$-activated network with width 1 and depth 2
\begin{align*}
\varphi(x)
&:=
\DUAF_n\Big(\tilde{w}_n\cdot\DUAF_n\Big(-2^{1/N}-1-\frac{1}{N}x\Big)+2\tilde{w}_n+2\lfloor|\tilde{w}_n|\rfloor+2\Big)
\notag\\
&=\DUAF_n\Big(\frac{\tilde{w}_n}{2^{1/N}+1+x/N}+2\lfloor|\tilde{w}_n|\rfloor+2\Big)
\notag\\
&=g_n\Big(\frac{\tilde{w}_n}{2^{1/N}+1+x/N}+2\lfloor|\tilde{w}_n|\rfloor+2\Big)
\notag\\
&=g_n\Big(\frac{\tilde{w}_n}{2^{1/N}+1+x/N}\Big),
\end{align*}
where $\tilde{w}_n\in\mathbb R$ was chosen such that $\max_{1\le i \le N}\bigl|g_n(\tfrac{\tilde{w}_n}{2^{1/N}+1+i/N})-\xi_i\bigr| < \varepsilon$, will approximate $\xi_i$ at integer points $x=1,\ldots,N$. This proves Lemma~\ref{lemma:point-fit}.

Moreover, using the property that $\DUAF_n(x)+2=-\frac{1}{x}$ for $-4\le x\le -13/11$ once again, it's not difficult to check that the following $\DUAF_n$--activated network of width $2$ and depth $2$ satisfies
\[
-12\,\DUAF_n\Bigl(
-2\,\DUAF_n(x-3)
-2\,\DUAF_n(-x-3)
-8
\Bigr)
-15
=x^2,
\quad
x\in[-1, 1].
\]
Hence, the multiplication map follows from applying the polarization identity. This proves Lemma~\ref{lemma:multi}. In general, Lemma~\ref{lemma:poly} will follow from Lemma~\ref{lemma:multi}.

Next, we show that Lemma~\ref{lemma:steps} holds for $\DUAF_n$-activated networks. Let $\varrho_n:\mathbb R\to\mathbb R$ be the function defined by \eqref{eq:def-qs}. We claim that the following identity holds (see Figure~\ref{fig:staircase-s} for an illustration):
\[
S_{n, N}(t):=2N\,\varrho_n\!\left(\frac t2\right)-\varrho_n(Nt)=i,
\qquad
t\in\left[\frac{i}{N},\frac{2i+1}{2N}\right],
\quad i=0,1,\dots,N-1.
\]

\begin{figure}[htbp] 
    \centering
    \includegraphics[width=0.48\textwidth]{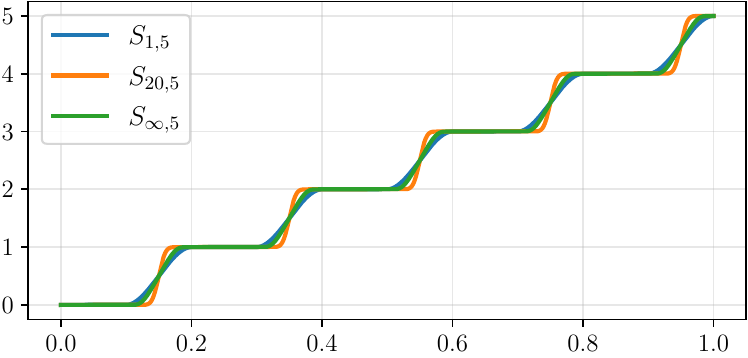}\hfill
    \includegraphics[width=0.48\textwidth]{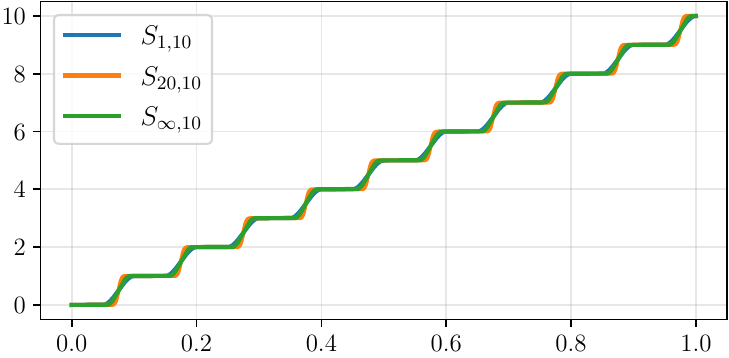}
    \caption{Comparison of $S_{n,N}$ for $N=5$ (left) and $N=10$ (right)}
    \label{fig:staircase-s}
\end{figure}

To verify this, note that \(t/2\in[0,\tfrac12]\), so \(\varrho_n(t/2)=t/2\). Also,
\[
Nt\in\left[i,\frac{2i+1}{2}\right]\implies \{Nt\}=Nt-i\in \big[0,\tfrac12\big].
\]
By the \(1\)-periodicity of \(\varrho_n\) and the fact that \(\varrho_n(x)=x\) on \([0,1/2]\), we have
\[
\varrho_n(Nt)=\varrho_n(\{Nt\})=\{Nt\}=Nt-i.
\]
Hence
\[
S_{n, N}(t)=2N\,\varrho_n\!\left(\frac t2\right)-\varrho_n(Nt)=Nt-(Nt-i)=i.
\]
From the definition of $\DUAF_n$, it is straightforward to check that $-\DUAF_n(-x-5)-\frac{14}{5}=\varrho_n(x)$ whenever $x\ge0$. Therefore, we can construct a $\DUAF_n$--activated network of width $2$ and depth $1$ such that
\[
\DUAF_n(-Nx-5)-2N\cdot \DUAF_n\Big(-\frac{x}{2}-5\Big)-\frac{14(2N-1)}{5}= S_{n, N}(x), \qquad x\in[0, \infty).
\]
This proves Lemma~\ref{lemma:steps}.

The last statement from (i) is trivial due to the fact that $\DUAF_n(x)=x$ when $x\in [-1, -4/11]$.

(ii) Observe that
\[
\Big[g_n(2Kx_1+\frac{i_1}{2}),\ldots,g_n(2Kx_d+\frac{i_d}{2})\Big]^{T}=\DUAF_n \circ \Big(2K\cdot \bm{I}_d+\frac{\bm{i}}{2}\Big)\bm{x},
\qquad \bm{x}\in [0, \infty)^d, 
\]
together with $0\le g_n \le 1$. We can compose the above map with the $d$--fold multiplication map, which is a width $d+4$ and depth $2d-2$ $\DUAF_n$--network in $[0,1]^d$ by Lemma~\ref{lemma:poly} from (i) (the proof of Lemma~\ref{lemma:poly}). The resulting architecture is then a $\DUAF_n$--activated network of width $\max\{d+4, d\}=d+4$ and depth $(2d-2)+1=2d-1$ that exactly represents $ \Psi_{\bm{i},n,K}(\bm{x})$ for $\bm{x}\in [0, \infty)^d$.

\end{proof}

\medskip
\noindent
In the sequel, whenever we refer to Lemma~\ref{lemma:point-fit}, 
Lemma~\ref{lemma:steps}, Lemma~\ref{lemma:multi}, or Lemma~\ref{lemma:poly}
in the context of $\DUAF_n$-activated networks, we implicitly invoke 
Lemma~\ref{lem:sigma_n-properties} and use the corresponding realizations 
without further mention.

The last ingredient is analytic rather than algebraic.  In the gluing argument,
derivatives fall both on the local approximants and on the gates.  The following
lemma records boundedness, smoothness, partition-of-unity, and localization
properties of $\Psi_{\bm{i},n,K}$, which are exactly the facts needed when applying
Leibniz's rule in Theorems~\ref{thm:shift-W1-d} and~\ref{thm:shift-Ws-d}.

\begin{lemma}[Basic properties of $g_n$ and $\Psi_{\bm{i},n,K}$ ]\label{lem:gn-properties}
Let $n\in \mathbb N \cup\{\infty\}$ and let $g_n$ be defined as in \eqref{eq:def-qs}. Then:
\begin{enumerate}

  \item[\textnormal{(i)}] \textbf{$L^{\infty}$-bounds.}
  For $t\in\mathbb R$ and $\bm{x}\in\mathbb R^d$, we have
  \[
    0 \,\le\, g_n(t) \,\le\, 1,\qquad 0\le\Psi_{\bm{i},n,K}(\bm{x})\le 1.
  \]

  \item[\textnormal{(ii)}] \textbf{$C^n$--regularity.}
  The function $g_n$ is of class $C^n$ on $\mathbb R$. Therefore, by construction, $\Psi_{\bm{i},n,K}\in C^n(\mathbb R^d)$

  \item[\textnormal{(iii)}] \textbf{derivative bounds.}
 Let $\bm{\gamma}\in\mathbb N^d$ be a multi–index with $|\bm{\gamma}|\le n$. Then
  \[
    \|D^{\bm{\gamma}} \Psi_{\bm{i},n,K}\|_{L^\infty(\mathbb R^d)}
    \;\le\;
    C_{n,d}\,K^{|\bm{\gamma}|}, \qquad
        \|D^{\bm{\gamma}} \Psi_{\bm{i},\infty,K}\|_{L^\infty(\mathbb R^d)}
    \;\le\;
    C_{\infty,d}\,K^{|\bm{\gamma}|},
  \]
  where $C_{n,d}>0$ depends only on $n$ and $d$ (not on $K$ or $\bm{i}$).

  \item[\textnormal{(iv)}] \textbf{partition of unity.}
  for all $\bm{x}\in\mathbb R^d$, we have
  \[
    \sum_{\bm{i}\in\{1,2,3,4\}^d} \Psi_{\bm{i},n,K}(\bm{x})=1.
  \]
  \item[\textnormal{(v)}] \textbf{localization}
  Let $\bm{\tau}_{\bm{i}}:= \frac{1}{4K}(i_1,\ldots,i_d)\in\mathbb{R}^d$,  $|\alpha|\le n$, and $K>10$, then for $\bm{x}\in[0,\frac{9}{10}]^d$ we always have
  \[
  D^\alpha \Psi_{\bm{i},n,K}(\bm{x})\neq0
  \ \Longrightarrow\
  \bm{x} + \bm{\tau}_{\bm{i}}\in\Omega_{\bm{m}_*}.
\]
where $\Omega_{\bm{m}_*}$ is the region defined by \eqref{eq:omega} below.
\end{enumerate}
\end{lemma}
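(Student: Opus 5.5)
The plan is to reduce everything to one-dimensional facts about $g_n$, using two properties of $q_n$: it is nondecreasing with values in $[0,1]$, and it is flat at the endpoints. Flatness means $q_n^{(k)}(0)=q_n^{(k)}(1)=0$ for $1\le k\le n$. For finite $n$ this holds because $q_n'$ is a normalized multiple of $t^n(1-t)^n$ on $(0,1)$, so every derivative of order $1\le k\le n$ carries a factor vanishing at both endpoints. For $n=\infty$ it is the standard flatness of $e^{-1/u}$-type profiles.

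Part (i) is immediate: on $[0,\tfrac12]$ and $(\tfrac12,1]$ the gate $g_n$ is $q_n$ or $1-q_n$, each in $[0,1]$, and it is $0$ on $(1,2]$. Hence $0\le g_n\le 1$, and $\Psi_{\bm{i},n,K}$, as a product of such factors, also lies in $[0,1]$.

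For part (ii), I check the $C^n$-matching of one-sided derivatives at the junctions $t=0,\tfrac12,1,2$ of one period. At $t=\tfrac12$ both pieces equal $1$, since $q_n(1)=1-q_n(0)=1$. At $t=0,1,2$ both sides equal $0$ (using $2$-periodicity at $t=0\equiv 2$). In every case all derivatives of order $1,\dots,n$ vanish on both sides, by the endpoint flatness above. So $g_n\in C^n(\mathbb R)$, and $\Psi_{\bm{i},n,K}\in C^n(\mathbb R^d)$ as a finite product of $C^n$ functions of affine maps.

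Part (iii) follows from the chain rule and the product structure:
\[
D^{\bm{\gamma}}\Psi_{\bm{i},n,K}(\bm{x})=\prod_{j=1}^d (2K)^{\gamma_j}\,g_n^{(\gamma_j)}\Bigl(2Kx_j+\tfrac{i_j}{2}\Bigr).
\]
Each $g_n^{(k)}$ with $k\le n$ is continuous and periodic, hence bounded by $2^k\|q_n^{(k)}\|_{L^\infty}<\infty$. This gives the bound with
\[
C_{n,d}=\max_{|\bm{\gamma}|\le n}\prod_j 2^{2\gamma_j}\|q_n^{(\gamma_j)}\|_\infty .
\]
For $n=\infty$ the same formula applies with $|\bm\gamma|$ arbitrary, the constant then depending on $\bm\gamma$.

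For part (iv), I first prove the one-dimensional identity
\[
\sum_{i=1}^4 g_n\bigl(u+\tfrac i2\bigr)=1\qquad\text{for all } u\in\mathbb R.
\]
The four arguments run through all four half-period translates modulo $2$, so the sum is $\tfrac12$-periodic in $u$ and it suffices to take $u\in[0,\tfrac12)$. There, up to relabeling modulo $2$, the four values are
\[
g_n(u)=q_n(2u),\quad g_n(u+\tfrac12)=1-q_n(2u),\quad g_n(u+1)=0,\quad g_n(u+\tfrac32)=0,
\]
which sum to $1$. Taking $u=2Kx_j$ and multiplying the $d$ one-dimensional identities, then expanding the product of sums, yields $\sum_{\bm{i}\in\{1,2,3,4\}^d}\Psi_{\bm{i},n,K}=1$.

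Part (v) is the step I expect to be the main obstacle, because it depends on the precise form of $\Omega_{\bm m_*}$ in \eqref{eq:omega}. I assume it is the union of active half-cells $\prod_j\bigl[\tfrac{m_j}{K},\tfrac{2m_j+1}{2K}\bigr]$ with $0\le m_j\le K-1$, i.e., the $d$-dimensional analogue of $\Omega_1$. The argument runs in three steps.
\begin{itemize}
\item If $D^{\bm\alpha}\Psi_{\bm{i},n,K}(\bm{x})\neq0$ with $|\bm\alpha|\le n$, the derivative formula forces $g_n^{(\alpha_j)}(2Kx_j+\tfrac{i_j}{2})\neq0$ for every $j$.
\item Since $g_n$ and all its derivatives up to order $n$ vanish on $[1,2]+2\mathbb Z$ (including the endpoints, by flatness), this gives $2Kx_j+\tfrac{i_j}{2}\in(2m_j,2m_j+1)$ for some integer $m_j$.
\item Dividing by $2K$ gives $x_j+\tfrac{i_j}{4K}\in\bigl(\tfrac{m_j}{K},\tfrac{2m_j+1}{2K}\bigr)$. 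The range condition on $m_j$ comes from the bounds: $x_j\ge0$ and $\tau_{\bm i,j}>0$ give $m_j\ge0$, while $x_j\le\tfrac9{10}$ and $\tau_{\bm i,j}\le\tfrac1K<\tfrac1{10}$ (since $K>10$) give $x_j+\tau_{\bm i,j}<1$, hence $m_j\le K-1$.
\end{itemize}
Thus $\bm{x}+\bm\tau_{\bm i}\in\Omega_{\bm m_*}$. If \eqref{eq:omega} instead uses a slightly different index convention (for instance a shift tied to $\bm m_*$), the same computation applies after relabeling $m_j$.
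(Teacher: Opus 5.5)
Your proposal is correct and follows essentially the same route as the paper: you reduce to one-dimensional facts about $g_n$ (endpoint flatness of $q_n$ and the four-shift identity $\sum_{i=1}^{4} g_n(u+\tfrac{i}{2})=1$) and lift them to $\Psi_{\bm{i},n,K}$ through its product structure. The only difference is in (v), which you argue directly from the vanishing of $g_n^{(k)}$, $k\le n$, on $[1,2]+2\mathbb{Z}$, whereas the paper argues by contrapositive (a coordinate of $\bm{x}+\bm{\tau}_{\bm{i}}\in[0,1)^d$ lying in $\Omega_2\setminus\Omega_1$ forces $\Psi_{\bm{i},n,K}\equiv 0$ near $\bm{x}$); your remark that for $n=\infty$ the constant in (iii) must depend on $\bm{\gamma}$ is also correct and harmless, since only $|\bm{\gamma}|\le s-1$ is used later.
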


\begin{proof}[Sketch of proof] 
Properties (i) and (ii) are straightforward from the definitions. 
For (iii), one applies the chain rule and the product rule to the product representation of $\Psi_{\bm{i},n,K}$. 
Property (iv) follows from the one-dimensional identity (see Figure~\ref{fig:gn-partition} for an illustration)
\[
g_n\Bigl(t+\frac{1}{2}\Bigr)+g_n\Bigl(t+1\Bigr)+g_n\Bigl(t+\frac{3}{2}\Bigr)+g_n\Bigl(t+2\Bigr)=1,
\qquad t\in\mathbb R,
\]
and then taking products over the coordinates. 
Finally, (v) follows from the support properties of $g_n$, since if $\bm{x} + \bm{\tau}_{\bm{i}}\notin\Omega_{\bm{m}_*}$ then $x_j+\tau_{\bm{i},j}\in\Omega_2-\Omega_1$ for some $1\le j\le d$. This is due to the fact that $\bm{x} + \bm{\tau}_{\bm{i}}\in[0, 1)^d$ whenever $K>10$ and $\Omega_1\cup \Omega_2=[0, 1]$. But then the $j$-th component of $ \Psi_{\bm{i},n,K}$ satisfies
\[
g_n\Bigl(2Kx_j+\tfrac{i_j}{2}\Bigr)=g_n\bigl(2K(x_j+\tau_{\bm{i},j})\bigr)= 0,
\]
and $g_n\bigl(2Kx+\tfrac{i_j}{2}\bigr)\equiv 0$ on a small neighborhood of $x_j$. This forces $\Psi_{\bm{i},n,K}\equiv 0$ on a small neighborhood of $\bm{x}$, which leads to all vanishing derivatives. This proves the contrapositive of (v).
\begin{figure}[htbp]
    \centering
    \includegraphics[width=\textwidth]{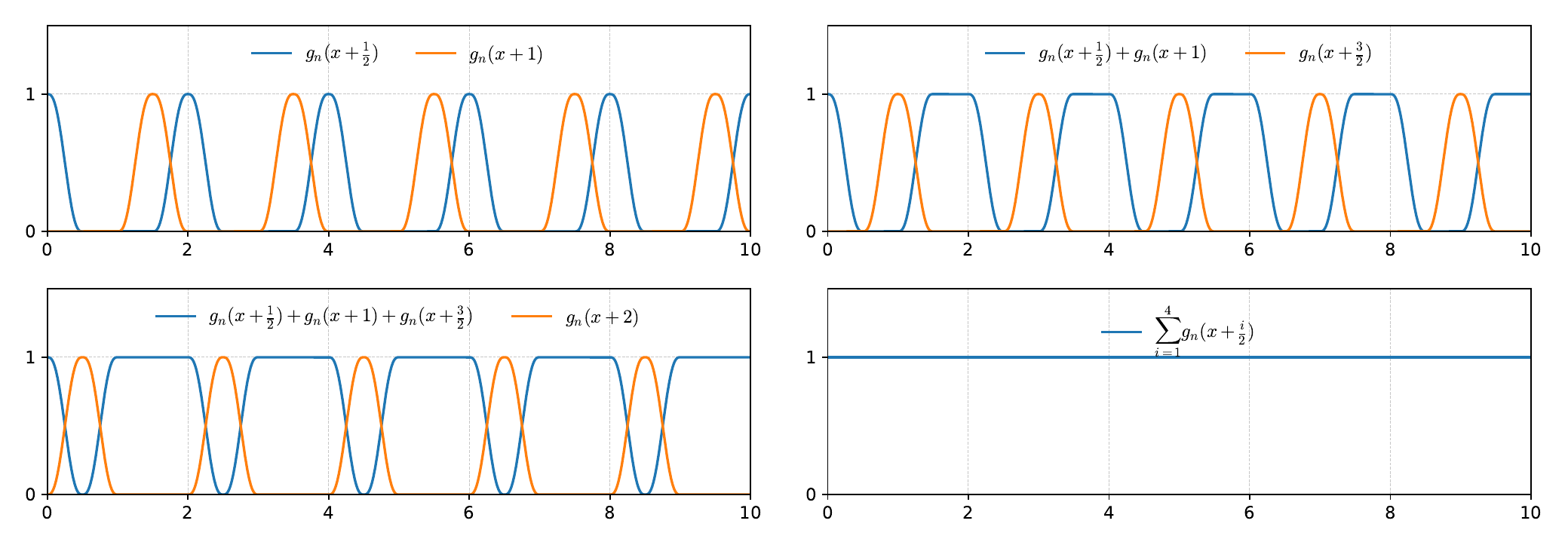}
    \caption{Illustration of the partition-of-unity structure generated by the gates $g_n$ for $n=2$}
    \label{fig:gn-partition}
\end{figure}
\end{proof}

\section{Proof of Theorem~\ref{thm:global-w2} and Theorem~\ref{thm:global-ws}}\label{sec:proof-main}

The proof is organized around a local-to-global construction.  On each half-grid
region $\Omega_{\bm{m}}$, we approximate $f$ by a Taylor-type polynomial whose
coefficients are constant on the grid cells.  The Bramble--Hilbert lemma gives the
Sobolev error $K^{r-s}$ for this local polynomialization.  The nonstandard
activation is then used to encode the coefficient tables: a step network identifies
the relevant cell, the denseness lemma fits arbitrary cell values, and multiplication
networks attach the monomials $\bm{x}^{\bm{\alpha}}$ (see Section~\ref{sec:loc-omega}).

The local approximants only work on disjoint half-grid regions, so we cover
$(0,9/10)^d$ by $4^d$ shifted copies and glue them with the gates
$\Psi_{\bm{i},n,K}$.  The key point is that the derivatives of the gates grow like
powers of $K$, while the local error is chosen of order $K^{-s}$; after Leibniz's
rule the total $W^{s-1,\infty}$ error is still $O(K^{-1})$.  Taking $K$ large gives
arbitrary accuracy, but $K$ enters only through the network parameters, not through
the width or depth (see Section~\ref{sec:sobolev-subcube}).

The final passage from $(0,9/10)^d$ to $(a,b)^d$ is an affine
change of variables and a normalization of the Sobolev norm (see Section~\ref{sec:global-sobolev-cube}).

\begin{definition}\label{def:Omega-m}
Let $K,d\in\mathbb{N}$. 
For $k=0,1,\ldots,K-1$ define
\[
  I_{1,k}
  := \Big[\frac{k}{K},\,\frac{2k+1}{2K}\Big],\quad 
  I_{2,k}
  := \Big[\frac{2k+1}{2K},\,\frac{2k+2}{2K}\Big], \qquad
    \Omega_1 := \bigcup_{k=0}^{K-1} I_{1,k},
  \quad
  \Omega_2 := \bigcup_{k=0}^{K-1} I_{2,k}.
\]
For any $\bm{m}=(m_1,\ldots,m_d)\in\{1,2\}^d$, define
\begin{equation}\label{eq:omega}
  \Omega_{\bm{m}} := \prod_{j=1}^d \Omega_{m_j},\ \ \ \ \ \ \ \ \ \Omega_{\bm{m}_*} := \Omega_1 \times\cdots\times \Omega_1=\Omega_1^d.
\end{equation}
For any $\bm{i}=(i_1,\ldots,i_d)\in\{0,\ldots,K-1\}^d$, define the sub-rectangle
\[
  Q_{\bm{i},\bm{m}}
  := I_{m_1,i_1}\times\cdots\times I_{m_d,i_d}\subset \Omega_{\bm{m}}.
\]
\end{definition}

Since the sets $\Omega_{\bm{m}}$ are finite unions of half-grid rectangles, Sobolev norms on $\Omega_{\bm{m}}$ are understood cellwise throughout this section:
\[
\|u\|_{W^{r,\infty}(\Omega_{\bm{m}})}
:=\max_{\bm{i}\in\{0,\ldots,K-1\}^d}
\|u\|_{W^{r,\infty}(\operatorname{int}Q_{\bm{i},\bm{m}})},
\]
with the analogous convention for Sobolev seminorms.

We shall repeatedly use the following elementary consequence of this convention.  If
$a$ is constant on each cell $\operatorname{int}Q_{\bm{i},\bm{m}}$ and $P$ is a polynomial,
then, for every integer $s\ge1$ and $0\le r\le s-1$,
\begin{equation}\label{eq:cellwise-product-bound}
\|aP\|_{W^{r,\infty}(\Omega_{\bm{m}})}
\le C(r,s,d)\|a\|_{L^\infty(\Omega_{\bm{m}})}
\max_{|\beta|\le r}\|D^\beta P\|_{L^\infty((0,1)^d)}.
\end{equation}
Indeed, on each open cell the distributional derivatives reduce to ordinary
polynomial derivatives, and the cellwise norm then takes the maximum over cells.

\subsection{\texorpdfstring{Local Sobolev approximation on $\Omega_{\bm{m}}$}{Local approximation on Omega m}}\label{sec:loc-omega}

\begin{theorem}\label{thm:taylor-g-alpha-K} 

Let $s\in\mathbb{N}$ and $f\in W^{s,\infty}((0,1)^d)$ with 
\[
M := \|f\|_{W^{s,\infty}((0,1)^d)}
    = \max_{|\gamma|\le s}\|\partial^\gamma f\|_{L^\infty((0,1)^d)}.
\]
For any $K\in\mathbb{N}$, there exists a function on the half-grid region
$\Omega_{\bm{m}_*}$ of the form
\[
f_{K,\bm{m}_*} (\bm{x})=\sum_{|\bm{\alpha}|\le s-1} g_{f,\bm{\alpha},\bm{m}_*}(\bm{x})\,\bm{x}^{\bm{\alpha}},
\]
where each $g_{f,\bm{\alpha},\bm{m}_*}$ is piecewise constant on the grid cubes
$Q_{\bm{i},\bm{m}_*}$, such that:
\[
    |g_{f,\alpha,\bm{m}_*}(\bm{x})|
    \;\le\; C_0\,M,
    \quad \forall\,|\alpha|\le s-1, 
    \qquad
     \|f-f_{K,\bm{m}_*}\|_{W^{r,\infty}(\Omega_{\bm{m}_*})}
    \;\le\;
    C_{0}\,M\,K^{r-s}, \quad \forall\,r\in\mathbb N_{0}, r\le s,
\]
where the constant $C_0$ depends only on $s,d$ and is independent of $K$.
\end{theorem}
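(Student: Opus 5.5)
The plan is to build $f_{K,\bm{m}_*}$ cell by cell from averaged Taylor polynomials, using the Bramble--Hilbert Lemma~\ref{lem:bramble-hilbert} on each active half-cell $Q_{\bm{i},\bm{m}_*}$. Each such cell is a cube of side $h_0:=1/(2K)$, contained in $[0,1]^d$. Let $B_{\bm{i}}$ be the closed Euclidean ball of radius $h_0/4$ centred at the centre of the cell, so that $B_{\bm{i}}\subset\operatorname{int}Q_{\bm{i},\bm{m}_*}$ and the cell is star-shaped with respect to $B_{\bm{i}}$. The admissible radius is $r_{\max}^*=h_0/2$, so $h_0/4\ge\frac12 r_{\max}^*$. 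The chunkiness parameter is $\gamma=\sqrt d\,h_0/(h_0/2)=2\sqrt d$, which is independent of $K$ and $\bm{i}$; this uniformity is what makes all constants depend only on $s,d$.

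On $\operatorname{int}Q_{\bm{i},\bm{m}_*}$ I would set $f_{K,\bm{m}_*}:=Q^s_{B_{\bm{i}}}f$. Expanding this polynomial in the global monomials $\bm{x}^{\bm{\alpha}}$ gives $Q^s_{B_{\bm{i}}}f(\bm{x})=\sum_{|\bm{\alpha}|\le s-1}c_{f,\bm{\alpha}}^{(\bm{i})}\bm{x}^{\bm{\alpha}}$. Defining $g_{f,\bm{\alpha},\bm{m}_*}:=c^{(\bm{i})}_{f,\bm{\alpha}}$ on $Q_{\bm{i},\bm{m}_*}$ then produces coefficient functions that are piecewise constant on the grid cubes, as the statement requires. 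On overlapping boundaries one may fix any convention, since norms are taken cellwise on open cells.

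\textbf{Coefficient bound.} The preliminaries state that if $B\subset(0,1)^d$ then $|c_{f,\bm{\alpha}}|\le C_2(s,d)\|f\|_{W^{s-1,\infty}(B)}\le C_2(s,d)M$. The reason is that expanding $(\bm{x}-\bm{y})^{\bm{\alpha}}$ binomially yields coefficients that are averages (against the probability density $b_B$) of $D^{\bm{\alpha}+\bm{\beta}}f(\bm{y})\,\bm{y}^{\bm{\beta}}$ with $|\bm{y}|\le 1$ componentwise. The factorials are then collected into $C_2$. This works because the monomials are global, centred at $0$ rather than at the cell centre, and $B\subset(0,1)^d$ keeps the powers $\bm{y}^{\bm{\beta}}$ bounded by $1$. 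So the first claim holds with $C_0\ge C_2(s,d)$.

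\textbf{Error bound.} For $0\le r\le s$, Lemma~\ref{lem:bramble-hilbert} applied on each open cell gives
\[
|f-Q^s_{B_{\bm{i}}}f|_{W^{r,\infty}(Q_{\bm{i},\bm{m}_*})}\le C_{\mathrm{BH}}(s,d,2\sqrt d)\,(\sqrt d\,h_0)^{s-r}|f|_{W^{s,\infty}}\le C\,M\,K^{r-s}.
\]
To get the full norm, I would take the maximum over the seminorms of orders $r'\le r$. Each is bounded by $CMK^{r'-s}\le CMK^{r-s}$ since $K\ge1$. Taking the maximum over cells, per the cellwise convention, gives the claimed bound with $C_0$ depending only on $s,d$.

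I expect the main obstacle to be ensuring the constants are genuinely uniform in $K$ and $\bm{i}$. This rests on the fixed chunkiness parameter, since the Bramble--Hilbert constant depends only on $\gamma$, $s$ and $d$, together with the ball condition $r\ge\frac12 r^*_{\max}$. It also rests on expanding in uncentred monomials so that the coefficient bound does not blow up like a power of $K$. A minor technical point is that $f$ restricted to a cell lies in $W^{s,\infty}$ of that cell, so Bramble--Hilbert applies there, and the averaged Taylor polynomial is well defined for $W^{s,\infty}$ functions via the smooth averaging kernel.
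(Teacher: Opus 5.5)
Your proposal is correct and follows the paper's proof: the same averaging balls of radius $1/(8K)$ centred at the half-cells, the same chunkiness parameter $2\sqrt d$ fed into Lemma~\ref{lem:bramble-hilbert}, and the same coefficient bound via $C_2(s,d)$. The differences are only cosmetic simplifications, and both are legitimate. You work with $f$ itself on each cell instead of a Sobolev extension $Ef$, which is fine because the balls lie in $(0,1)^d$. You also define $f_{K,\bm{m}_*}$ cellwise rather than through the hat-function partition of unity $h_{\bm{i}}$, which is fine because the closed half-cells are pairwise disjoint, so your boundary-convention remark is not even needed.
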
 
\begin{proof}
Let $E:W^{s,\infty}((0,1)^d)\to W^{s,\infty}(\mathbb R^d)$ be an extension operator and set
$\tilde f:=Ef$. Denote by $C_E$ the operator norm of $E$. In particular,
\[
\|\tilde{f}\|_{W^{s,\infty}(\mathbb R^d)}\le C_E\,\|f\|_{W^{s,\infty}((0,1)^d)}=C_E M.
\]

For $\bm{i}=(i_1,\ldots,i_d)\in\{0,1,\ldots,K-1\}^d$, set the center $\bm{x}_{\bm{i}}:=\Big(\frac{4i_1+1}{4K},\ldots,\frac{4i_d+1}{4K}\Big)$. Then we have
\[
\overline{B}_{\frac{1}{4K},\|\cdot\|_{\ell^\infty}}(\bm{x}_{\bm{i}})
=\prod_{j=1}^d\Big[\frac{i_j}{K},\frac{2i_j+1}{2K}\Big]= I_{1,i_1}\times\cdots\times I_{1,i_d}=Q_{\bm{i},\bm{m}_*}.
\]

Define the averaging ball $B_{\bm{i},K}$ and the averaged Taylor polynomial $p_{f,\bm{i}}$ of order $s$ (hence of degree at most $s-1$) on $B_{\bm{i},K}$ by
\[
B_{\bm{i},K}:=\overline{B}_{\frac{1}{8K},|\cdot|}(\bm{x}_{\bm{i}}),\qquad
p_{f,\bm{i}}
:=\int_{B_{\bm{i},K}} T^{\,s}_{\bm{y}}\tilde f(\bm{x})\,b_{\frac{1}{8K}}(\bm{y})\,d\bm{y}.
\]

\begin{figure}[h]
    \centering
    \includegraphics[width=0.4\textwidth]{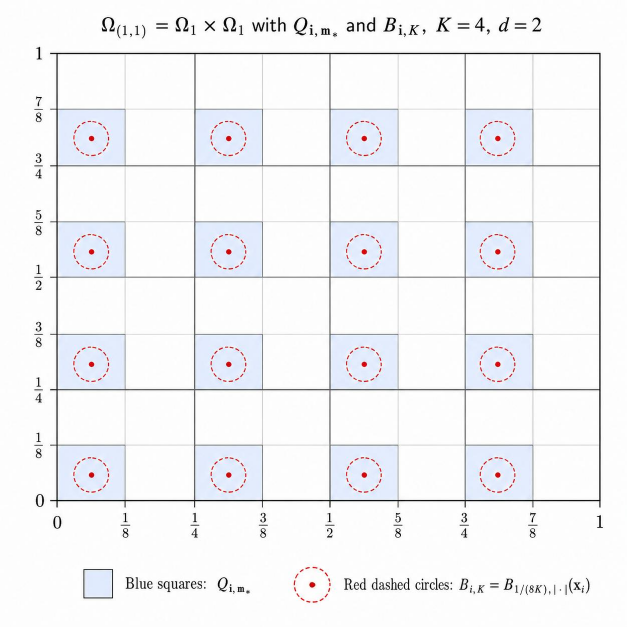}
    \caption{The region $\Omega_{(1,1)}$ with local rectangles and averaging balls}
    \label{fig:omega-11}
\end{figure}

Then $p_{f,\bm{i}}$ is a polynomial of degree at most $s-1$ and can be written as
\[
p_{f,\bm{i}}(\bm{x})=\sum_{|\alpha|\le s-1} c_{f,\bm{i},\alpha}\,\bm{x}^\alpha, \qquad \text{where}  \ \
|c_{f,\bm{i},\alpha}|\le C_2(s,d)\,C_E M.
\]

The above choice of $B_{\bm{i},K}$ is made so that the averaged Taylor polynomial
gives a uniform local approximation on the cube $Q_{\bm{i},\bm{m}_*}$ at scale $K^{-1}$.
Since
\[
\operatorname{diam}(Q_{\bm{i},\bm{m}_*})=\frac{\sqrt d}{2K},
\qquad
r_{\max}^\ast(Q_{\bm{i},\bm{m}_*})=\frac{1}{4K},
\qquad
\gamma(Q_{\bm{i},\bm{m}_*})
:=\frac{\operatorname{diam}(Q_{\bm{i},\bm{m}_*})}{r_{\max}^\ast(Q_{\bm{i},\bm{m}_*})}
=2\sqrt d,
\]
and $\frac{1}{8K}=\frac{1}{2}\,r_{\max}^\ast(Q_{\bm{i},\bm{m}_*})$, applying the Bramble--Hilbert lemma (Lemma~\ref{lem:bramble-hilbert}), we obtain that for every
$0\le r\le s$,
\[
|\tilde f-p_{f,\bm{i}}|_{W^{r,\infty}(Q_{\bm{i},\bm{m}_*})} \le C_{\mathrm{BH}}(s,d,2\sqrt d)\,C_E M\,K^{r-s}.
\]
Consequently,
\[
\|f-p_{f,\bm{i}}\|_{W^{r,\infty}(Q_{\bm{i},\bm{m}_*})}
\le C_{\mathrm{BH}}(s,d,2\sqrt d)\,C_E M\,K^{r-s},\qquad 0\le r\le s.
\]

For $i=0,1,\ldots,K-1$, define $h_i:[0,1]\to[0,1]$ by
\[
h_i(x):=
\begin{cases}
1, & x\in I_{1,i}:=\big[\frac{i}{K},\frac{2i+1}{2K}\big],\\[0.4em]
2Kx-(2i-1), & x\in I_{2,i-1}:=\big[\frac{2i-1}{2K},\frac{i}{K}\big]\ \ (i\ge1),\\[0.4em]
(2i+2)-2Kx, & x\in I_{2,i}:=\big[\frac{2i+1}{2K},\frac{i+1}{K}\big]\ \ (i\le K-2),\\[0.4em]
1, & x\in I_{2,K-1}:=\big[\frac{2K-1}{2K},1\big]\ \ (i=K-1),\\[0.4em]
0, & \text{otherwise.}
\end{cases}
\]
A direct computation shows that $\{h_i\}_{i=0}^{K-1}$ forms a partition of unity on $[0,1]$, together with $h_i(x)=1, \text{ for }x\in I_{1,i}$ and $h_i(x)=0, \text{ for }x\in I_{1,j}$ when $j\neq i$.

For $\bm{i}=(i_1,\ldots,i_d)\in\{0,\ldots,K-1\}^d$ define
\[
h_{\bm{i}}(\bm{x}):=\prod_{j=1}^d h_{i_j}(x_j).
\label{partition1-new}
\]
Then $\big\{h_{\bm{i}}: \bm{i}\in\{0,\ldots,K-1\}^d\big\}$ forms a partition of unity on $[0,1]^d$, together with $h_{\bm{i}}=1$ on $Q_{\bm{i},\bm{m}_*}$ and $h_{\bm{i}}=0$ on $Q_{\bm{j},\bm{m}_*}$ for all $\bm{j} \neq \bm{i}$.
In particular,
\[
\|h_{\bm{i}}(f-p_{f,\bm{i}})\|_{W^{r,\infty}(Q_{\bm{i},\bm{m}_*})}
= \|f-p_{f,\bm{i}}\|_{W^{r,\infty}(Q_{\bm{i},\bm{m}_*})}
\le C_{\mathrm{BH}}(s,d,2\sqrt d)\,C_E M\,K^{r-s},
\qquad 0\le r\le s,
\]
which is due to $h_{\bm{i}}=1$ on $Q_{\bm{i},\bm{m}_*}$.

Finally, define
\[
f_{K,\bm{m}_*}(\bm{x}):=\sum_{\bm{i}\in\{0,\ldots,K-1\}^d} h_{\bm{i}}(\bm{x})\,p_{f,\bm{i}}(\bm{x}).
\]
Using the mutual disjointness of $Q_{\bm{i},\bm{m}_*}$, $\bigcup_{\bm{i}}Q_{\bm{i},\bm{m}_*}=\Omega_{\bm{m}_*}$, and
${\rm supp}\,h_{\bm{i}}\cap\Omega_{\bm{m}_*}=Q_{\bm{i},\bm{m}_*}$, we have
\begin{align*}
\|f-f_{K,\bm{m}_*}\|_{W^{r,\infty}(\Omega_{\bm{m}_*})}
&\le \max_{\bm{i}\in\{0,\ldots,K-1\}^d}
\|h_{\bm{i}}( f-p_{f,\bm{i}})\|_{W^{r,\infty}(Q_{\bm{i},\bm{m}_*})}\notag\\
&\le C_{\mathrm{BH}}(s,d,2\sqrt d)\,C_E M\,K^{r-s},\qquad 0\le r\le s.
\end{align*}

Last of all,
\begin{align*}
f_{K,\bm{m}_*}(\bm{x})
&=\sum_{\bm{i}} h_{\bm{i}}(\bm{x})\,p_{f,\bm{i}}(\bm{x})
=\sum_{\bm{i}}\sum_{|\alpha|\le s-1} h_{\bm{i}}(\bm{x})\,c_{f,\bm{i},\alpha}\,\bm{x}^\alpha \notag\\
&=\sum_{|\alpha|\le s-1}\Big(\sum_{\bm{i}} h_{\bm{i}}(\bm{x})\,c_{f,\bm{i},\alpha}\Big)\bm{x}^\alpha
=:\sum_{|\alpha|\le s-1} g_{f,\alpha,\bm{m}_*}(\bm{x})\,\bm{x}^\alpha,
\end{align*}
with $|g_{f,\alpha,\bm{m}_*}(\bm{x})|\le C_2(s,d)\,C_E M$ for $\bm{x}\in\Omega_{\bm{m}_*}$.
Moreover, $g_{f,\alpha,\bm{m}_*}$ is a step function on $\Omega_{\bm{m}_*}$:
for $\bm{x}\in Q_{\bm{i},\bm{m}_*}$ one has $h_{\bm{i}}(\bm{x})=1$ and
$h_{\bm{j}}(\bm{x})=0$ for $\bm{j}\neq \bm{i}$, hence
\[
g_{f,\alpha,\bm{m}_*}(\bm{x})=c_{f,\bm{i},\alpha},
\qquad \bm{x}\in Q_{\bm{i},\bm{m}_*}.
\]
Choosing $C_0(s,d)=C_E\max\{C_2(s,d),C_{\mathrm{BH}}(s,d,2\sqrt d)\}$, this completes the proof.
\end{proof}

\begin{proposition}\label{prop:local}
Let $\epsilon>0$, $K\in \mathbb{N}$, and $f\in W^{s,\infty}((0,1)^d)$ with $\|f\|_{W^{s,\infty}((0,1)^d)}\le 1$. In the $\EUAF$ case assume $s=2$ and take $r=0,1$; in the $\DUAF_n$ case assume $n\in\mathbb N\cup\{\infty\}$, $1\le s\le n+1$, and take $0\le r\le s-1$. Then there exists a network $\psi_{\bm{m}_*}$ on the half-grid region $\Omega_{\bm{m}_*}$, activated by the corresponding activation, with width $\le 4^{-d}N_{s,d}-d-4$ and depth $\le \max\left\{ 2s+2d-4, 5\right\}$ such that
\begin{align*}
\|f-\psi_{\bm{m}_*}\|_{W^{r,\infty}(\Omega_{\bm{m}_*})}
    \;\le\;
    \epsilon+C_{0}\,K^{r-s},
\end{align*}
for every admissible $r$ above, where the constant $C_0$ depends only on $s,d$.
    
\end{proposition}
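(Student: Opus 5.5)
The plan is to realize the piecewise-polynomial approximant $f_{K,\bm{m}_*}=\sum_{|\bm{\alpha}|\le s-1} g_{f,\bm{\alpha},\bm{m}_*}\bm{x}^{\bm{\alpha}}$ of Theorem~\ref{thm:taylor-g-alpha-K} by a network. On each open cell the realization should be exact up to an error $\epsilon'$ in the constant coefficients. The bound then follows from the triangle inequality,
\[
\|f-\psi_{\bm{m}_*}\|_{W^{r,\infty}(\Omega_{\bm{m}_*})}\le \|f-f_{K,\bm{m}_*}\|_{W^{r,\infty}(\Omega_{\bm{m}_*})}+\|f_{K,\bm{m}_*}-\psi_{\bm{m}_*}\|_{W^{r,\infty}(\Omega_{\bm{m}_*})},
\]
where the first term is at most $C_0K^{r-s}$ because $M\le1$.

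\textbf{Step 1 (address).} Apply Lemma~\ref{lemma:steps} with $N=K$ in each coordinate. This gives $d$ width-two, depth-one subnets with $\varphi_1(x_j)=i_j$ on $I_{1,i_j}$. An affine layer then forms the single integer $p(\bm{x})=1+\sum_j i_jK^{j-1}\in\{1,\ldots,K^d\}$, which is constant on each cell $Q_{\bm{i},\bm{m}_*}$.

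\textbf{Step 2 (readout).} For each $|\bm{\alpha}|\le s-1$, rescale the table values: $\xi_p:=(c_{f,\bm{i},\bm{\alpha}}+C_0)/(2C_0)\in[0,1]$. Lemma~\ref{lemma:point-fit}, valid for $\DUAF_n$ by Lemma~\ref{lem:sigma_n-properties}(i), gives a width-one, depth-two subnet $\varphi_{\bm{\alpha}}$ with $|\varphi_{\bm{\alpha}}(p)-\xi_p|<\epsilon'$. Undoing the affine rescaling yields $\tilde g_{\bm{\alpha}}(\bm{x})$, which is constant on each cell and satisfies $|\tilde g_{\bm{\alpha}}-g_{f,\bm{\alpha},\bm{m}_*}|\le 2C_0\epsilon'$ and $|\tilde g_{\bm{\alpha}}|\le C_0+1$.

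\textbf{Step 3 (assembly).} Realize $\bm{x}^{\bm{\alpha}}$ exactly on $[0,1]^d$ by Lemma~\ref{lemma:poly}. Multiply it by $\tilde g_{\bm{\alpha}}$ using the exact product $\varphi_4$ of Lemma~\ref{lemma:multi} with $M=C_0+1$, then sum over $\bm{\alpha}$. The output is $\psi_{\bm{m}_*}=\sum_{\bm{\alpha}}\tilde g_{\bm{\alpha}}\bm{x}^{\bm{\alpha}}$ exactly on $\Omega_{\bm{m}_*}$. Apply \eqref{eq:cellwise-product-bound} to $a=\tilde g_{\bm{\alpha}}-g_{f,\bm{\alpha},\bm{m}_*}$ and $P=\bm{x}^{\bm{\alpha}}$, whose derivatives are bounded on $(0,1)^d$ by a constant depending only on $s,d$. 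This gives
\[
\|f_{K,\bm{m}_*}-\psi_{\bm{m}_*}\|_{W^{r,\infty}}\le C(s,d)\binom{s+d-1}{d}\,2C_0\epsilon'.
\]
Choosing $\epsilon'$ small enough makes this at most $\epsilon$. The derivative of $\psi$ on a cell involves only the polynomial factors, because $\tilde g_{\bm{\alpha}}$ is locally constant there, so no $K$-dependent derivative blows up.

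\textbf{Main obstacle (architecture bookkeeping).} The real work is checking the width and depth budget. The $\binom{s+d-1}{d}$ monomials, each with its readout and product branch, must run in parallel within width $4^{-d}N_{s,d}-d-4=\frac{d}{d+1}(s+5d+4)\binom{s+d-1}{d}$. The depth must stay $\le\max\{2s+2d-4,5\}$, which requires padding the shorter branches with the identity maps of Lemma~\ref{lem:sigma_n-properties}(i). For $\EUAF$, identity padding on bounded sets must be checked separately. I would handle it with the exact-linear branch $x/(1-x)$ combined with rational inversion, or, more simply, by shifting the readout inputs, since the index and coefficients live on bounded ranges.

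For the depth, the address takes 1 layer, readout 2, and the final product 2, giving about 5; the monomial branch takes $2s+2d-4$. These overlap because the monomial is computed in parallel with the address and readout. This explains the $\max\{2s+2d-4,5\}$.

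For the width, I would count each monomial $\bm{\alpha}$ of degree $\ell$ as at most $\ell+3d+$ a few neurons. Grouping $\ell$-dependent terms and using $\sum_{|\bm{\alpha}|\le s-1}1=\binom{s+d-1}{d}$ together with $\sum|\bm{\alpha}|=\frac{d(s-1)}{d+1}\binom{s+d-1}{d}$ should reproduce the stated factor $\frac{d}{d+1}(s+5d+4)$. The extra $d+4$ in $N_{s,d}$ is reserved for the gate in the later gluing step.
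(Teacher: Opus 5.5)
Your proposal is correct and follows the paper's proof essentially step for step. The shared ingredients are the staircase address $p=1+\sum_j i_jK^{j-1}$, the point-fitting readout of the rescaled table $(c+C_0)/(2C_0)$, exact products with the monomial subnets, and the cellwise product bound \eqref{eq:cellwise-product-bound}. The per-branch width is $|\bm{\alpha}|+5d$, which sums to $\frac{d}{d+1}(s+5d+4)\binom{s+d-1}{d}$; your ``few'' extra neurons are exactly the $2d$ address neurons.

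Two minor bookkeeping points. First, the monomial branch has depth at most $2(s-1)+2d-4=2s+2d-6$, and it is the final product that brings it to $2s+2d-4$. Second, identity padding for $\EUAF$ needs no rational inversion, since $\EUAF(t)=t$ on $[0,1]$.
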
 

\medskip
\noindent
The realizations $\psi_{\bm{m}_*}$ for $\EUAF$- and 
$\DUAF_n$-activated networks are in general different, 
but satisfy the same approximation guarantees.
\medskip
\noindent
\begin{proof}  
Let $\epsilon_1=\min\left\{1,\frac{\epsilon}{2^{s}\max\{1,(s-1)^{s-1}\}s^{d}C_0\binom{s+d-1}{d}}\right\}$, where $C_0$ is the constant in Theorem~\ref{thm:taylor-g-alpha-K}.
By Theorem~\ref{thm:taylor-g-alpha-K} and the choice of $K$, we have
\begin{align*}
    \|f-f_{K,\bm{m}_*}\|_{W^{r,\infty}(\Omega_{\bm{m}_*})}
    \;\le\;
    C_0K^{r-s},
\end{align*}
for every admissible $r$.
It remains to approximate $f_{K,\bm{m}_*}$ by a $\DUAF_n$-activated (or $\EUAF$-activated) network.

To approximate $g_{f,\bm{\alpha},\bm{m}_*}(\bm{x})$ we recall that by Lemma~\ref{lemma:steps} there exists a $\DUAF_n$-activated (or $\EUAF$-activated) network of width $2$ and depth $1$ such that
\[
    \varphi_1(x)=k,\ \ x\in \left[\,\frac{k}{K},\ \frac{2k+1}{2K}\,\right], \ \ k=0, 1,\ldots,K-1.
\]
We define
\[
  \bm{\varphi}_2(\bm{x})=\Big[\frac{\varphi_1(x_1)}{K},\frac{\varphi_1(x_2)}{K},\ldots,\frac{\varphi_1(x_d)}{K}\Big]^\top.
\]
For each $p=0,1,\ldots,K^d-1$, there is a bijection
\[
  \bm{\eta}(p)=[\eta_1,\eta_2,\ldots,\eta_d]\in\{0,1,\ldots,K-1\}^d,
\]
such that $\sum_{j=1}^d\eta_jK^{j-1}=p$. Define
\[
  \xi_{\bm{\alpha},p}
  :=\frac{g_{f,\bm{\alpha},\bm{m}_*}(\frac{\bm{\eta}(p)}{K})+C_0}{2C_0}\in[0,1],
\]
where $C_0$ is the bound of $g_{f,\bm{\alpha},\bm{m}_*}$ in Theorem~\ref{thm:taylor-g-alpha-K}. By
Lemma~\ref{lemma:point-fit}, there is a neural network $\tilde\phi_{\bm{\alpha}}$ of width
1 and depth 2 such that
\[
  |\tilde\phi_{\bm{\alpha}}(p+1)-\xi_{\bm{\alpha},p}|\le \epsilon_1,
  \qquad p=0,1,\ldots,K^d-1.
\]
Set
\[
  \phi_{\bm{\alpha}}(\bm{x})
  =2C_0\,\tilde\phi_{\bm{\alpha}}\Big(1+\sum_{j=1}^dx_jK^j\Big)-C_0,
\]
then
\[
  \bigl|\phi_{\bm{\alpha}}(\frac{\bm{\eta}(p)}{K})-g_{f,\bm{\alpha},\bm{m}_*}(\frac{\bm{\eta}(p)}{K})\bigr|
  = 2C_0|\tilde\phi_{\bm{\alpha}}(p+1)-\xi_{\bm{\alpha},p}|\le2C_0\epsilon_1.
\]
Since $\varphi_1$ has constant steps on $[\frac{k}{K},\ \frac{2k+1}{2K}\,]$, $  \bm{\varphi}_2(\bm{x})=\Big[\frac{\varphi_1(x_1)}{K},\frac{\varphi_1(x_2)}{K},\ldots,\frac{\varphi_1(x_d)}{K}\Big]^\top$ is a piecewise constant function on each disjoint grid of $\Omega_{\bm{m}_*}$, and so does $\phi_{\bm{\alpha}}\circ\bm{\varphi}_2$. Hence both $\phi_{\bm{\alpha}}\circ\bm{\varphi}_2$ and $g_{f,\bm{\alpha},\bm{m}_*}$ are piecewise constant functions on $\Omega_{\bm{m}_*}$ with

\begin{align*}
\|\phi_{\bm{\alpha}}\circ\bm{\varphi}_2-g_{f,\bm{\alpha},\bm{m}_*}\|_{W^{r,\infty}(\Omega_{\bm{m}_*})}
&=\max_{p\in\{0,1,\ldots,K^d-1\}}\bigl|\phi_{\bm{\alpha}}(\frac{\bm{\eta}(p)}{K})-g_{f,\bm{\alpha},\bm{m}_*}(\frac{\bm{\eta}(p)}{K})\bigr|\le 2C_0\epsilon_1.
\end{align*}
Choose the multiplication subnet in Lemma~\ref{lemma:multi} with
$M_{\mathrm{mult}}:=\max\{1,3C_0\}$. On every active cell of $\Omega_{\bm{m}_*}$ we have
$|\phi_{\bm{\alpha}}\circ\bm{\varphi}_2|\le C_0+2C_0\epsilon_1\le 3C_0$ and
$|\bm{x}^{\bm{\alpha}}|\le1$, so the exact multiplication identity is valid throughout
the cellwise estimates below.
We define
\begin{align*}
    \psi_{\bm{m}_*}(\bm{x}):=\sum_{|\bm{\alpha}|\le s-1}\varphi_4(\phi_{\bm{\alpha}}\circ\bm{\varphi}_2(\bm{x}), \varphi_{\bm{\alpha}}(\bm{x}))\overset{[0,1]^d}{=}
\sum_{|\bm{\alpha}|\le s-1} \phi_{\bm{\alpha}}\circ\bm{\varphi}_2(\bm{x})\,\bm{x}^{\bm{\alpha}},
\end{align*}
where $\varphi_4$ and $\varphi_{\bm{\alpha}}$ are $\DUAF_n$-activated (or $\EUAF$-activated) networks defined in Lemma~\ref{lemma:multi} and Lemma~\ref{lemma:poly} respectively.  Since the norm on $\Omega_{\bm{m}_*}$ is cellwise, \eqref{eq:cellwise-product-bound} applies to each product of a piecewise constant coefficient and a monomial. Hence,

\begin{align*}
 \|\psi_{\bm{m}_*}-f_{K,\bm{m}_*}\|_{W^{r,\infty}(\Omega_{\bm{m}_*})}
 &\le
 \sum_{|\bm{\alpha}|\le s-1}
   \|\phi_{\bm{\alpha}}(\bm{\varphi}_2(\bm{x}))\,\bm{x}^{\bm{\alpha}}
    -g_{f,\bm{\alpha},\bm{m}_*}(\bm{x})\,\bm{x}^{\bm{\alpha}}\|_{W^{r,\infty}(\Omega_{\bm{m}_*})}
 \notag\\
 &\le
 2^{s-1}(s-1)^{s-1}s^{d}\,\sum_{|\bm{\alpha}|\le s-1}
   \|\phi_{\bm{\alpha}}(\bm{\varphi}_2(\bm{x}))
     -g_{f,\bm{\alpha},\bm{m}_*}(\bm{x})\|_{W^{r,\infty}(\Omega_{\bm{m}_*})} \notag\\
 &\le 2^{s}(s-1)^{s-1}s^{d}C_0\binom{s+d-1}{d}\epsilon_1 \notag\\
 &\le\epsilon.
\end{align*}
By the triangle inequality, 
\begin{align*}
 \|\psi_{\bm{m}_*}-f\|_{W^{r,\infty}(\Omega_{\bm{m}_*})}
 &\le
 \|\psi_{\bm{m}_*}-f_{K,\bm{m}_*}\|_{W^{r,\infty}(\Omega_{\bm{m}_*})}+\|f-f_{K,\bm{m}_*}\|_{W^{r,\infty}(\Omega_{\bm{m}_*})}
 \notag\\
 &\le
\epsilon+C_{0}\,K^{r-s}.
\end{align*}
Lastly, we go back to each individual construction to compute the width and depth of our architecture $\psi_{\bm{m}_*}$. $\bm{\varphi}_2$ has width $2d$ and depth $1$, $\phi_{\bm{\alpha}}$ has width $d$ and depth $2$, and hence $\phi_{\bm{\alpha}}\circ\bm{\varphi}_2$ has width $2d$ and depth $3$.

Since $\varphi_{\bm{\alpha}}$ has width $\le |\alpha|+3d$ together with depth $\le 2|\alpha|+2d-4\le2s+2d-6$ and $\varphi_4$ has width $6$ and depth $2$, $\varphi_4(\phi_{\bm{\alpha}}\circ\bm{\varphi}_2(\bm{x}), \varphi_{\bm{\alpha}}(\bm{x}))$ has width $\le |\alpha|+5d$ and depth $\le\max\left\{ 2s+2d-6, 3\right\}+2$. Taking summation, we obtain a $\DUAF_n$-activated (or $\EUAF$-activated) network bound at:
\[
\begin{aligned}
\text{Width}(\psi_{\bm{m}_*})
&\le \sum_{|\bm{\alpha}|\le s-1}\big(|\bm{\alpha}|+5d\big)
 = \frac{d}{d+1}(s+5d+4)\,\binom{s+d-1}{d}=4^{-d}N_{s,d}-d-4,\\
\text{Depth}(\psi_{\bm{m}_*})
&\le \max\left\{ 2s+2d-6, 3\right\}+2
=\max\!\left\{ 2s+2d-4,\; 5 \right\}.\\
\end{aligned}
\]
\end{proof}

\subsection{Sobolev approximation on a subcube}\label{sec:sobolev-subcube}

\begin{theorem}[Shift-based $W^{1,\infty}$-approximation in $(0,1)^d$ for $s=2$]
\label{thm:shift-W1-d}
Let $d\in\mathbb{N}$ and assume $f\in W^{2,\infty}((0,1)^d)$ with $\|f\|_{W^{2,\infty}((0,1)^d)}\le 1$.
Then, for any $\varepsilon>0$, there exists a function $\Phi$
generated by an $\EUAF$--activated network with width no greater than $4^d(5d^2+8d+3)$ and depth no greater than $2d+5$, such that
\[
  \|\Phi-f\|_{W^{1,\infty}((0,9/10)^d)}<\varepsilon.
\]
\end{theorem}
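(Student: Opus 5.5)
The plan is to follow the shift-and-glue blueprint of the roadmap with $s=2$, using the $\EUAF$ realization of the local chart in Proposition~\ref{prop:local}. Fix $K>10$ and a tolerance $\epsilon$ for the local networks, to be tied to $K$ later. For each shift $\bm{i}\in\{1,2,3,4\}^d$, set $\bm{\tau}_{\bm{i}}=\frac{1}{4K}\bm{i}$ and $f_{\bm{i}}(\bm{x}):=\tilde f(\bm{x}-\bm{\tau}_{\bm{i}})$, where $\tilde f$ is a bounded $W^{2,\infty}$ extension of $f$, rescaled if necessary so that $\|f_{\bm{i}}\|_{W^{2,\infty}((0,1)^d)}\le 1$ (the constant is absorbed into $C_0$). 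Proposition~\ref{prop:local} then gives an $\EUAF$ network $\psi_{\bm{i}}$ with
\[
\|f_{\bm{i}}-\psi_{\bm{i}}\|_{W^{r,\infty}(\Omega_{\bm{m}_*})}\le \epsilon+C_0K^{r-2},\qquad r=0,1 .
\]
The candidate approximant is
\[
\Phi(\bm{x})=\sum_{\bm{i}}\Psi_{\bm{i}}(\bm{x})\,\psi_{\bm{i}}(\bm{x}+\bm{\tau}_{\bm{i}}),
\]
where the product is formed with the exact multiplication of Lemma~\ref{lemma:multi}; both factors are bounded by a constant independent of $K$.

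For $s=2$ only a $W^{1,\infty}$ partition of unity is needed, so I would replace $g_n$ by a Lipschitz piecewise-linear $2$-periodic gate $g$. This $g$ is the hat that rises on $[0,\frac12]$, falls on $[\frac12,1]$ and vanishes on $[1,2]$; it satisfies $\sum_{k=1}^4 g(t+\frac k2)=1$. I would set $\Psi_{\bm{i}}(\bm{x})=\prod_j g(2Kx_j+\frac{i_j}{2})$. Properties (i), (iv) and (v) of Lemma~\ref{lem:gn-properties} carry over verbatim with $n=1$ interpreted in the Lipschitz sense, namely $\|\Psi_{\bm{i}}\|_{L^\infty}\le1$ and $\|\nabla\Psi_{\bm{i}}\|_{L^\infty}\le CK$. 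Localization (v) holds off a null set, which suffices in $W^{1,\infty}$ because the weak gradient of a Lipschitz function vanishes a.e. where the function does.

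The error estimate is then Leibniz's rule, as in the roadmap. Using $f=\sum_{\bm{i}}\Psi_{\bm{i}}f$ and localization, on $(0,9/10)^d$ we get
\[
\|f-\Phi\|_{L^\infty}\le 4^d(\epsilon+C_0K^{-2}).
\]
For the gradient, each term $\nabla\Psi_{\bm{i}}\,(f-\psi_{\bm{i}}(\cdot+\bm{\tau}_{\bm{i}}))$ is bounded by $CK(\epsilon+C_0K^{-2})$, and each term $\Psi_{\bm{i}}\nabla(f-\psi_{\bm{i}}(\cdot+\bm{\tau}_{\bm{i}}))$ by $\epsilon+C_0K^{-1}$. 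Both are evaluated only where $\bm{x}+\bm{\tau}_{\bm{i}}\in\Omega_{\bm{m}_*}$, so the cellwise norms apply. This is exactly why the piecewise-constant jumps of the readout are harmless. Choosing $\epsilon=K^{-2}$ gives a total error of $O(4^dK^{-1})$, which is below $\varepsilon$ for large $K$. The architecture does not depend on $K$.

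The step I expect to be the main obstacle is realizing the gate $g$, hence $\Psi_{\bm{i}}$, by a fixed-size $\EUAF$ network within the stated budget. The positive branch of $\EUAF$ is the triangle wave $T$, which has period $2$ and is not the desired half-supported hat. My first attempt is to express $g$ as $T(\cdot)$ evaluated at an affine function of another triangle wave, e.g. $g(t)=\frac{1}{2}\bigl(1-T(\cdot)\bigr)$ composed with a second, rescaled $T$, so that the zero plateau on $[1,2]$ is produced by folding. If that fails, I would use the rational branch $x/(1-x)$ together with Lemma~\ref{lemma:multi} to build a clipping $\max\{0,\cdot\}$ on bounded ranges. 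After that, the $d$-fold product via Lemma~\ref{lemma:poly} costs width $d+4$ and depth $2d-1$, as in Lemma~\ref{lem:sigma_n-properties}(ii).

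The final bookkeeping step is to add up the network size. With $s=2$ the local width is $d(5d+6)=5d^2+6d$, and the gate plus the final multiplication must fit into the remaining $2d+3$ per shift; summing over the $4^d$ shifts gives $4^d(5d^2+8d+3)$. For the depth, the local depth $\max\{2d,5\}$ and the gate depth (about $2d$) run in parallel, and one multiplication adds $2$, giving at most $2d+5$. Parallel branches of unequal depth are padded with identity maps; for $\DUAF_n$ this is Lemma~\ref{lem:sigma_n-properties}(i), and for $\EUAF$ I would use the same device or a bounded-range identity built from the rational branch.
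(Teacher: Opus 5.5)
Your architecture matches the paper's proof: shifts $\bm{\tau}_{\bm{i}}=\frac{1}{4K}\bm{i}$, Proposition~\ref{prop:local} with $s=2$ on $\Omega_{\bm{m}_*}$, a product gate built from a $2$-periodic hat, Leibniz's rule with local errors used only where $\bm{x}+\bm{\tau}_{\bm{i}}\in\Omega_{\bm{m}_*}$, and a local tolerance small enough that $K$ times it stays $O(K^{-1})$. The paper takes $\varepsilon_0=\varepsilon/(4^d(8+10C_0)K)$, which plays the role of your $K^{-2}$. Your hat $g$ is exactly the paper's gate.

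The gap is the step you flag yourself. You never realize $g$ by an $\EUAF$ network, and neither route you propose works as stated.
\begin{itemize}
\item For $a,b\neq0$, the map $t\mapsto\EUAF\bigl(a\,\EUAF(bt+c)+e\bigr)$ has a nonzero derivative almost everywhere, because each branch of $\EUAF$ has slope $\pm1$ or $(1-x)^{-2}>0$. So no single-channel composition of $\EUAF$ with affine maps can be constant on an interval. Folding alone never produces the zero plateau on $[1,2]$.
\item While all arguments stay in the branch $x\le0$, the rational branch and the multiplier of Lemma~\ref{lemma:multi} produce only rational functions. These cannot equal $\max\{0,\cdot\}$ near $0$.
\end{itemize}
A plateau has to come from cancelling slopes between two channels, and this is the idea you are missing. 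For $y\ge0$, the quantity $y-\EUAF(y)$ equals $2k$ on $[2k,2k+1]$ and $2y-2k-2$ on $[2k+1,2k+2]$. Hence
\[
\psi(t):=\EUAF\bigl(t+1-\EUAF(t+1)\bigr)
\]
coincides with your hat for $t\ge-1$.

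The linear channel $y=2Kx_j+\frac{i_j}{2}+1$ lies in $[0,2K+1]$ when $x_j\in[0,\frac{9}{10}]$ and $K\ge10$. It is realized as $(2K+1)\EUAF\bigl(\frac{y}{2K+1}\bigr)$, because $\EUAF$ is the identity on $[0,1]$. So each factor has width $2$ and depth $2$. After the $d$-fold product the gate has width $\max\{2d,d+4\}\le 2d+3$ and depth $2d$, which fits your bookkeeping.

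Equivalently, set $z:=\EUAF(t+\frac12)\in[0,1]$. Then
\[
g(t)=\tfrac12\bigl(2\EUAF(z)-1+\EUAF(2z+1)\bigr),\qquad t\ge-\tfrac12 .
\]
This repairs your folding idea by adding a single identity channel, $\EUAF(z)=z$. The same fact, that $\EUAF$ is the identity on $[0,1]$ after rescaling, also gives the identity padding for parallel branches; you do not need the rational branch for it.
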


\begin{proof}
We split the proof into several steps.

\noindent\textbf{Step 1}: Extension and Shifts.

Extend $f$ from $(0,1)^d$ to $\mathbb R^d$ by using Sobolev Extension Theorem. This yields 
$\tilde f\in W^{2,\infty}(\mathbb R^d)$ with 
$\|\tilde f\|_{W^{2,\infty}(\mathbb R^d)}\le C$.
where C is a constant only depends on $d$. For notational simplicity, we still denote the extension by $f$.

Fix an integer $K\in\mathbb{N}$, with $K>10$.
For each multi-index $\bm{i} = (i_1,\ldots,i_d)\in\{1,2,3,4\}^d$,
define the shift vector and the shifted function
\[
  \bm{\tau}_{\bm{i}}
  := \frac{1}{4K}(i_1,\ldots,i_d)\in\mathbb{R}^d, \qquad
  f_{\bm{i}}(\bm{x}) := f(\bm{x} - \bm{\tau}_{\bm{i}}),
  \qquad \bm{x}\in[0,1]^d.
\]
Each $f_{\bm{i}}\in W^{2,\infty}((0,1)^d)$ satisfies
$\|f_{\bm{i}}\|_{W^{2,\infty}((0,1)^d)}\le C$ uniformly in $\bm{i}$ and $K$. In the local approximation step below, Proposition~\ref{prop:local} is applied to $C^{-1}f_{\bm{i}}$, and the resulting output network is multiplied by $C$; enlarging the constant $C_0$ if necessary, this yields the displayed estimates for $f_{\bm{i}}$.

\smallskip
\noindent\textbf{Step 2}: Multidimensional Gate Functions.

Define the one-dimensional gate function
\[
  \psi(t) := \EUAF\big(t+1-\EUAF(t+1)\big),\qquad t\in\mathbb{R}.
\]
Then $\psi$ satisfies:
\begin{itemize}
  \item $0\le\psi(t)\le 1$ for all $t\in\mathbb{R}$;
  \item  $|\psi'(t)|\le 2$ for almost every $x\geq0$;
  \item for all $x\ge 0$,
        \[
           \psi\Big(x+\frac{1}{2}\Big)+\psi\Big(x+1\Big)+\psi\Big(x+\frac{3}{2}\Big)+\psi\Big(x+2\Big) = 1.
        \]
    (See Figure~\ref{fig:psi1234} for illustration.)
\end{itemize}

\begin{figure}[htbp]        
	\centering
	\begin{minipage}{0.985\textwidth}
		\centering
		\begin{minipage}[b]{0.48\textwidth}
			\centering            
			\includegraphics[width=0.985\textwidth]{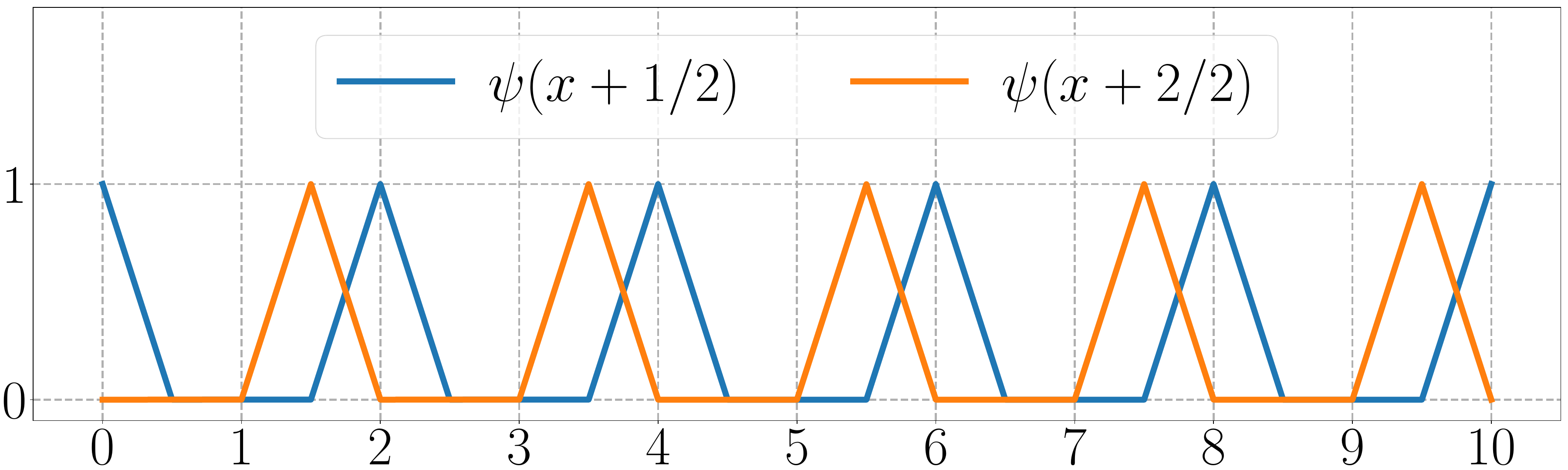}
		\end{minipage}
		\begin{minipage}[b]{0.48\textwidth}
			\centering            
			\includegraphics[width=0.985\textwidth]{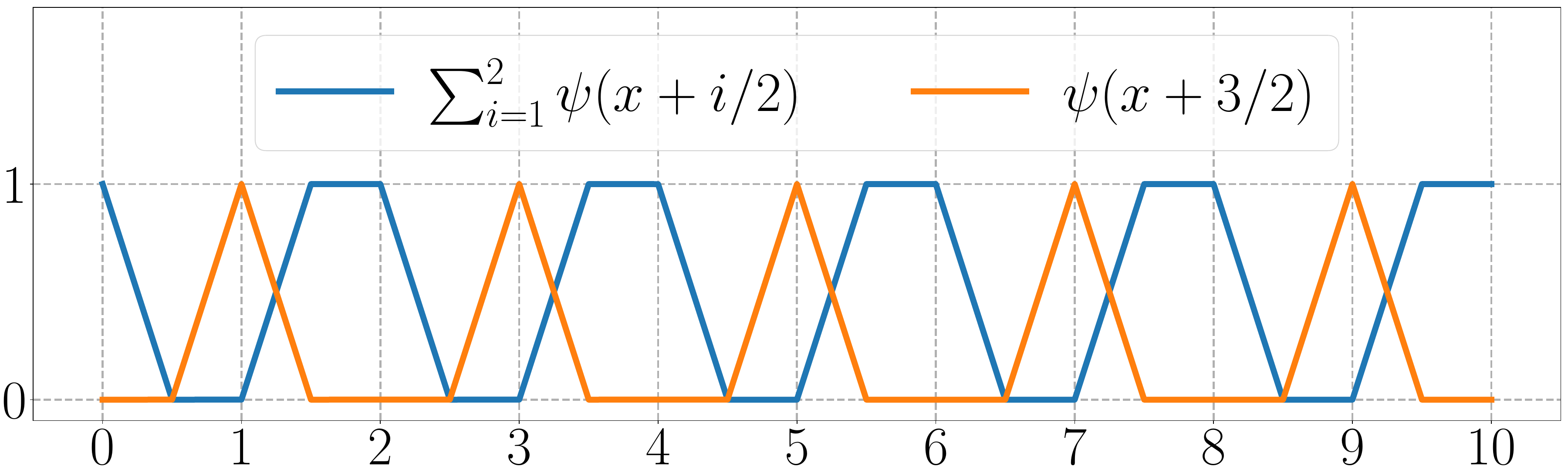}
		\end{minipage}
		
		\vspace*{4pt}
			\begin{minipage}[b]{0.48\textwidth}
		\centering            
		\includegraphics[width=0.985\textwidth]{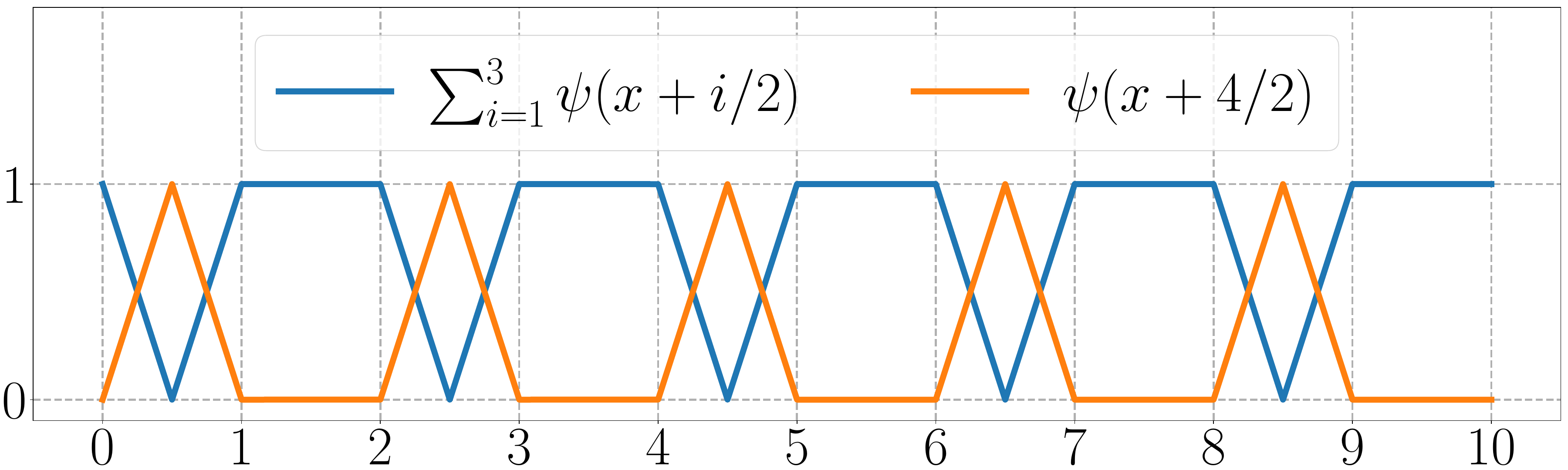}
	\end{minipage}
			\begin{minipage}[b]{0.485\textwidth}
	\centering            
	\includegraphics[width=0.985\textwidth]{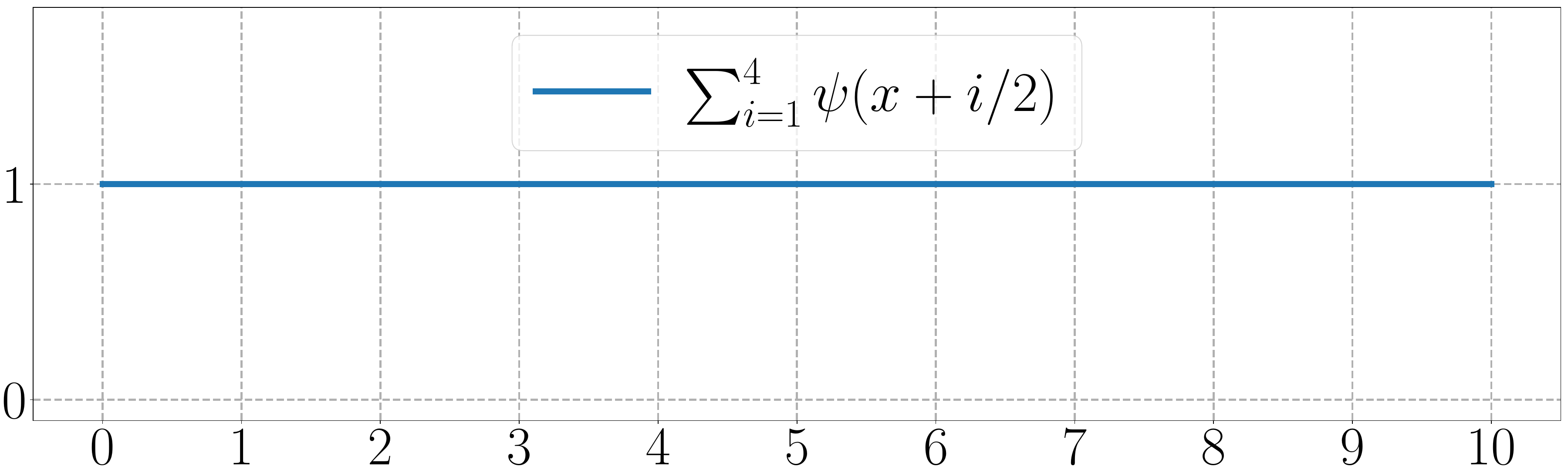}
\end{minipage}
	\end{minipage}
	\caption{Illustration of $\sum_{i=1}^{4}\psi(x+i/2)=1$ for any  $x\in [0,10]$}
	\label{fig:psi1234}
\end{figure}

\smallskip

Define, for $\bm{i}=(i_1,\ldots,i_d)\in\{1,2,3,4\}^d$ and 
$\bm{x}=(x_1,\ldots,x_d)\in\mathbb{R}^d$,
\[
  \Psi_{\bm{i}}(\bm{x})
  := \prod_{j=1}^d \psi\Big(2Kx_j + \frac{i_j}{2}\Big).
\]
Then $0\le \Psi_{\bm{i}}\le 1$, and for all $\bm{x}\in[0,\infty)^d$,
\[
  \sum_{\bm{i}\in\{1,2,3,4\}^d} \Psi_{\bm{i}}(\bm{x})
  = \sum_{\bm{i}\in\{1,2,3,4\}^d}\prod_{j=1}^d \psi\Big(2Kx_j + \frac{i_j}{2}\Big)=\prod_{j=1}^d \Big(\sum_{s=1}^4 \psi(2Kx_j + \tfrac{s}{2})\Big)
  = \prod_{j=1}^d 1=1.
\]
Thus $(\Psi_{\bm{i}})_{\bm{i}}$ is a partition of unity on $[0,\infty)^d$ together with
\[
  \big\|\Psi_{\bm{i}}\big\|_{L^{\infty}(\mathbb{R}^d)}\le 1, \qquad \big\|\Psi_{\bm{i}}\big\|_{W^{1,\infty}([0,\infty)^d)}
  \le 4K\,
\]
\smallskip
\noindent\textbf{Step 3}: Local Approximation for each $f_{\bm{i}}$.

Let $ \varepsilon_0 := \frac{\varepsilon}{4^d(8+10C_0)K} $.  Apply Proposition~\ref{prop:local} with $s=2$ and 
$\bm{m}_*=(1,\ldots,1)\in\{1,2\}^d$ to the normalized function $C^{-1}f_{\bm{i}}$, with local tolerance $C^{-1}\varepsilon_0$, and then multiply the output by $C$.
We obtain, for each $\bm{i}\in\{1,2,3,4\}^d$, an $\EUAF$--activated network 
$\varphi_{\bm{i}}$ such that
\begin{equation}\label{eq:local-phi-i}
  \|f_{\bm{i}} - \varphi_{\bm{i}}\|_{W^{r,\infty}(\Omega_{\bm{m}_*})}
  \le \varepsilon_0 + C_0 K^{\,r-2}, \qquad r=0,1,
\end{equation}
where 
\[
  \Omega_{\bm{m}_*} := \Omega_1 \times \cdots \times \Omega_1
  \subset [0,1]^d,
\]
and $\Omega_1$ is defined as in Definition~\ref{def:Omega-m}.

By construction of the gates, whenever 
$\Psi_{\bm{i}}(\bm{x})\neq 0$ or $\partial_\ell\Psi_{\bm{i}}(\bm{x})\neq 0$ for $\bm{x}\in[0,9/10]^d$, one has 
$\bm{x} + \bm{\tau}_{\bm{i}}\in\Omega_{\bm{m}_*}$, so
\eqref{eq:local-phi-i} applies at $\bm{x}+\bm{\tau}_{\bm{i}}$.

\smallskip
\noindent\textbf{Step 4}: Network Realization.

Define, for $\bm{x}\in[0,9/10]^d$,
\[
  F_{\bm{i}}(\bm{x})
  := \varphi_{\bm{i}}(\bm{x}+\bm{\tau}_{\bm{i}})\,\Psi_{\bm{i}}(\bm{x}),
  \qquad
  G_{\bm{i}}(\bm{x})
  := f_{\bm{i}}(\bm{x}+\bm{\tau}_{\bm{i}})\,\Psi_{\bm{i}}(\bm{x})
   = f(\bm{x})\,\Psi_{\bm{i}}(\bm{x}).
\]
Set
\[
  \Phi(\bm{x}) := \sum_{\bm{i}\in\{1,2,3,4\}^d} F_{\bm{i}}(\bm{x}).
\]
Since $\sum_{\bm{i}}\Psi_{\bm{i}}(\bm{x})=1$ for $\bm{x}\in[0,\frac{9}{10}]^d$, we have
\[
  f(\bm{x}) = \sum_{\bm{i}} G_{\bm{i}}(\bm{x}),\qquad \bm{x}\in[0,9/10]^d.
\]
Notice that on $[0,\frac{9}{10}]$, any $\psi\Big(2Kx+\frac{i}{2}\Big), i=1,2,3,4$ can be realized as 
\[
\psi\Big(2Kx+\frac{i}{2}\Big)=\EUAF\left((2K+1)\cdot\EUAF\left(\frac{2Kx+\frac{i}{2}+1}{2K+1}\right)-\EUAF\Big(2Kx+\frac{i}{2}+1\Big)\right),
\]
which is an $\EUAF$--activated network of width $2$ and depth $2$. This is due to the fact that $2Kx+\frac{i}{2}\le 2(\frac{9K}{10}+1)\le 2K$ because $K\ge 10$.

Hence, on $[0, \frac{9}{10}]^d$, $\Psi_{\bm{i}}$ can be realized by composing these subnetworks with the $d$-tuple multiplication map, which results in a network of width $\max\left\{2d, 7+d\right\}$ and depth $2d$. $F_{\bm{i}}(\bm{x})=\varphi_4\left( \varphi_{\bm{i}}(\bm{x}+\bm{\tau}_{\bm{i}}), \Psi_{\bm{i}}(\bm{x})\right)$, where $\varphi_{\bm{i}}(\bm{x}+\bm{\tau}_{\bm{i}})$ is a network of width $\le (5d+6)d$ and depth $\max\left\{ 2d, 5\right\}$ by Proposition~\ref{prop:local} and $\varphi_4$ is the multiplication map. Summing over the $4^d$ multi-indices $\bm{i}$, $\Phi$ is an $\EUAF$--activated network with width $\le 4^d\max\{\,5d^2 + 8d,\; 5d^2 + 7d + 4\,\}\le 4^d(5d^2+8d+3)$ and depth $\le \max\left\{ 2d+2, 7\right\}\le2d+5$.

\smallskip
\noindent\textbf{Step 5}: $L^\infty$-error estimate.

For any $\bm{x}\in[0,9/10]^d$,
\[
  |\Phi(\bm{x})-f(\bm{x})|
  = \Big|\sum_{\bm{i}} \big(F_{\bm{i}}(\bm{x})-G_{\bm{i}}(\bm{x})\big)\Big|
  \le \sum_{\bm{i}} |F_{\bm{i}}(\bm{x})-G_{\bm{i}}(\bm{x})|.
\]
Whenever $\Psi_{\bm{i}}(\bm{x})\neq 0$, we have 
$\bm{x}+\bm{\tau}_{\bm{i}}\in\Omega_{\bm{m}_*}$, and hence, by 
\eqref{eq:local-phi-i} with $r=0$,
\[
  \big|\varphi_{\bm{i}}(\bm{x}+\bm{\tau}_{\bm{i}})
       - f_{\bm{i}}(\bm{x}+\bm{\tau}_{\bm{i}})\big|
  \le \varepsilon_0 + C_0 K^{-2}.
\]
Using $|\Psi_{\bm{i}}|\le 1$,
\[
  |F_{\bm{i}}(\bm{x})-G_{\bm{i}}(\bm{x})|
  \le \varepsilon_0 + C_0 K^{-2}.
\]
There are $4^d$ choices of $\bm{i}$, so taking the supremum over $[0,9/10]^d$ gives

\begin{equation}\label{eq:Linf-error}
  \|\Phi-f\|_{L^\infty([0,9/10]^d)}
  \le 4^d\varepsilon_0 + 4^d C_0 K^{-2}.
\end{equation}

\smallskip
\noindent\textbf{Step 6}: Gradient error estimate.

Since $f_{\bm{i}}$ and $\varphi_{\bm{i}}$ are in $W^{1,\infty}$ on 
$\Omega_{\bm{m}_*}$ and each $\Psi_{\bm{i}}$ is Lipschitz, the products
$F_{\bm{i}}$ and $G_{\bm{i}}$ are Lipschitz on $[0,9/10]^d$.
A direct computation  shows that for any $\ell=1,\ldots,d$

\[
  \frac{\partial \Phi}{\partial x_\ell}(\bm{x})-\frac{\partial f}{\partial x_\ell}(\bm{x})
  = \sum_{\bm{i}} \left(\big(\frac{\partial \varphi_{\bm{i}}}{\partial x_\ell}
         -\frac{\partial f_{\bm{i}}}{\partial x_\ell} \big)(\bm{x}+\bm{\tau}_{\bm{i}})\,
      \Psi_{\bm{i}}(\bm{x})+
     \big(\varphi_{\bm{i}}-f_{\bm{i}}\big)(\bm{x}+\bm{\tau}_{\bm{i}})\,
     \frac{\partial \Psi_{\bm{i}}}{\partial x_\ell}(\bm{x})\right).
\]

When $\Psi_{\bm{i}}(\bm{x})\neq 0$, we have $\bm{x}+\bm{\tau}_{\bm{i}}\in \Omega_{\bm{m}_*}$,
and by \eqref{eq:local-phi-i} with $r=1$,
\[
  \left|\frac{\partial \varphi_{\bm{i}}}{\partial x_\ell}(\bm{x}+\bm{\tau}_{\bm{i}})
       -\frac{\partial f_{\bm{i}}}{\partial x_\ell}(\bm{x}+\bm{\tau}_{\bm{i}})\right|
  \le \varepsilon_0 + C_0 K^{-1}.
\]
Since $|\Psi_{\bm{i}}|\le 1$, this gives
\[
  \left|\big(\frac{\partial \varphi_{\bm{i}}}{\partial x_\ell}
         -\frac{\partial f_{\bm{i}}}{\partial x_\ell}\big)(\bm{x}+\bm{\tau}_{\bm{i}})
        \,\Psi_{\bm{i}}(\bm{x})\right|
  \le \varepsilon_0 + C_0 K^{-1}.
\]
Similarly, by \eqref{eq:local-phi-i} with $r=0$ and 
$|\frac{\partial \Psi_{\bm{i}}}{\partial x_\ell}(\bm{x})|\le 4K$,
\[
  \left|\big(\varphi_{\bm{i}}-f_{\bm{i}}\big)(\bm{x}+\bm{\tau}_{\bm{i}})
        \,\frac{\partial \Psi_{\bm{i}}}{\partial x_\ell}(\bm{x})\right|
  \le 4K  \big(\varepsilon_0 + C_0 K^{-2}\big)
  = 4 K\varepsilon_0 + 4C_0  K^{-1}.
\]
Summing over the $4^d$ multi-indices $\bm{i}$ and using the triangle inequality, we obtain for almost every $\bm{x}\in[0,9/10]^d$
\[
  |\Phi-f|_{W^{1,\infty}((0,9/10)^d)}
  \le 4^d(4+5C_0)\Big(\varepsilon_0 + K\varepsilon_0 + K^{-1}\Big).
\]

\smallskip
\noindent\textbf{Step 7}: Choice of  $K$.

Recall that we chose
  $\varepsilon_0 = \frac{\varepsilon}{4^d(8+10C_0)K}$.
Now, we set $K:=\max\{11,\lceil4+4^d8(5C_0+4)\varepsilon^{-1}\rceil\}$.
Substituting into \eqref{eq:Linf-error},
\[
  \|\Phi-f\|_{L^\infty((0,9/10)^d)}
  \le \frac{\varepsilon }{(8+10C_0)K}+ \frac{4^d C_0} {K^{2}}
  \le \frac{\varepsilon }{(8+10C_0)}+ \frac{4^d C_0} {4\frac{4^d8(5C_0+4)}{\epsilon}}\le \frac{\varepsilon }{8}+\frac{\varepsilon }{8}=\frac{\varepsilon }{4},
\]
Similarly,
\[
  |\Phi-f|_{W^{1,\infty}((0,9/10)^d)}
  \le \frac{\varepsilon}{2K} + \frac{\varepsilon}{2} + \frac{4^d(4+5C_0)}{K}
  \le \frac{\varepsilon}{8} + \frac{\varepsilon}{2} + \frac{\varepsilon}{8}=\frac{3\varepsilon}{4}.
\]
Therefore,
\[
  \|\Phi-f\|_{W^{1,\infty}((0,9/10)^d)}
  \le \|\Phi-f\|_{L^\infty((0,9/10)^d)} + |\Phi-f|_{W^{1,\infty}((0,9/10)^d)}
  < \frac{\varepsilon}{4} + \frac{3\varepsilon}{4}
  = \varepsilon.
\]
This completes the proof.
\end{proof}

\begin{theorem} [Shift-based $W^{s,\infty}$-approximation]
\label{thm:shift-Ws-d}
Let $d, s\in\mathbb{N}$ and $n\in\mathbb N\cup\{\infty\}$ with $1\le s\le n+1$. Assume $f\in W^{s,\infty}((0,1)^d)$ with 
$\|f\|_{W^{s,\infty}((0,1)^d)}\le 1$. Then, for any $\varepsilon>0$, there exists a $\DUAF_n$--activated network $\Phi$ of total width no greater than $N_{s,d}$ and depth no greater than $ L_{s,d}$ such that
\[
  \|\Phi-f\|_{W^{s-1,\infty}((0,9/10)^d)}<\varepsilon.
\]
\end{theorem}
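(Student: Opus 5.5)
The plan follows the proof of Theorem~\ref{thm:shift-W1-d}, with the piecewise-linear gate $\psi$ replaced by the $C^n$ gates $\Psi_{\bm{i},n,K}$ and the single Leibniz step replaced by a full multi-index Leibniz expansion.

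\textbf{Setup and gluing.} First extend $f$ to $\mathbb R^d$ by a Sobolev extension with $\|f\|_{W^{s,\infty}(\mathbb R^d)}\le C$. Fix $K>10$ and, for $\bm{i}\in\{1,2,3,4\}^d$, set $\bm{\tau}_{\bm{i}}=\frac{1}{4K}\bm{i}$ and $f_{\bm{i}}:=f(\cdot-\bm{\tau}_{\bm{i}})$. Apply Proposition~\ref{prop:local} in the $\DUAF_n$ case to $C^{-1}f_{\bm{i}}$ with a tolerance $\varepsilon_0$ to be chosen, and rescale the output by $C$. This gives networks $\varphi_{\bm{i}}$ with
\[
\|f_{\bm{i}}-\varphi_{\bm{i}}\|_{W^{r,\infty}(\Omega_{\bm{m}_*})}\le \varepsilon_0+C_0K^{r-s},\qquad 0\le r\le s-1 .
\]
Define
\[
\Phi(\bm{x}):=\sum_{\bm{i}}\varphi_4\bigl(\varphi_{\bm{i}}(\bm{x}+\bm{\tau}_{\bm{i}}),\Psi_{\bm{i},n,K}(\bm{x})\bigr).
\]
Since $0\le\Psi_{\bm{i},n,K}\le1$ and $\varphi_{\bm{i}}$ is bounded on the relevant cells, this equals $\sum_{\bm{i}}\varphi_{\bm{i}}(\bm{x}+\bm{\tau}_{\bm{i}})\Psi_{\bm{i},n,K}(\bm{x})$ on $[0,9/10]^d$. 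By Lemma~\ref{lem:gn-properties}(iv), we also have $f=\sum_{\bm{i}}f_{\bm{i}}(\cdot+\bm{\tau}_{\bm{i}})\Psi_{\bm{i},n,K}$ there.

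\textbf{Error estimate.} For $|\bm{\alpha}|\le s-1\le n$, the Leibniz rule gives
\[
D^{\bm{\alpha}}(\Phi-f)=\sum_{\bm{i}}\sum_{\bm{\beta}\le\bm{\alpha}}\binom{\bm{\alpha}}{\bm{\beta}}D^{\bm{\beta}}(\varphi_{\bm{i}}-f_{\bm{i}})(\bm{x}+\bm{\tau}_{\bm{i}})\,D^{\bm{\alpha}-\bm{\beta}}\Psi_{\bm{i},n,K}(\bm{x}).
\]
The gate derivatives are of class $C^n$, so they are classical. By the localization property, Lemma~\ref{lem:gn-properties}(v), every nonzero term has $\bm{x}+\bm{\tau}_{\bm{i}}\in\Omega_{\bm{m}_*}$, so the local bound applies. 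Combined with Lemma~\ref{lem:gn-properties}(iii), each term is at most
\[
C_{n,d}K^{|\bm{\alpha}|-|\bm{\beta}|}\bigl(\varepsilon_0+C_0K^{|\bm{\beta}|-s}\bigr)\le C\bigl(K^{s-1}\varepsilon_0+K^{-1}\bigr).
\]
Summing over $4^d$ shifts and $2^{|\bm{\alpha}|}$ values of $\bm{\beta}$ gives $\|\Phi-f\|_{W^{s-1,\infty}((0,9/10)^d)}\le C'(K^{s-1}\varepsilon_0+K^{-1})$. Now choose $K$ large so that $C'K^{-1}<\varepsilon/2$, and then $\varepsilon_0=\varepsilon/(2C'K^{s-1})$.

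\textbf{Main obstacle: justifying the derivatives across cells.} The subtle point is that $\varphi_{\bm{i}}$ is only controlled cellwise on $\Omega_{\bm{m}_*}$, and its piecewise-constant coefficient parts jump between cells. So $D^{\bm{\beta}}\varphi_{\bm{i}}$ must be taken cellwise, and I must show that the product with the gate has weak derivatives equal to the classical Leibniz expression. The argument I would give is this. Near any point where the shifted argument lies in the unused region $\Omega_2$-type strips, the gate vanishes identically on a neighbourhood, again by Lemma~\ref{lem:gn-properties}(v). Hence $\varphi_{\bm{i}}(\cdot+\bm{\tau}_{\bm{i}})\Psi_{\bm{i},n,K}$ is smooth on the open set where the gate is nonzero. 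On that set the shifted point lies in the interior of a single active cell, apart from a null boundary set, so the jumps never interact with the support of the gate. Also, $\varphi_{\bm{i}}$ is itself $C^n$ in the interior of cells (up to a.e.\ issues in the $\EUAF$ case). Together these let me identify weak derivatives with the pointwise formula.

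\textbf{Architecture count.} $\Psi_{\bm{i},n,K}$ has width $d+4$ and depth $2d-1$ by Lemma~\ref{lem:sigma_n-properties}(ii). Run it in parallel with $\varphi_{\bm{i}}$, whose width is $\le4^{-d}N_{s,d}-d-4$ and depth $\le\max\{2s+2d-4,5\}$, padding the shorter branch with identity layers from Lemma~\ref{lem:sigma_n-properties}(i). Then apply $\varphi_4$, which has depth $2$. Each summand then has width $\le 4^{-d}N_{s,d}$ and depth $\le\max\{2s+2d-4,5,2d-1\}+2\le L_{s,d}$. Stacking the $4^d$ summands in parallel gives total width $\le N_{s,d}$ and depth $\le L_{s,d}$.
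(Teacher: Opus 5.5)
Your proposal is correct and is essentially the paper's proof. It uses the same shifted local approximants from Proposition~\ref{prop:local}, the same gluing by the $C^n$ gates $\Psi_{\bm{i},n,K}$, the same multi-index Leibniz estimate yielding $O(K^{-1})$, and the same parallel-then-multiply architecture count; the paper just fixes $\varepsilon_0=K^{-s}$ up front rather than choosing it after $K$.

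The ``main obstacle'' you identify does not actually arise for $\DUAF_n$. The network $\varphi_{\bm{i}}$ is a composition of affine maps and $C^n$ activations, hence globally $C^n$; the staircase $S_{n,N}$ changes value smoothly across the inactive strips rather than jumping. So each product $\varphi_{\bm{i}}(\cdot+\bm{\tau}_{\bm{i}})\Psi_{\bm{i},n,K}$ is classically $C^n$, which is exactly how the paper disposes of this point in one line.
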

\begin{proof}
We use the same shifted-grid construction as in Theorem~\ref{thm:shift-W1-d}. Fix $K>10$, to be chosen at the end. Extend $f$ to $W^{s,\infty}(\mathbb R^d)$ and set
\[
\bm{\tau}_{\bm{i}}:=\frac{1}{4K}\bm{i},
\qquad
f_{\bm{i}}(\bm{x}):=f(\bm{x}-\bm{\tau}_{\bm{i}}),
\qquad
\bm{i}\in\{1,2,3,4\}^d.
\]
The extension theorem gives $C=C(s,d)$ with
$\|f_{\bm{i}}\|_{W^{s,\infty}((0,1)^d)}\le C$ uniformly in $\bm{i}$ and $K$.

Let $\varepsilon_0:=K^{-s}$. Applying Proposition~\ref{prop:local} and Lemma~\ref{lem:sigma_n-properties}(i) to $C^{-1}f_{\bm{i}}$ on $\Omega_{\bm{m}_*}$, and then multiplying the output by $C$, gives, after enlarging $C_0$ if necessary, networks $\varphi_{\bm{i}}$ of width $<4^{-d}N_{s,d}-d-4$ and depth $<\max\{2s+2d-4,5\}$ such that
\begin{equation}\label{eq:shift-Ws-local-error}
\|f_{\bm{i}}-\varphi_{\bm{i}}\|_{W^{r,\infty}(\Omega_{\bm{m}_*})}
\le \varepsilon_0+C_0K^{r-s},
\qquad r=0,1,\ldots,s-1.
\end{equation}
Define
\[
F_{\bm{i}}(\bm{x}):=\varphi_{\bm{i}}(\bm{x}+\bm{\tau}_{\bm{i}})\Psi_{\bm{i},n,K}(\bm{x}),
\quad
G_{\bm{i}}(\bm{x}):=f_{\bm{i}}(\bm{x}+\bm{\tau}_{\bm{i}})\Psi_{\bm{i},n,K}(\bm{x}),
\quad
\Phi:=\sum_{\bm{i}\in\{1,2,3,4\}^d}F_{\bm{i}}.
\]
Here $F_{\bm{i}}\in W^{s-1,\infty}((0,9/10)^d)$ without extending from the cellwise set: $\varphi_{\bm{i}}$ is a globally defined $C^n$ network, $\Psi_{\bm{i},n,K}\in C^n(\mathbb R^d)$ by Lemma~\ref{lem:gn-properties}(ii), and $s-1\le n$. The same Sobolev Leibniz rule gives $G_{\bm{i}}\in W^{s-1,\infty}$.

The architecture of each $F_{\bm{i}}$ is obtained by running the local approximant and the gate in parallel and multiplying them. The parallel pair has width
$(4^{-d}N_{s,d}-d-4)+(d+4)=4^{-d}N_{s,d}$ and depth at most $\max\{2s+2d-4,2d-1,5\}$; the final width-$6$, depth-$2$ multiplier gives width $4^{-d}N_{s,d}$ and depth
$\max\{2s+2d-2,2d+1,7\}=L_{s,d}$. Padding the $4^d$ branches to equal depth and summing affinely gives width at most $N_{s,d}$ and depth at most $L_{s,d}$.

For $|\bm{\alpha}|\le s-1$, Lemma~\ref{lem:gn-properties}(v) ensures that whenever $D^{\bm{\alpha}-\bm{\beta}}\Psi_{\bm{i},n,K}(\bm{x})\ne0$, the shifted point $\bm{x}+\bm{\tau}_{\bm{i}}$ lies in $\Omega_{\bm{m}_*}$. Therefore, by \eqref{eq:shift-Ws-local-error}, Lemma~\ref{lem:gn-properties}(iii), and Leibniz's rule,
\[
\begin{aligned}
|D^{\bm{\alpha}}(F_{\bm{i}}-G_{\bm{i}})(\bm{x})|
&\le \sum_{\bm{\beta}\le\bm{\alpha}}\binom{\bm{\alpha}}{\bm{\beta}}
\bigl(\varepsilon_0+C_0K^{|\bm{\beta}|-s}\bigr)C_{s,d}K^{|\bm{\alpha}-\bm{\beta}|} \\
&\le C(d,s)\bigl(\varepsilon_0K^{|\bm{\alpha}|}+K^{|\bm{\alpha}|-s}\bigr)
\le 2C(d,s)K^{-1}.
\end{aligned}
\]
Since $\sum_{\bm{i}}G_{\bm{i}}=f$ on $(0,9/10)^d$ by the partition-of-unity property of the gates,
\[
\|D^{\bm{\alpha}}(\Phi-f)\|_{L^\infty((0,9/10)^d)}
\le \sum_{\bm{i}\in\{1,2,3,4\}^d}
\|D^{\bm{\alpha}}(F_{\bm{i}}-G_{\bm{i}})\|_{L^\infty((0,9/10)^d)}
\le 2^{2d+1}C(d,s)K^{-1}.
\]
Choose $K:=\max\{11,\lceil2^{2d+2}C(d,s)\varepsilon^{-1}\rceil\}$. Taking the maximum over $|\bm{\alpha}|\le s-1$ gives
\[
\|\Phi-f\|_{W^{s-1,\infty}((0,9/10)^d)}<\varepsilon.
\]
This completes the proof.
\end{proof}

\subsection{Global Sobolev approximation on $(a,b)^d$}\label{sec:global-sobolev-cube}

\begin{proof}[Proof of Theorem~\ref{thm:global-w2} and~\ref{thm:global-ws}]\label{proof:thm-global-w2-ws}

 Define the affine transformation that maps $[0, \frac{9}{10}]^d$ to $[a, b]^d$:
 \[
 \bm{T}(\bm{x})=\bm{A}\bm{x}+\bm{b},\qquad \text{where } \bm{A}=\tfrac{10}{9}(b-a)\bm{I}_d,\qquad \bm{b}=a\bm{1}.
 \]
 Leaving out the trivial case $f=0$, it follows that $f\circ\bm{T} \in W^{s,\infty}((0,9/10)^d)$. Let $Ef$ be the Sobolev extension of $f\circ\bm{T}$ in $W^{s,\infty}(\mathbb R^d)$ and set $\mu=\frac{10(b-a)}{9}$. Then there exists a constant $C_{s,d}>0$, depending only on $s$ and $d$, such that 
 \[
 \|Ef\|_{W^{s,\infty}(\mathbb R^d)}\le C_{s,d} \|f\circ\bm{T}\|_{W^{s,\infty}((0,9/10)^d)}\le C_{s,d} \max\{1,\mu^s\} \|f\|_{W^{s,\infty}((a,b)^d)}.
 \]
 Denote the right-hand side $\lambda$. Since $\lambda>0$, it follows that $\|\lambda^{-1} Ef\|_{W^{s,\infty}((0,1)^d)} \le1$. For Theorem~\ref{thm:global-w2}, we take $s=2$ and use Theorem~\ref{thm:shift-W1-d}; for Theorem~\ref{thm:global-ws}, we use Theorem~\ref{thm:shift-Ws-d}. In either case, there exists an architecture $\Phi$ with the prescribed width and depth bounds such that 
 \[
 \|\Phi-\lambda^{-1} Ef\|_{W^{s-1,\infty}((0,9/10)^d)}< \frac{\varepsilon}{\lambda\max\{1, \mu^{1-s}\}}.
 \]
 Notice that in $(a,b)^d$, we always have $f(x)=Ef\circ\bm{T}^{-1}(x)$. This implies that
 \[
 \begin{aligned}
 \|\lambda\Phi\circ\bm{T}^{-1}-f\|_{W^{s-1,\infty}((a,b)^d)}
 &=\lambda\|(\Phi-\lambda^{-1} Ef)\circ\bm{T}^{-1}\|_{W^{s-1,\infty}((a,b)^d)} \\
 &\le \lambda \max\{1, \mu^{1-s}\}\|\Phi-\lambda^{-1} Ef\|_{W^{s-1,\infty}((0,9/10)^d)} \\
 &<\varepsilon,
 \end{aligned}
 \]
 where the architecture $\lambda\Phi\circ\bm{T}^{-1}$ is exactly what we need.
 \end{proof}

\section{Proof of Theorem~\ref{thm:QKST}}\label{sec:proof-qkst}

\begin{proof}[Proof of Theorem~\ref{thm:QKST}]
Set $r:=s-1$ and $\Omega:=(a,b)^d$. By the definition of $K_{d,Q}^s(\Omega)$, choose a superposition approximant
\[
  f_{\varepsilon}=\sum_{q=1}^Q g_q(S_q),
  \qquad
  S_q(\bm{x}):=\sum_{p=1}^d h_{q,p}(x_p),
  \qquad
  g_q,h_{q,p}\in W^{s,\infty}_{\mathrm{loc}}(\mathbb R),
\]
such that $\|f-f_{\varepsilon}\|_{W^{r,\infty}(\Omega)}<\varepsilon/2$.  Put
\[
M:=1+\max_{1\le q\le Q}\|S_q\|_{W^{r,\infty}(\Omega)},
\qquad I:=(-M-2,M+2),
\]
and
\[
K_g:=1+\max_{1\le q\le Q}\|g_q\|_{W^{s,\infty}(I)},
\qquad
\theta(t):=\sum_{j=0}^{r}t^j,
\qquad
J:=\theta(M+1)+\theta'(M+1).
\]
For each multi-index $\bm{\alpha}$ with $|\bm{\alpha}|\le r$, the multivariate Fa\`a di Bruno formula gives polynomials $B_{\bm{\alpha},k}$, $0\le k\le |\bm{\alpha}|$, with nonnegative coefficients, depending only on $d$ and $\bm{\alpha}$, such that
\[
D^{\bm{\alpha}}(g\circ S)
=
\sum_{k=0}^{|\bm{\alpha}|}
B_{\bm{\alpha},k}\big(D^{\bm{\beta}}S:1\le |\bm{\beta}|\le |\bm{\alpha}|\big)
\,g^{(k)}(S),
\]
for smooth $g$ and $S$; for $\bm{\alpha}=0$ we use the convention $B_{\bm{0},0}=1$. The same identity and the estimates below hold for the present Sobolev functions in the weak sense: extend the one-dimensional functions slightly inside $I$, mollify $g$ and $S$, apply the smooth formula with constants depending only on the displayed $W^{s,\infty}$ and $W^{r,\infty}$ bounds, and pass to weak derivatives. Let $C_B\ge1$ be chosen so that, for all these finitely many Bell polynomials,
\[
|B_{\bm{\alpha},k}(\bm{Y})|\le C_B\|\bm{Y}\|_\infty^k,
\qquad
|B_{\bm{\alpha},k}(\bm{Y})-B_{\bm{\alpha},k}(\bm{Z})|
\le C_B k A^{k-1}\|\bm{Y}-\bm{Z}\|_\infty,
\]
whenever $\|\bm{Y}\|_\infty,\|\bm{Z}\|_\infty\le A$; when $k=0$, the second right-hand side is interpreted as $0$. Define
\[
C_1:=C_BK_gJ,
\qquad
C_2:=C_B\theta(M+1),
\qquad
\delta:=\min\left\{\frac1d,\frac{\varepsilon}{4QC_1d},
\frac{\varepsilon}{4QC_2}\right\}.
\]

By Corollary~\ref{cor:global-winfty} in dimension one, for every pair $(q,p)$ and every $q$ there exist $\DUAF_\infty$--activated networks $\varphi_{q,p}$ and $\Phi_q$ with one-dimensional width and depth
\[
W_0=2(s^2+9s+10),
\qquad
L_0=\max\{2s,7\},
\]
such that
\[
\|\varphi_{q,p}-h_{q,p}\|_{W^{r,\infty}((a,b))}<\delta,
\qquad
\|\Phi_q-g_q\|_{W^{r,\infty}(I)}<\delta.
\]
Set $R_q(\bm{x}):=\sum_{p=1}^d\varphi_{q,p}(x_p)$. Then
\[
\|S_q-R_q\|_{W^{r,\infty}(\Omega)}
\le\sum_{p=1}^d\|h_{q,p}-\varphi_{q,p}\|_{W^{r,\infty}((a,b))}
\le d\delta\le1,
\]
and hence $\|R_q\|_{W^{r,\infty}(\Omega)}\le M+1$. In particular, both $S_q$ and $R_q$ take values in $[-M-1,M+1]\subset I$.

We now compare $g_q(S_q)$ and $g_q(R_q)$. For $|\bm{\alpha}|\le r$, the above formula and the Lipschitz bound $|g_q^{(k)}(S_q)-g_q^{(k)}(R_q)|\le K_g|S_q-R_q|$ for $0\le k\le r$ give, almost everywhere on $\Omega$,
\[
\begin{aligned}
&\big|D^{\bm{\alpha}}(g_q(S_q)-g_q(R_q))\big| \\
&\le
\sum_{k=0}^{|\bm{\alpha}|}
\Big|B_{\bm{\alpha},k}(D^{\bm{\beta}}S_q)-B_{\bm{\alpha},k}(D^{\bm{\beta}}R_q)\Big|
\,|g_q^{(k)}(R_q)| \\
&\quad+
\sum_{k=0}^{|\bm{\alpha}|}
\Big|B_{\bm{\alpha},k}(D^{\bm{\beta}}S_q)\Big|
\,|g_q^{(k)}(S_q)-g_q^{(k)}(R_q)| \\
&\le
\sum_{k=0}^{|\bm{\alpha}|}
\left(C_B k(M+1)^{k-1}d\delta\,K_g
     +C_B(M+1)^kK_gd\delta\right) \\
&\le
C_BK_g\bigl(\theta'(M+1)+\theta(M+1)\bigr)d\delta
=C_1d\delta
\le \frac{\varepsilon}{4Q}.
\end{aligned}
\]
Taking the maximum over $|\bm{\alpha}|\le r$, we obtain
\[
\|g_q(S_q)-g_q(R_q)\|_{W^{r,\infty}(\Omega)}\le\frac{\varepsilon}{4Q}.
\]
For the outer-network error, apply the same formula to $H_q:=g_q-\Phi_q$. Since $\|H_q\|_{W^{r,\infty}(I)}<\delta$ and $\|R_q\|_{W^{r,\infty}(\Omega)}\le M+1$, for $|\bm{\alpha}|\le r$ we have
\[
\begin{aligned}
\big|D^{\bm{\alpha}}(H_q(R_q))\big|
&\le
\sum_{k=0}^{|\bm{\alpha}|}
\Big|B_{\bm{\alpha},k}(D^{\bm{\beta}}R_q)\Big|\,
\|H_q^{(k)}\|_{L^\infty(I)} \\
&\le
\sum_{k=0}^{|\bm{\alpha}|} C_B(M+1)^k\delta
\le C_2\delta
\le \frac{\varepsilon}{4Q}.
\end{aligned}
\]
Therefore
\[
\|g_q(R_q)-\Phi_q(R_q)\|_{W^{r,\infty}(\Omega)}\le\frac{\varepsilon}{4Q}.
\]
Combining the last two estimates and summing over $q$ gives
\[
\left\|f_\varepsilon-\sum_{q=1}^Q \Phi_q(R_q)\right\|_{W^{r,\infty}(\Omega)}
\le
\sum_{q=1}^Q\left(\frac{\varepsilon}{4Q}+\frac{\varepsilon}{4Q}\right)
=\frac{\varepsilon}{2}.
\]
Together with $\|f-f_\varepsilon\|_{W^{r,\infty}(\Omega)}<\varepsilon/2$, this proves the approximation estimate.

It remains to count the architecture. Run the $dQ$ inner networks $\varphi_{q,p}$ in parallel after affine coordinate-selection maps; this has width $dQW_0$ and depth $L_0$. The affine output map of this block forms the $Q$ sums $R_q=\sum_{p=1}^d\varphi_{q,p}(x_p)$ and can be absorbed into the input affine maps of the $Q$ outer networks. Running the $Q$ outer networks $\Phi_q$ in parallel has width $QW_0\le dQW_0$ and depth $L_0$, and the final summation over the $Q$ channels is affine. Hence the total width is
\[
 dQW_0=2dQ(s^2+9s+10),
\]
and the total depth is
\[
2L_0=2\max\{2s,7\}=\max\{4s,14\}.
\]
This completes the proof.
\end{proof}

\section{Proof of Theorem~\ref{thm:sigmoidal}}\label{D}

The sigmoidal theorem is proved by converting the already-constructed
$\DUAF_n$ networks into $\widetilde{\DUAF}_n$ networks.  The conversion has two
parts.  First, Lemma~\ref{lem:old-to-new-1} below is a general replacement principle:
if one activation can approximate another activation in Sobolev norm on arbitrary
compact intervals using a fixed-size subnet, then every network using the original
activation can be replaced layer by layer by a network using the new activation.
The width and depth increase multiplicatively by the size of the activation-simulation
subnet.

Second, Theorem~\ref{thm:sigmoidal-loc}, which is the technical heart of this section, supplies the required activation simulation for
$\varrho=\DUAF_n$ and $\tilde\varrho=\widetilde{\DUAF}_n$.  After applying
Theorem~\ref{thm:global-ws} to obtain a fixed-size $\DUAF_n$ network that
approximates $f$, the replacement lemma gives a
$\widetilde{\DUAF}_n$ network approximating this intermediate network on a compact
set containing $(a,b)^d$.  A final triangle inequality splits the total error into
the approximation error of the $\DUAF_n$ network and the replacement error.

\begin{lemma}\label{lem:old-to-new-1}
Let $\varrho, \tilde\varrho:\mathbb R\to\mathbb R$ be two functions with $\varrho\in C^n(\mathbb R)$, where $n\in\mathbb N$. Suppose that, for each $M>0$ and $\eta\in(0,1)$, there exists a function $\varrho_{M,\eta}$ realized by a $\tilde\varrho$-activated network with width $\tilde N$ and depth $\tilde L$ ($\tilde N$ and $\tilde L$ independent of $M$ and $\eta$) such that
\[
  \|\varrho_{M,\eta}-\varrho\|_{W^{m,\infty}([-M,M])}\xrightarrow{\,\eta\to0\,}0,
\]
for some integer $m$ with $0\le m\le n-1$. Then, for any $K>0$ and any $d$-input $\varrho$-activated network $\psi$ with width $N$ and depth $L$, there exist $\tilde\varrho$-activated networks $\psi_{K,\eta}$ with width $N\tilde N$ and depth $L\tilde L$ such that
\[
  \|\psi_{K,\eta}-\psi\|_{W^{m,\infty}([-K,K]^d)}\xrightarrow{\,\eta\to0\,}0.
\]
\end{lemma}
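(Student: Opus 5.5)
We prove the lemma by induction on the depth $L$, replacing one activation layer at a time and controlling the propagated error with a composition estimate of Fa\`a di Bruno type, as in the proof of Theorem~\ref{thm:QKST}. Write the original network as $\psi=\bm{W}_{L+1}\bm{h}^{(L)}+\bm{b}_{L+1}$ with $\bm{h}^{(\ell)}=\bm{\varrho}(\bm{W}_\ell\bm{h}^{(\ell-1)}+\bm{b}_\ell)$ and $\bm{h}^{(0)}=\bm{x}$. Since $\psi$ is fixed and continuous, every pre-activation $\bm{z}^{(\ell)}:=\bm{W}_\ell\bm{h}^{(\ell-1)}+\bm{b}_\ell$ is bounded on $[-K,K]^d$, say by $M_0$. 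Fix $M:=M_0+1$. The replacement network $\psi_{K,\eta}$ is obtained by substituting each scalar activation $\varrho$ in layer $\ell$ by the subnet $\varrho_{M,\eta}$, applied componentwise. Running $N$ copies of a width-$\tilde N$, depth-$\tilde L$ subnet in parallel gives width $N\tilde N$. The affine maps between layers are absorbed into the first and last affine maps of the subnets, so the total depth is $L\tilde L$.

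Let $\tilde{\bm{h}}^{(\ell)}_\eta$ and $\tilde{\bm{z}}^{(\ell)}_\eta$ be the corresponding hidden vectors and pre-activations of the new network. The inductive claim is that $\|\tilde{\bm{h}}^{(\ell)}_\eta-\bm{h}^{(\ell)}\|_{W^{m,\infty}([-K,K]^d)}\to0$ as $\eta\to0$, together with $\|\tilde{\bm{z}}^{(\ell)}_\eta\|_{L^\infty}\le M$ for $\eta$ small. The case $\ell=0$ is trivial. For the inductive step, affine maps preserve $W^{m,\infty}$ convergence, so $\tilde{\bm{z}}^{(\ell)}_\eta\to\bm{z}^{(\ell)}$ in $W^{m,\infty}$. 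In particular the $L^\infty$ convergence keeps $\tilde{\bm{z}}^{(\ell)}_\eta$ inside $[-M,M]$ for small $\eta$, and its $W^{m,\infty}$ norm stays bounded. We then split
\[
\varrho_{M,\eta}(\tilde z)-\varrho(z)=\bigl(\varrho_{M,\eta}-\varrho\bigr)(\tilde z)+\bigl(\varrho(\tilde z)-\varrho(z)\bigr).
\]

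Each term is handled with the multivariate Fa\`a di Bruno formula, exactly as the terms $H_q(R_q)$ and $g_q(S_q)-g_q(R_q)$ were handled in the proof of Theorem~\ref{thm:QKST}.
\begin{itemize}
\item For the first term, the $\alpha$-th derivative with $|\alpha|\le m$ is a sum of Bell polynomials in the derivatives of $\tilde z$, which are uniformly bounded, multiplied by $(\varrho_{M,\eta}-\varrho)^{(k)}(\tilde z)$ with $k\le m$. The latter tends to $0$ uniformly on $[-M,M]$ by hypothesis.
\item For the second term, we use $\varrho\in C^n$ with $m\le n-1$. This makes $\varrho^{(k)}$ Lipschitz on $[-M,M]$ for all $k\le m$, with a constant bounded by $\|\varrho\|_{C^{m+1}([-M,M])}$. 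The Bell-polynomial Lipschitz bound then gives an estimate of order $\|\tilde z-z\|_{W^{m,\infty}}$, which tends to $0$.
\end{itemize}
The compositions are of $C^m$ or smooth outer functions with $W^{m,\infty}$ inner functions. Since both networks are $C^{m}$ here ($\varrho\in C^n$, and the $\varrho_{M,\eta}$ are at least $W^{m,\infty}$ on $[-M,M]$, or one mollifies as in Theorem~\ref{thm:QKST}), the chain rule holds a.e. The final affine layer finishes the induction.

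The main obstacle is the uniform control needed in the composition step. The inner functions must stay inside the interval $[-M,M]$ where the hypothesis is valid, and the Lipschitz estimate for $\varrho^{(m)}$ is exactly where $m\le n-1$ is needed. Both issues are settled by fixing $M$ from the original network before choosing $\eta$, and by using the $L^\infty$ part of the inductive convergence to keep the perturbed pre-activations in range.
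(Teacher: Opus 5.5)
Your proposal is correct and follows essentially the same route as the paper. Both fix $M$ from the pre-activation ranges of the original network plus a margin of one, and both substitute $\varrho_{M,\eta}$ componentwise to get width $N\tilde N$ and depth $L\tilde L$. Both then induct over layers with the split $(\varrho_{M,\eta}-\varrho)(\tilde z)+(\varrho(\tilde z)-\varrho(z))$, control each term by Fa\`a di Bruno estimates, and use $m\le n-1$ in the Lipschitz bound for $\varrho^{(k)}$, $k\le m$.
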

\begin{proof}
Write the original network as
\[
\psi(\bm{x})=\bm{A}_L\circ\bm{\varrho}\circ\bm{A}_{L-1}\circ\cdots\circ\bm{\varrho}\circ\bm{A}_1\circ\bm{\varrho}\circ\bm{A}_0(\bm{x}),
\]
where each $\bm{A}_i$ is affine and $\bm{\varrho}$ denotes componentwise application of
$\varrho$. Let $U_\ell$ be the exact intermediate map just before the $\ell$-th
activation layer. Since $[-K,K]^d$ is compact and $\varrho\in C^n$, all $U_\ell$ are
bounded in $W^{m,\infty}$ and all their ranges are contained in a common compact
interval. Choose $M$ so large that these ranges, enlarged by one, are contained in
$[-M,M]$.

Replacing each scalar activation by the componentwise subnet $\varrho_{M,\eta}$ gives
\[
\psi_{K,\eta}(\bm{x})=\bm{A}_L\circ\bm{\varrho}_{M,\eta}\circ\bm{A}_{L-1}\circ\cdots\circ
\bm{\varrho}_{M,\eta}\circ\bm{A}_1\circ\bm{\varrho}_{M,\eta}\circ\bm{A}_0(\bm{x}),
\]
with width at most $N\tilde N$ and depth at most $L\tilde L$ after the usual parallel
realization and identity padding.

We prove by induction over layers that the perturbed intermediate maps $U_{\ell,\eta}$
satisfy
\[
U_{\ell,\eta}\to U_\ell\quad\text{in }W^{m,\infty}([-K,K]^d),
\qquad
\sup_\eta\|U_{\ell,\eta}\|_{W^{m,\infty}([-K,K]^d)}<\infty.
\]
The claim is immediate for the input layer and is preserved by affine layers. For an
activation layer, assume the induction hypothesis for $U_{\ell,\eta}$ and restrict to
small enough $\eta$ so that the range of $U_{\ell,\eta}$ remains in $[-M,M]$. Then
\[
\varrho_{M,\eta}(U_{\ell,\eta})-\varrho(U_\ell)
=\bigl(\varrho_{M,\eta}-\varrho\bigr)(U_{\ell,\eta})
+\bigl(\varrho(U_{\ell,\eta})-\varrho(U_\ell)\bigr).
\]
We use the following standard Fa\`a di Bruno estimates on the compact interval
$[-M,M]$: if $H\in W^{m,\infty}([-M,M])$ and $G$ has range in $[-M,M]$ with a
uniform $W^{m,\infty}$ bound, then
\[
\|H\circ G\|_{W^{m,\infty}}
\le C\|H\|_{W^{m,\infty}([-M,M])};
\]
and if $h\in W^{m+1,\infty}([-M,M])$ and $G,\widetilde G$ have ranges in
$[-M,M]$ and a common $W^{m,\infty}$ bound, then
\[
\|h\circ G-h\circ\widetilde G\|_{W^{m,\infty}}
\le C\|G-\widetilde G\|_{W^{m,\infty}}.
\]
For smooth functions these are obtained by differentiating the compositions and
using the multivariate Fa\`a di Bruno formula; the Sobolev case follows by
mollification and passage to weak derivatives. Applying the first estimate to
$H=\varrho_{M,\eta}-\varrho$ and $G=U_{\ell,\eta}$ gives
\[
\| (\varrho_{M,\eta}-\varrho)(U_{\ell,\eta})\|_{W^{m,\infty}}
\le C\|\varrho_{M,\eta}-\varrho\|_{W^{m,\infty}([-M,M])}\to0.
\]
Applying the second estimate with $h=\varrho$ is legitimate because $m\le n-1$ and
$\varrho\in C^n$, hence $\varrho\in W^{m+1,\infty}([-M,M])$; it yields
\[
\|\varrho(U_{\ell,\eta})-\varrho(U_\ell)\|_{W^{m,\infty}}
\le C\|U_{\ell,\eta}-U_\ell\|_{W^{m,\infty}}\to0.
\]
The same two estimates also give a uniform $W^{m,\infty}$ bound for the next layer.
After finitely many layers we obtain
$\|\psi_{K,\eta}-\psi\|_{W^{m,\infty}([-K,K]^d)}\to0$.
\end{proof}

\begin{theorem}\label{thm:sigmoidal-loc}
Assume $n\in \mathbb N$ and $M>0$. For any $\eta\in (0,1)$, there exists an $\widetilde{\DUAF}_n$-activated network $\Phi_{M, \eta}$ with width no greater than $S_n$, where $S_n:=\max\{4n^2+19,26\}$, and depth $3n^2+5$, whose parameters depend on $n$, $M$, and $\eta$, such that
\[
\|\Phi_{M, \eta}-\DUAF_n\|_{W^{n-1,\infty}([-M, M])}\xrightarrow{\,\eta\to 0\,}
0.
\]
\end{theorem}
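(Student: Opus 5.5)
The plan is to simulate $\DUAF_n$ branch by branch, using three facts built into the definition of $\widetilde{\DUAF}_n$, and then glue the branches with cutoffs made from $\widetilde{\DUAF}_n$ itself. First, on $[-\tfrac92,0]$ we have $\widetilde{\DUAF}_n=d_n\DUAF_n=d_nh_n$, so the middle branch is reproduced exactly, up to a factor $d_n$. Second, for $x>0$ the integral definition gives
\[
\widetilde{\DUAF}_n'(x)=\frac{c_n\,\DUAF_n(x)}{(1+x)^2},
\qquad\text{so}\qquad
\DUAF_n(x)=\frac{(1+x)^2}{c_n}\,\widetilde{\DUAF}_n'(x).
\]
Third, for $x<-\tfrac92$,
\[
\DUAF_n(x)=-\tfrac95-\frac{x^2}{d_n}\,\widetilde{\DUAF}_n'(x).
\]

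Step 1 establishes the algebraic primitives. Since $\widetilde{\DUAF}_n/d_n$ coincides with $h_n$ on $[-\tfrac92,0]$, and hence with $-2-t^{-1}$ on $[-4,-\tfrac{13}{11}]$ and with $t$ on $[-1,-\tfrac4{11}]$, the constructions of Lemma~\ref{lem:sigma_n-properties}(i) transfer verbatim after rescaling. This gives exact identity maps, exact squaring via $-12\,\DUAF_n(-2\DUAF_n(x-3)-2\DUAF_n(-x-3)-8)-15$, and hence exact multiplication on bounded boxes, all with fixed width and depth. All inner arguments must stay inside $[-\tfrac92,0]$; this is a finite check of the affine constants.

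Step 2 replaces the derivative by a difference quotient,
\[
D_\eta(x):=\frac{\widetilde{\DUAF}_n(x+\eta)-\widetilde{\DUAF}_n(x)}{\eta},
\]
which is a width-$2$, depth-$1$ network. On each of the two tails, $\widetilde{\DUAF}_n'$ is a $C^n$ function times a smooth factor. Taylor's formula with integral remainder then gives $D_\eta\to\widetilde{\DUAF}_n'$ in $W^{n-1,\infty}$ on compact subsets of the open tails. Inserting $D_\eta$ into the two displayed identities and multiplying by $(1+x)^2/c_n$, respectively $-x^2/d_n$, with the exact multipliers from Step 1 produces networks $R_\eta$ and $L_\eta$ that converge in $W^{n-1,\infty}$ to the smooth continuations of the right branch $g_n$ and the left tail branch on $[-M,M]$ (on $[-M,M]$ the arguments $x+\eta$ stay bounded).

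Step 3, the gluing, is the main obstacle. I would write
\[
\DUAF_n=\chi_{>0}\,g_n+\chi_{[-9/2,0]}\,h_n+\chi_{<-9/2}\,\bigl(\text{tail}\bigr),
\]
where each piece is $C^n$ across the junctions $0$ and $-\tfrac92$ because $g_n$ and $h_n$ have vanishing derivatives up to order $n$ there. The sharp indicators are replaced by $\widetilde{\DUAF}_n$-cutoffs such as $\chi_\eta(x)=\tfrac12\bigl(\widetilde{\DUAF}_n(x/\eta^{\theta})+1\bigr)$, shifted to the appropriate junction, and the products are formed exactly by Step 1. Two error sources have to be controlled.

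\textbf{Junction layer.} Near a junction, $|x|\lesssim\eta^\theta$, and the mismatched branch, for example $(1+x)^2\widetilde{\DUAF}_n'(x)/c_n$ for $x<0$, which equals $(1+x)^2d_nh_n'(x)/c_n$, vanishes to order $n-1$. Its size is therefore $O(|x|^{n})$, while $\chi_\eta^{(j)}=O(\eta^{-\theta j})$. By Leibniz, the terms of order $\le n-1$ are $O(\eta^{\theta})$.

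\textbf{Tails of the cutoff.} Away from the junction, the sigmoid converges to its limits only at rate $O(\eta^\theta/|x|)$, and the difference-quotient error contributes $O(\eta)$ times $\eta^{-\theta(n-1)}$. Choosing $\theta$ small, for example $\theta=1/(2n)$, makes every contribution tend to $0$.

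Making this rigorous is the content of the cutoff estimates, and I expect it to be the hardest part. To absorb polynomial factors I would try iterating products of order $n$, and I expect this iteration to be responsible for the $O(n^2)$ depth.

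Finally, I would assemble the three gated branches in parallel and count the resulting width and depth, aiming for width $\max\{4n^2+19,26\}$ and depth $3n^2+5$, padding with the exact identity of Step 1 where needed.
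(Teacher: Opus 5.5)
There is a genuine gap in Step~3, in your treatment of the tails of the cutoff. You check only the zeroth-order decay $O(\eta^\theta/|x|)$ of $\chi_\eta(x)=\tfrac12\bigl(\widetilde{\DUAF}_n(x/\eta^\theta)+1\bigr)$. In $W^{n-1,\infty}$, however, what matters is how fast its \emph{derivatives} decay, and there the construction of $\widetilde{\DUAF}_n$ works against you. For $t>0$ one has $\widetilde{\DUAF}_n'(t)=c_n g_n(t)(1+t)^{-2}$, where $g_n$ is periodic and does not decay; on the left, $\varrho_n$ plays the same role. Hence $\widetilde{\DUAF}_n^{(j)}(t)=c_n g_n^{(j-1)}(t)(1+t)^{-2}+O(t^{-3})$ decays only like $t^{-2}$, whatever $j$ is. Consequently, for $x$ at unit distance from the junction,
\[
\chi_\eta^{(j)}(x)\approx\frac{c_n}{2}\,\eta^{\theta(2-j)}\,\frac{g_n^{(j-1)}(x\eta^{-\theta})}{x^{2}}.
\]
This oscillates with amplitude of order one for $j=2$ and blows up for $j\ge3$, for every $\theta>0$. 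Taking $\theta$ small cannot change the sign of the exponent.

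These derivatives multiply mismatches that are of order one away from the junction. For instance, $d_n^{-1}\widetilde{\DUAF}_n-\DUAF_n$ is a nonzero constant on $[1,2]$, where $g_n\equiv0$ but $\widetilde{\DUAF}_n>0$. The same happens on the ``wrong'' side of each junction at distance of order one. So your glued network does not converge in $W^{n-1,\infty}([-M,M])$ once $n\ge3$.

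The missing idea is to sharpen the sigmoid before using it as a cutoff. This, rather than absorbing polynomial factors, is what produces the $n^2$ in the architecture. The paper takes $u=\tfrac12(1-\widetilde{\DUAF}_n)$ and $t_\eta(x)=\eta^{-\alpha}(\tfrac\eta2-x)$ with $\alpha=1+\tfrac2n$. It then sets $g_\eta=G\circ u\circ t_\eta$ with $G(y)=y^{n^2}/(y^{n^2}+(1-y)^{n^2})$, so that $|G^{(k)}(y)|\lesssim y^{n^2-k}$ and $|(1-G)^{(k)}(y)|\lesssim(1-y)^{n^2-k}$.

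Off the transition layers, the paper has $\min\{u,1-u\}\circ t_\eta=O(\eta^{\alpha-1})$ and $|(u\circ t_\eta)^{(j)}|\lesssim\eta^{2(\alpha-1)-\alpha j}$; the latter is exactly the $t^{-2}$ decay above. Every Fa\`a di Bruno term is then bounded by $\eta^{(\alpha-1)(n^2-k)}\eta^{2k(\alpha-1)-\alpha m}\le\eta^{2n-\alpha m}=O(\eta^{n-1+2/n})$ for $m\le n-1$ (Lemma~\ref{lem:chi-est}).

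The transitions are placed just outside $[-\tfrac92,0]$, namely in $[0,\eta]$ and $[-\tfrac92-\eta,-\tfrac92]$. The middle branch $d_n^{-1}\widetilde{\DUAF}_n$ is therefore exact on $[-\tfrac92,0]$, and its mismatch in the layers is $O(\eta^{n+1-m})$ by endpoint flatness. Against $\|\chi_{2,\eta}^{(m)}\|=O(\eta^{-\alpha m})$ this gives $O(\eta^{2/n})$. The resulting constraint $\alpha<1+\tfrac{2}{n-1}$ is why the exponent in $G$ must be of order $n^2$ in this scheme.

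A bare power of your cutoff would either sharpen only one side or break $\sum_i\chi_{i,\eta}=1$. The paper's normalization by $y^{n^2}+(1-y)^{n^2}$ is realized with the reciprocal network of Lemma~\ref{lem:reciprocal-dyadic} together with degree-$n^2$ monomials. This is where the width $4n^2+19$ and depth $3n^2+5$ come from.
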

The proof of Theorem~\ref{thm:sigmoidal-loc} is technically involved.
For clarity of exposition, we defer the complete and detailed proof to
Section~\ref{sec:proof-sigmoidal-loc}. Together with Lemma~\ref{lem:old-to-new-1}, this result provides all the ingredients
needed to prove Theorem~\ref{thm:sigmoidal}. The remaining argument is
straightforward.

\begin{proof}[Proof of Theorem~\ref{thm:sigmoidal}]
Fix an integer $n\ge1$ and assume that $1\le s\le n$. By Theorem~\ref{thm:global-ws}, we know that for any $\varepsilon>0$ and $f\in W^{s, \infty}((a, b)^d)$, there exists a $\DUAF_n$--activated network $\Phi_{\varepsilon}$ with width less than $N_{s,d}$ and depth less than $L_{s,d}$ such that 
\[
\|\Phi_{\varepsilon}-f\|_{W^{s-1, \infty}((a, b)^d)}<\frac{\varepsilon}{2}. 
\]
Let $K>0$ such that $(a, b)^d\subseteq (-K, K)^d$. Lemma~\ref{lem:old-to-new-1} together with Theorem~\ref{thm:sigmoidal-loc} shows that there exists a $\eta(\varepsilon)>0$ small enough and an $\widetilde{\DUAF}_n$--activated network $\widetilde{\Phi}_{\varepsilon}:=\psi_{K,\eta(\varepsilon)}$ with width no greater than $\max\{4n^2+19,26\}N_{s,d}$ and depth no greater than $(3n^2+5)L_{s,d}$ such that 
\[
\|\widetilde{\Phi}_{\varepsilon}-\Phi_{\varepsilon}\|_{W^{s-1,\infty}((a, b)^d)}
\le
\|\widetilde{\Phi}_{\varepsilon}-\Phi_{\varepsilon}\|_{W^{s-1,\infty}([-K, K]^d)}<\frac{\varepsilon}{2}.
\]
Hence, applying the triangle inequality, we have
\[
\|\widetilde{\Phi}_{\varepsilon}-f\|_{W^{s-1, \infty}((a, b)^d)}
\le
\|\widetilde{\Phi}_{\varepsilon}-\Phi_{\varepsilon}\|_{W^{s-1,\infty}((a, b)^d)}
+
\|\Phi_{\varepsilon}-f\|_{W^{s-1, \infty}((a, b)^d)}
<\varepsilon.
\]
This completes the proof.
\end{proof}

\subsection{Proof of Theorem~\ref{thm:sigmoidal-loc}}\label{sec:proof-sigmoidal-loc}

This is the technical core of the sigmoidal reduction.  On the middle interval
$[-9/2,0]$, the activation $\widetilde{\DUAF}_n$ is exactly a scalar multiple of
$\DUAF_n$, so no approximation is needed there.  On the positive side, the integral
identity defining $\widetilde{\DUAF}_n$ implies that $\DUAF_n$ can be recovered from
$\widetilde{\DUAF}_n'$ after multiplying by $(x+1)^2/c_n$; finite differences of
$\widetilde{\DUAF}_n$ therefore approximate $\DUAF_n$ in Sobolev norm.  The far-left
region is treated analogously using the left integral branch and a factor $x^2/d_n$.

The remaining task is to glue these three pieces without losing derivative control.
We carefully build a smooth partition of unity from
$G(t)=t^{n^2}/(t^{n^2}+(1-t)^{n^2})$, which is realizable by
$\widetilde{\DUAF}_n$ networks using multiplication and reciprocal subnets.  The
transition width is $O(\eta)$, and the exponent $n^2$ together with the scaling
$\alpha=1+2/n$ ensures that the derivatives of the cutoffs are small enough away
from the transition layers.  Near the transition layers, the endpoint flatness of
$\DUAF_n$ compensates for the derivative growth of the cutoffs, yielding convergence
in $W^{n-1,\infty}([-M,M])$. We start by introducing the following lemma, which will be used later to construct partition functions.

\begin{lemma}\label{lem:reciprocal-dyadic}
Let $n, N\in\mathbb N$. There exists an $\widetilde{\DUAF}_n$--activated network $\varphi:\mathbb R\to\mathbb R$ with width $1$ and depth $N$ such that
\[
\varphi(x)=\frac{1}{x},\qquad x\in[2^{1-N},1].
\]
\end{lemma}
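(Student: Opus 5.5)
The plan is to use only the exact rational branch of $\widetilde{\DUAF}_n$ on the middle interval and build $1/x$ by composing Möbius maps, one per layer. On $[-\tfrac92,0]$ we have $\widetilde{\DUAF}_n=d_n\DUAF_n=d_nh_n$. On $[-4,-\tfrac{13}{11}]$ we have $h_n(t)=-2-t^{-1}$, so
\[
-\tfrac{1}{t}=d_n^{-1}\widetilde{\DUAF}_n(t)+2,\qquad t\in[-4,-\tfrac{13}{11}].
\]
Hence a single neuron $y\mapsto \widetilde{\DUAF}_n(\lambda y+\mu)$, followed by an affine map, realizes exactly any map $y\mapsto A+B/(\lambda y+\mu)$. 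The only requirement is that $\lambda y+\mu$ stays in $[-4,-13/11]$ on the relevant range of $y$. All trailing affine maps can be absorbed into the input affine map of the next neuron, so a chain of $N$ such neurons is a width-$1$, depth-$N$ network. The interval $[-4,-13/11]$ has endpoint ratio $44/13>3$, so one neuron can realize $1/y$ exactly on any positive interval whose endpoint ratio is at most $3$.

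I will prove by induction on $k\ge1$ the following claim: for every $a\in(0,1]$ with $1/a\le r_k:=2^k+1$, there is a width-$1$, depth-$k$ network equal to $1/x$ on $[a,1]$. The lemma then follows, since $2^{N-1}\le 2^N+1=r_N$.

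For the base case $k=1$, take the input $t=-4x$. For $x\in[a,1]$ with $a\ge 1/3$, this gives $t\in[-4,-4a]\subset[-4,-4/3]\subset[-4,-13/11]$. Then $1/x=4\bigl(d_n^{-1}\widetilde{\DUAF}_n(-4x)+2\bigr)$.

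For the inductive step, let $1/a\le r_{k+1}$. The first neuron computes $L(x):=\tfrac{2x}{x+1}=2-\tfrac{2}{x+1}$. This is realized exactly by feeding $t=-2(x+1)\in[-4,-2]$ into $\widetilde{\DUAF}_n$ for $x\in[0,1]$. $L$ maps $[a,1]$ increasingly onto $[a',1]$ with $a'=\tfrac{2a}{1+a}$. Then $1/a'=\tfrac{1/a+1}{2}\le \tfrac{r_{k+1}+1}{2}=r_k$, so by the induction hypothesis a depth-$k$ network $\varphi_k$ equals $1/y$ on $[a',1]$. Since $1/L(x)=\tfrac{1}{2}+\tfrac{1}{2x}$, we get
\[
\frac1x=2\varphi_k(L(x))-1 .
\]
After absorbing affine maps, this is a width-$1$ network of depth $k+1$.

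There is no analytic obstacle: the identity is exact, with no limit in $\eta$. The step needing care is the bookkeeping. We must check that every intermediate pre-activation lands inside $[-4,-13/11]$, where $\widetilde{\DUAF}_n$ coincides with $d_n(-2-t^{-1})$; this is guaranteed by the ranges $[-4,-4/3]$ and $[-4,-2]$ above. We must also check that the recurrence $r_{k+1}=2r_k-1$ indeed reaches the dyadic ratio $2^{N-1}$ within $N$ layers.
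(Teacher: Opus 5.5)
Your proof is correct and is essentially the paper's argument. The paper also iterates $T(x)=\frac{2x}{x+1}$, realized on the rational branch with pre-activation $-\frac{13}{11}(1+x)$, for $N-1$ layers, finishes with a single-neuron reciprocal on $[1/2,1]$, and writes $\varphi=2^{N-1}R(T^{N-1}(x))+1-2^{N-1}$; this is exactly your recursion $\varphi_{k+1}=2\varphi_k\circ L-1$ unrolled, and your ratio-$3$ base case even yields the slightly stronger range $1/a\le 2^N+1$.
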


\begin{proof}
We first record a basic observation. If $0<\ell\le u$ satisfy $u/\ell\le \frac{44}{13}$, 
then $1/x$ can be represented exactly on $[\ell,u]$ by a single-layer $\widetilde{\DUAF}_n$--activated network. Indeed, setting $\lambda:=-\frac{13}{11\ell}$, we have for every $x\in[\ell,u]$,
\[
\lambda x\in\left[-\frac{13u}{11\ell},-\frac{13}{11}\right]\subset\left[-4,-\frac{13}{11}\right],
\]
and hence, by the rational branch of the activation,
\[
d_{n}^{-1}\widetilde{\DUAF}_n(\lambda x)=\DUAF_{n}(\lambda x)=-2-\frac{1}{\lambda x}.
\]
Therefore,
\[
-\lambda\bigl(d_{n}^{-1}\widetilde{\DUAF}_n(\lambda x)+2\bigr)=\frac{1}{x},
\qquad x\in[\ell,u].
\]

According to the above observation, we conclude that the following two $\widetilde{\DUAF}_n$--activated networks with width $1$ and depth $1$
\[
R(x):=\frac{26}{11}\left(d_{n}^{-1}\widetilde{\DUAF}_n\left(-\frac{26}{11}x\right)+2\right),
\quad
T(x):=
2-\frac{26}{11}\left(d_{n}^{-1}\widetilde{\DUAF}_n\left(-\frac{13}{11}(1+x)\right)+2\right)
\]
coincides with $1/x$ on $[1/2, 1]$ and $2x/(x+1)$ on $[0,1]$ respectively. We claim that for any $m\in\mathbb N_0$, 
\[
T^{m}(x)
:=
\underbrace{T\circ T\circ \cdots \circ T}_{m\text{ times}}(x)
=
\frac{2^{m}x}{1+(2^{m}-1)x},
\qquad x\in [0, 1].
\]
The case of $m=0, 1$ is obvious. Assume now the equality holds for $m-1\ge0$. Since $T([0,1])=[0,1]$, for $x\in[0,1]$, we have
\[
T^{m}(x)=T^{m-1}(T(x))=\frac{2^{m-1}T(x)}{1+(2^{m-1}-1)T(x)}=\frac{2^{m-1}\frac{2x}{x+1}}{1+(2^{m-1}-1)\frac{2x}{x+1}}=\frac{2^{m}x}{1+(2^{m}-1)x}.
\]
This proves the claim. Next, since $T^{m}$ is continuously increasing on $[0,1]$, we have
\[
T^{N-1}\left(\left[2^{1-N},1\right]\right)=\left[T^{N-1}\left(2^{1-N}\right),T^{N-1}(1)\right]=\left[\frac{1}{2-2^{1-N}}, 1\right]\subset [1/2, 1].
\]
Therefore, the following $\widetilde{\DUAF}_n$--activated network with width $1$ and depth $N$
\[
\varphi(x):=2^{N-1}R(T^{N-1}(x))+(1-2^{N-1})=\frac{2^{N-1}}{T^{N-1}(x)}+1-2^{N-1}=\frac{1}{x},\quad x\in[2^{1-N},1]
\]
has the desired property.

\end{proof}

\begin{proof}[Proof of Theorem~\ref{thm:sigmoidal-loc}.]
The proof is somewhat technical. We first define two constants
\[
A:=\|\DUAF_n\|_{W^{n, \infty}(\mathbb R)}, \qquad B:= \|\widetilde{\DUAF}_n\|_{W^{n, \infty}(\mathbb R)}.
\]
It suffices to consider the case that $M\ge \max\{B, 6\}$. This is a valid assumption, since an architecture that approximates $\DUAF_n$ well on the larger interval also approximates well on its sub-intervals. Dropping the index $M$ for simplicity, we intend to construct an $\widetilde{\DUAF}_n$--activated network $\Phi_{\eta}$ with desired architectural bounds such that $\|\Phi_{\eta}-\DUAF_n\|_{W^{n-1,\infty}([-M, M])}\rightarrow0$ as $\eta\rightarrow0^{+}$. 
We divide the proof into three steps.

\noindent\textbf{Step 1}: Reproducing $xy$ on $[-\tilde M, \tilde M]^2$ for any $\tilde M>0$.

We first recall the definition of $\widetilde{\DUAF}_n$. Since $d_{n}^{-1}\widetilde{\DUAF}_n(x)=\DUAF_n(x)=x$ on $[-1, -4/11]$, $\widetilde{\DUAF}_n$ can reproduce the identity map on any bounded interval and hence on any arbitrary dimensional compact set. Also notice that
\[
\widetilde{\DUAF}_n(x)=d_{n}\DUAF_n(x)=-2d_{n}-\frac{d_{n}}{x}, \qquad x\in\left[-4, -\frac{13}{11}\right].
\]
Therefore, the following function
\[
\Psi(x):=
-12\,d_{n}^{-1}\widetilde{\DUAF}_n\Bigl(
-2\,d_{n}^{-1}\widetilde{\DUAF}_n(x-3)
-2\,d_{n}^{-1}\widetilde{\DUAF}_n(-x-3)
-8
\Bigr)
-15
\]
coincides with $x^2$ on $[-1,1]$. Let $\Psi_{\tilde M}(x):=\tilde M^2\Psi\big(\tilde M^{-1}x)$, then $\Psi_{\tilde M}$ is an $\widetilde{\DUAF}_n$--activated network with width $2$ and depth $2$ and coincides with $x^2$ on $[-\tilde M, \tilde M]$. Next, we define
\[\Gamma_{\tilde M}(x,y):=2\Psi_{\tilde M}\Big(\frac{x+y}{2}\Big)-2\Psi_{\tilde M}\Big(\frac{x}{2}\Big)-2\Psi_{\tilde M}\Big(\frac{y}{2}\Big).
\]
Clearly, $\Gamma_{\tilde M}(x,y)=xy$ on $[-\tilde M, \tilde M]^2$ and represents an $\widetilde{\DUAF}_n$--activated network with width $6$ and depth $2$.

\noindent\textbf{Step 2}: Sobolev Approximation of $\DUAF_n$ on $[0, M]$ and $[-M, -9/2]$.

Recall that $\widetilde{\DUAF}_n(x)=\int_0^x \frac{c_n\,\DUAF_n(t)}{(t+1)^2}\,dt$ on $[0, \infty)$, where $c_n=(\int_0^\infty \frac{\DUAF_n(t)}{(t+1)^2}\,dt)^{-1}$. For $\delta\in(0, 1)$, we define the approximator $\psi_{\delta}$ as follows:
\[
\psi_{\delta}(x):=\frac{M+1}{c_n}\Gamma_{M+1}\left(\frac{1}{M+1}\Psi_{M+1}(x+1), 
\frac{\widetilde{\DUAF}_n(x+\delta)-\widetilde{\DUAF}_n(x)}{\delta}
\right).
\]
Since $\widetilde{\DUAF}_n$--activated networks can reproduce identity maps, $\psi_{\delta}$ can be regarded as an $\widetilde{\DUAF}_n$--activated network of width $6$ and depth $4$.

For $x\in [-M,M]$, since $x+1\in [-M+1,M+1]\subset [-M-1, M+1]$, we have $\Psi_{M+1}(x+1)=(x+1)^2$ and $\big|\frac{1}{M+1}\Psi_{M+1}(x+1)\big|\le M+1$. Moreover, the mean value theorem implies that $\big|\frac{1}{\delta}(\widetilde{\DUAF}_n(x+\delta)-\widetilde{\DUAF}_n(x))\big|\le \sup_{t\in \mathbb R}|\widetilde{\DUAF}_n^{(1)}(t)|\le B \le M$. The above observation together with the properties of $\Psi_{M+1}$ and $\Gamma_{M+1}$ suggests that we can rewrite $\psi_{\delta}$ as follows:
\[
\begin{aligned}
\psi_{\delta}(x)
&=\frac{M+1}{c_n}\frac{1}{M+1}\Psi_{M+1}(x+1)\,
\frac{\widetilde{\DUAF}_n(x+\delta)-\widetilde{\DUAF}_n(x)}{\delta} \\
&=\frac{1}{\delta}\int_{0}^{\delta} K(u,x)\,\DUAF_n(u+x)\,du,
\qquad x\in[0,M].
\end{aligned}
\]

Here $K(u,x)=\frac{(x+1)^2}{(u+x+1)^2}$. From the definition, it is clear that $\psi_{\delta}\in C^n(\mathbb R)$. Now, we are ready to prove that $\|\psi_\delta-\DUAF_n\|_{W^{n,\infty}([0,M])}\xrightarrow{\,\delta\to 0\,}0$.

For $x\in[0,M]$ and $u\in[0,1]$, we have $x+u+1\ge1$ and a direct computation shows
\[
K(u,x)-1
=\frac{(x+1)^2-(x+u+1)^2}{(x+u+1)^2}
=-\frac{u(2x+u+2)}{(x+u+1)^2}.
\]
Hence,
\begin{equation}\label{K-1}
|K(u,x)-1|\le \frac{u(2x+2u+2)}{(x+u+1)^2}=\frac{2u}{(x+u+1)}\le 2u.
\end{equation}
Moreover,
\[
\partial_x K(u,x)
=\partial_x\!\left(\frac{(x+1)^2}{(x+u+1)^2}\right)
=\frac{2(x+1)u}{(x+u+1)^3} \implies |\partial_x K(u,x)|\le 2(M+1)\,u.
\]
By repeatedly differentiating $(x+1)^2(x+u+1)^{-2}$, one obtains similarly that for
each integer $r\ge1$ there exists $C_r:=C(M,r)>0$ such that
\begin{equation}\label{eq:partial-k}
|\partial_x^{\,r}K(u,x)|\le C_r\,u,
\qquad (u,x)\in[0,1]\times[0,M].
\end{equation}

Fix any $m\in\{0,1,\dots,n\}$. Since $\DUAF_n\in C^n(\mathbb R)$, differentiation under the
integral sign is justified and Leibniz's rule yields
\[
\psi_\delta^{(m)}(x)
=\frac{1}{\delta}\int_0^\delta
\sum_{j=0}^m\binom{m}{j}\,
\partial_x^{\,m-j}K(u,x)\,\DUAF_n^{(j)}(x+u)\,du.
\]
Subtracting $\DUAF_n^{(m)}(x)$ and taking the absolute values, the triangle inequality gives
\begin{equation}\label{eq:derivative-estimate}
\begin{aligned}
|\psi_\delta^{(m)}(x)-\DUAF_n^{(m)}(x)|
&\le \frac{1}{\delta}\int_0^\delta
|\DUAF_n^{(m)}(x+u)-\DUAF_n^{(m)}(x)|\,du
\\
&\quad +\frac{1}{\delta}\int_0^\delta
|K(u,x)-1|\,|\DUAF_n^{(m)}(x+u)|\,du
\\
&\quad +\sum_{j=0}^{m-1}\binom{m}{j}\frac{1}{\delta}\int_0^\delta
|\partial_x^{\,m-j}K(u,x)|\,|\DUAF_n^{(j)}(x+u)|\,du.
\end{aligned}
\end{equation}
Define
\[
\omega(\delta)
:=
\max_{0\le m\le n}
\sup_{\substack{x,y\in[0,M+1]\\ |x-y|\le\delta}}
\bigl|\DUAF_n^{(m)}(x)-\DUAF_n^{(m)}(y)\bigr|,
\]
Notice that $\omega(\delta)$ satisfies $\omega(\delta)\to0$ as $\delta\to0$ because $\DUAF_n\in C^n(\mathbb R)$. Then for all
$x\in[0,M]$ and $u\in[0,\delta]$,
\begin{equation}\label{difference}
|\DUAF_n^{(m)}(x+u)-\DUAF_n^{(m)}(x)|
\le \omega(u)\le \omega(\delta),
\qquad 0\le m\le n.
\end{equation}
Combining \eqref{K-1}, \eqref{eq:partial-k} and \eqref{difference} in \eqref{eq:derivative-estimate}, we obtain a constant $\tilde C(M,n)>0$ such that
\[
\begin{aligned}
\sup_{x\in[0,M]}\bigl|\psi_\delta^{(m)}(x)-\DUAF_n^{(m)}(x)\bigr|
&\le \frac{1}{\delta}\int_0^\delta \omega(u)\,du + \frac{A}{\delta}\int_0^\delta
u\Bigl(2+\sum_{j=0}^{m-1}\binom{m}{j}C_{m-j}\Bigr)\,du \\
&\le \omega(\delta)+\tilde C(M,n)\,\delta,
\qquad 0\le m\le n.
\end{aligned}
\]
Taking the maximum over $m=0,1,\dots,n$, we conclude that
\[
\|\psi_\delta-\DUAF_n\|_{W^{n,\infty}([0,M])}
\le
\omega(\delta)
+
\tilde C(M,n)\delta
\xrightarrow{\,\delta\to 0\,}
0.
\]

Similarly, define 
\[
\psi^{*}_{\delta}(x):=-\frac{M}{d_n}\Gamma_{M}\left(\frac{1}{M}\Psi_{M}(x), 
\frac{\widetilde{\DUAF}_n(x)-\widetilde{\DUAF}_n(x-\delta)}{\delta}
\right)-\frac{9}{5}.
\]
We verify the left-tail estimate explicitly.  For $x\in[-M,-9/2]$ and
$t\in[-\delta,0]$, using the definition of $\widetilde{\DUAF}_n$ on the left
region gives
\[
\widetilde{\DUAF}_n'(x+t)
=-d_n\frac{\DUAF_n(x+t)+9/5}{(x+t)^2}.
\]
Hence
\[
\psi^{*}_{\delta}(x)
=\frac1\delta\int_{-\delta}^{0}
K_-(t,x)\bigl(\DUAF_n(x+t)+9/5\bigr)\,dt-\frac95,
\qquad
K_-(t,x):=\frac{x^2}{(x+t)^2}.
\]
Assume $0<\delta\le1$.  Since $x\le-9/2$ and $t\in[-\delta,0]$, we have
$|x+t|\ge 9/2$ and the elementary derivatives of $K_-$ satisfy
\[
|K_-(t,x)-1|\le C_M|t|,
\qquad
|\partial_x^rK_-(t,x)|\le C_{M,r}|t|,
\quad 1\le r\le n.
\]
Set
\[
\omega_-(\delta):=
\max_{0\le j\le n}\sup_{\substack{y,z\in[-M-1,-9/2]\\ |y-z|\le\delta}}
|\DUAF_n^{(j)}(y)-\DUAF_n^{(j)}(z)|.
\]
Then $\omega_-(\delta)\to0$ as $\delta\to0$ because $\DUAF_n\in C^n$ on the
compact interval $[-M-1,-9/2]$.  Differentiating the preceding integral formula
under the integral sign and using the product rule, the terms with
$K_-(t,x)-1$ or $\partial_x^rK_-(t,x)$ are bounded by $C(M,n)\delta$, while the
translation error $\DUAF_n^{(j)}(x+t)-\DUAF_n^{(j)}(x)$ is bounded by
$\omega_-(\delta)$.  Therefore, for each $0\le m\le n$,
\[
\|\partial_x^m(\psi^{*}_{\delta}-\DUAF_n)\|_{L^\infty([-M,-9/2])}
\le \omega_-(\delta)+C(M,n)\delta,
\]
and consequently
\[
\|\psi^{*}_\delta-\DUAF_n\|_{W^{n,\infty}([-M,-9/2])}
\xrightarrow{\,\delta\to 0\,}0.
\]

\noindent\textbf{Step 3}: The global approximator $\Phi_{\eta}$ of $\DUAF_n$ on $[-M, M]$.

On $[-M, M]$, it can be seen from the discussion in Step 2 that 
$\psi_\delta$ and $\psi^{*}_{\delta}$ always have the following representations:
\[
\psi_\delta(x)=c_n^{-1}(x+1)^2\frac{\widetilde{\DUAF}_n(x+\delta)-\widetilde{\DUAF}_n(x)}{\delta}=\frac{(x+1)^2}{c_n\delta}\int_0^{\delta}\widetilde{\DUAF}_n^{(1)}(t+x)\,dt,
\]
\[
\psi^{*}_{\delta}(x)
=
-d_n^{-1}x^2\frac{\widetilde{\DUAF}_n(x)-\widetilde{\DUAF}_n(x-\delta)}{\delta}-\frac{9}{5}
=
-\frac{x^2}{d_n\delta}\int_{-\delta}^{0}\widetilde{\DUAF}_n^{(1)}(t+x)\,dt
-\frac{9}{5}.
\]
We also need uniform bounds for these finite-difference networks.  For
$0\le r\le n-1$, differentiating the averaged-derivative representation gives
\[
\partial_x^r\psi_\delta(x)
=\frac1{c_n}\sum_{\ell=0}^{\min\{2,r\}}\binom r\ell
\partial_x^\ell (x+1)^2\,\frac1\delta\int_0^\delta
\widetilde{\DUAF}_n^{(r-\ell+1)}(x+t)\,dt.
\]
The same computation for $\psi^*_{\delta}$ gives a finite sum with the factor
$x^2$ and the average over $[-\delta,0]$.  Thus no negative power of $\delta$
appears after differentiation: for $0<\delta\le1$ there exists
$\tilde C_1(M,n)>0$, independent of $\delta$, such that
\[
\max\left\{\|\psi_\delta\|_{W^{n-1,\infty}([-M,M])},
\|\psi^{*}_{\delta}\|_{W^{n-1,\infty}([-M,M])}\right\}
\le
\tilde C_1(M,n)\,\|\widetilde{\DUAF}_n\|_{W^{n,\infty}([-M-1,M+1])}.
\]
We therefore define 
\begin{equation}\label{K}
K:= 1+A+B+ \tilde C_1(M, n)B. 
\end{equation}
Notice that $K$ serves as a uniform Sobolev bound for $\DUAF_n$, $\widetilde{\DUAF}_n$, $\psi_\delta$, and $\psi^{*}_{\delta}$ on $[-M, M]$ up to order $n-1$.
Now, let us begin to define the global Sobolev approximator $\Phi_{\eta}$, with the
scaling parameter $\eta>0$. Before providing the exact formula for $\Phi_{\eta}$, we need to set up a few auxiliary functions, which will be the building blocks of $\Phi_{\eta}$ and can be realized as $\widetilde{\DUAF}_n$--activated networks. Define
\[
\alpha:=1+\frac{2}{n},\qquad 
t_\eta(x):=\eta^{-\alpha}\Big(\frac{\eta}{2}-x\Big),
\qquad
u(t):=\frac{1-\widetilde{\DUAF}_n(t)}{2}\in[0,1].
\]
and let $\eta\le 10^{-\frac{1}{\alpha-1}}=10^{-\frac{n}{2}}$ through the rest of the proof.  We introduce a two--sided smooth cutoff (with exponent fixed as $n^2$)
\[
G(x):=\frac{x^{n^2}}{x^{n^2}+(1-x)^{n^2}},\qquad
g_{\eta}(x)
:=G\circ u\circ t_\eta(x).
\]
Set $m:=n^2$. We now record the exact size of the cutoff subnet. Recall from Step~1
that $\widetilde{\DUAF}_n$ can reproduce identity maps and the multiplier
$\Gamma_{\tilde M}$ on any prescribed bounded cube. Repeating the monomial
construction in Lemma~\ref{lemma:poly} with this multiplier gives a block of width
$m+3$ and depth $2m-2$ that equals $x^m$ on $[0,1]$. Running two such blocks in
parallel gives
\[
  a(x):=x^m,\qquad b(x):=(1-x)^m,
\]
with width $2(m+3)$ and depth $2m-2$. We choose the affine output of this block to
produce the two channels $(a,a+b)$. Since $a+b\in[2^{1-m},1]$ for $x\in[0,1]$,
Lemma~\ref{lem:reciprocal-dyadic} with $N=m$ gives $(a+b)^{-1}$ with width $1$ and
depth $m$; during these $m$ layers the channel $a$ is carried by an identity channel.
Thus the pair $(a,(a+b)^{-1})$ is realized with width $2(m+3)$ and depth
$(2m-2)+m=3m-2$. Composing this pair with the multiplier
$\Gamma_{2^{m-1}}$ adds two layers and does not increase the width because
$2(m+3)\ge6$. Hence
\[
  G(x)=\frac{x^m}{x^m+(1-x)^m},\qquad x\in[0,1],
\]
is realized with width $2m+6$ and depth $3m$. Finally, the map
$u\circ t_\eta$ is a width-one, depth-one $\widetilde{\DUAF}_n$ block and has range
in $[0,1]$. Therefore
$g_\eta=G\circ u\circ t_\eta$ is realized with the sharp size used below:
\[
  W_g=2n^2+6,
  \qquad
  L_g=3n^2+1.
\]

Using $g_{\eta}$, we define a partition of unity as follows:
\begin{equation}\label{chi}
\begin{alignedat}{2}
\chi_{1,\eta}(x)
&:= \Gamma_1\!\left(1-g_\eta\!\left(x+\frac92+\eta\right),\,1-g_\eta(x)\right)
&&= \left(1-g_\eta\!\left(x+\frac92+\eta\right)\right)\left(1-g_\eta(x)\right),\\[0.5em]
\chi_{2,\eta}(x)
&:= \Gamma_1\!\left(g_\eta\!\left(x+\frac92+\eta\right),\,1-g_\eta(x)\right)
&&= g_\eta\!\left(x+\frac92+\eta\right)\left(1-g_\eta(x)\right),\\[0.5em]
\chi_{3,\eta}(x)
&:= g_\eta(x),
\end{alignedat}
\end{equation}

 A direct computation shows that $\chi_{1,\eta}+\chi_{2,\eta}+\chi_{3,\eta}=1$ on $\mathbb R$. An illustration of the functions $\chi_{1,\eta},\chi_{2,\eta},\chi_{3,\eta}$ is provided in Figure~\ref{fig:chi}. 

\begin{figure}[h]
  \centering
  \includegraphics[width=\textwidth]{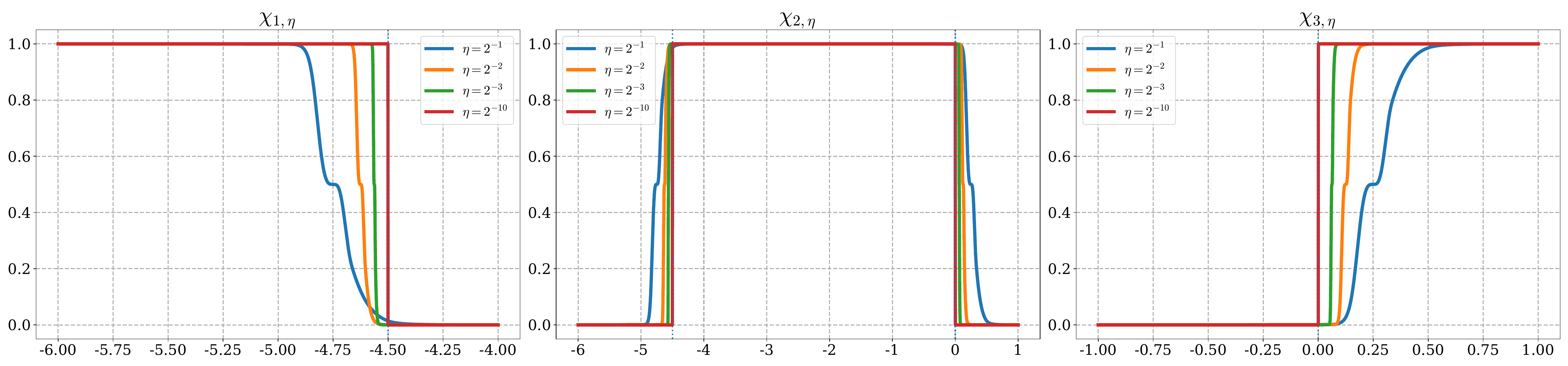}
  \caption{Behavior of the partition functions $\chi_{1,\eta},\chi_{2,\eta},\chi_{3,\eta}$ for $n=2$}
  \label{fig:chi}
\end{figure}

Intuitively speaking, these three functions decompose the real line into left, middle, and right regions, up to a smooth transition of width $\mathcal{O}(\eta)$. 
More precisely, $\chi_{1,\eta}$ is approximately supported on the left region $(-\infty,-9/2]$, $\chi_{2,\eta}$ localizes the middle region $[-9/2,0]$, and $\chi_{3,\eta}$ captures the right region $[0,\infty)$. 
The transitions between these regions occur only in narrow layers of size $\mathcal{O}(\eta)$ near $x=-9/2$ and $x=0$, where the functions vary smoothly between $0$ and $1$. 
In particular, away from these transition layers, each $\chi_{i,\eta}$ is approximately constant (either $0$ or $1$), so that the above construction behaves as a smooth partition of unity adapted to the three regions.

The polynomial order in the definition of $G$ is chosen to be $n^2$. 
This choice is mainly technical: it is sufficiently large to guarantee the 
Sobolev estimates needed in the subsequent approximation argument, while still 
keeping the cutoff construction relatively simple in the sense that they can be realized as $\widetilde{\DUAF}_n$-activated networks.

Since $\|\psi_\delta-\DUAF_n\|_{W^{n,\infty}([0,M])}\rightarrow{}0$ and $\|\psi^{*}_\delta-\DUAF_n\|_{W^{n,\infty}([-M,-9/2])}\rightarrow{}0$ as $\delta\rightarrow0$ (this is the conclusion of Step 2), for each $\eta>0$, we can choose a $\delta=\delta(\eta)>0$ such that
\[
\|\psi_\delta-\DUAF_n\|_{W^{n,\infty}([0,M])}\le \frac{\eta}{\|\chi_{3,\eta}\|_{W^{n-1,\infty}([0,M])}},
\]
\[
\|\psi^{*}_\delta-\DUAF_n\|_{W^{n,\infty}([-M,-9/2])}\le \frac{\eta}{\|\chi_{1,\eta}\|_{W^{n-1,\infty}([-M,-9/2])}}.
\]
With this choice of $\delta$, we then define
\begin{equation}\label{eq:sigmoidal-global-approximator}
\Phi_{\eta}
:=\Gamma_K(\psi^{*}_\delta,\chi_{1,\eta})+d_{n}^{-1}\,\Gamma_K(\widetilde{\DUAF}_n,\chi_{2,\eta})+\Gamma_K(\psi_{\delta},\chi_{3,\eta}).
\end{equation}
The architecture count below is layer-by-layer for this realization, with no extra
coarse padding. The two shifted copies of $g_\eta$ are computed in parallel with
width $2W_g=4n^2+12$ and depth $L_g=3n^2+1$. The two multipliers in
\eqref{chi} then form $\chi_{1,\eta}$ and $\chi_{2,\eta}$ in two additional layers;
while these two layers run, the channel $\chi_{3,\eta}=g_\eta$ is carried by an
identity channel. Thus the cutoff block outputs
$(\chi_{1,\eta},\chi_{2,\eta},\chi_{3,\eta})$ with width $4n^2+12$ and depth
$3n^2+3$.

The value block producing $(\psi^*_{\delta},d_n^{-1}\widetilde{\DUAF}_n,
\psi_{\delta})$ has depth $4$ and width $13$: its first two layers compute the two
square factors $\Psi_M(x)$, $\Psi_{M+1}(x+1)$ and the three activation values
$\widetilde{\DUAF}_n(x-\delta)$, $\widetilde{\DUAF}_n(x)$,
$\widetilde{\DUAF}_n(x+\delta)$, using width $2+2+3=7$; its last two layers run the
two product maps defining $\psi^*_{\delta}$ and $\psi_\delta$ in parallel and carry
$d_n^{-1}\widetilde{\DUAF}_n(x)$, using width $2\cdot6+1=13$.

To keep the depth minimal, we start this value block during the last four layers of
the cutoff block and carry the input $x$ by one identity channel before that point.
Consequently the only possible layer widths before the final products are
\[
4n^2+13,\qquad (4n^2+12)+7=4n^2+19,
\qquad 13+13=26.
\]
The three final products in \eqref{eq:sigmoidal-global-approximator} are then computed
by three parallel copies of $\Gamma_K$, using width $18$ and two further layers; the
last summation is affine. Hence
\[
  W_{\rm sim}(n)=\max\{4n^2+19,26\},
  \qquad
  L_{\rm sim}(n)=(3n^2+3)+2=3n^2+5.
\]
Therefore $\Phi_{\eta}$ is a $\widetilde{\DUAF}_n$--NN of width
$\max\{4n^2+19,26\}$ and depth $3n^2+5$. Moreover, since the
supremum norms of $\psi_\delta$, $\psi^{*}_{\delta}$, $\widetilde{\DUAF}_n$, $\chi_{1,\eta}$, $\chi_{2,\eta}$, and $\chi_{3,\eta}$ on
$[-M,M]$ are uniformly bounded by $K$, the defining property of $\Gamma_K$ yields the following point-wise
representation on $[-M,M]$:
\[
\Phi_{\eta}(x)
=
\psi^{*}_\delta(x)\,\chi_{1,\eta}(x)
+
d_{n}^{-1}\widetilde{\DUAF}_n(x)\,\chi_{2,\eta}(x)
+ 
\psi_\delta(x)\,\chi_{3,\eta}(x),
\qquad x\in[-M,M].
\]
Using the above point-wise representation, we can compare the overall differences of $\Phi_{\eta}$ and $\DUAF_n$ on $[-M, M]$ by
\begin{equation}\label{Phi-DUAF}
\Phi_{\eta}-\DUAF_n
=
(\psi^{*}_\delta-\DUAF_n)\,\chi_{1,\eta}
+
(d_{n}^{-1}\widetilde{\DUAF}_n-\DUAF_n)\,\chi_{2,\eta}
+ 
(\psi_\delta-\DUAF_n)\,\chi_{3,\eta}.
\end{equation}
Our goal is to show that $\|\Phi_{\eta}-\DUAF_n\|_{W^{n-1, \infty}([-M, M])}\xrightarrow{\,\eta\to 0^{+}\,}0$. This can be found in Step $4$.

\medskip
\noindent
\smallskip
\noindent\textbf{Step 4}: $\|\Phi_{\eta}-\DUAF_n\|_{W^{n-1,\infty}([-M,M])}\rightarrow0$.

To establish the desired convergence, it is necessary to analyze the behavior of the partition functions 
$\chi_{1,\eta},\chi_{2,\eta},\chi_{3,\eta}$ as the scaling parameter $\eta\to0^+$. 
These functions form a smooth decomposition of the domain, and their derivatives exhibit 
nontrivial scaling effects near the transition regions. The following lemma provides precise asymptotic estimates for these partition functions in 
Sobolev norms, which will be crucial in controlling the global approximation error.

\begin{lemma}\label{lem:chi-est} 
Let $K$ be the constant in \eqref{K}, and let $\chi_{1,\eta},\chi_{2,\eta},\chi_{3,\eta}$ be defined in \eqref{chi}. Then, for $0\le m\le n-1$, the following estimates hold in big-$\mathcal{O}$ notation (with constants depending on $M,n$ but independent of $\eta$): \[ \|\chi_{1,\eta}\|_{W^{n-1,\infty}([-\frac92,M])} = \mathcal{O}\!\left(\eta^{\,n-1+\frac2n}\right),\qquad \|\chi_{3,\eta}\|_{W^{n-1,\infty}([-M,0])} = \mathcal{O}\!\left(\eta^{\,n-1+\frac2n}\right). \] Moreover, for $\chi_{2,\eta}$ we have the regional estimates \[ \|\chi_{2,\eta}\|_{W^{n-1,\infty}([-M,-\frac92-\eta]\cup[\eta,M])} = \mathcal{O}\!\left(\eta^{\,n-1+\frac2n}\right), \] and on the transition regions \[ \|\chi_{2,\eta}^{(m)}\|_{L^\infty([-\frac92-\eta,-\frac92]\cup[0,\eta])} = \mathcal{O}\!\left(\eta^{-\alpha m}\right),\qquad 0\le m\le n-1. \] 
\end{lemma}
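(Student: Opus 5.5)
The plan is to reduce everything to the one-dimensional cutoff $g_\eta=G\circ u\circ t_\eta$. I would first prove two facts about it: a crude global derivative bound, and a sharp smallness bound on the regions where $g_\eta$ is supposed to be flat. The estimates for $\chi_{1,\eta},\chi_{2,\eta},\chi_{3,\eta}$ then follow from the product structure in \eqref{chi} and Leibniz's rule. Throughout, $m\le n-1$ and $\alpha=1+\frac2n$.

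\textbf{Step 1 (global bound).} Since $G$ is smooth on a neighbourhood of $[0,1]$ and $u=(1-\widetilde{\DUAF}_n)/2$ has bounded derivatives up to order $n$, Fa\`a di Bruno applied to $g_\eta(x)=G(u(t_\eta(x)))$ with $t_\eta'=-\eta^{-\alpha}$ gives
\[
\|g_\eta^{(m)}\|_{L^\infty(\mathbb R)}=\mathcal O(\eta^{-\alpha m}),\qquad 0\le m\le n-1 .
\]
Here I use $\eta^{-\alpha}\ge1$, so the top power $\eta^{-\alpha m}$ dominates. This bound alone already yields the transition-region estimate for $\chi_{2,\eta}$: apply Leibniz to $g_\eta(x+\frac92+\eta)(1-g_\eta(x))$, each factor contributing $\mathcal O(\eta^{-\alpha j})$.

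\textbf{Step 2 (tail asymptotics of $u$).} I would show that $u(t)=\mathcal O(|t|^{-1})$ and $u^{(k)}(t)=\mathcal O(|t|^{-2})$ for $1\le k\le n$ as $t\to+\infty$. This holds because $u'(t)=-\frac{c_n}{2}\DUAF_n(t)/(1+t)^2$, where $\DUAF_n=g_n$ is bounded with bounded derivatives on $t>0$. Symmetrically, $1-u(t)=(1+\widetilde{\DUAF}_n(t))/2=\mathcal O(|t|^{-1})$ and $(1-u)^{(k)}(t)=\mathcal O(|t|^{-2})$ as $t\to-\infty$, using the left integral branch of $\widetilde{\DUAF}_n$, which has integrand $(\DUAF_n+\frac95)/t^2$.

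Next I check which regime applies.
\begin{itemize}
\item If $x\le0$, then $t_\eta(x)\ge\frac12\eta^{1-\alpha}=\frac12\eta^{-2/n}$.
\item If $x\ge\eta$, then $t_\eta(x)\le-\frac12\eta^{-2/n}\le-5$.
\end{itemize}
The last inequality uses $\eta\le10^{-n/2}$, so $t_\eta$ lies in the left tail branch. Near $0$ one has $G(v)=\mathcal O(v^{n^2})$ and $G^{(j)}(v)=\mathcal O(v^{n^2-j})$. Each Fa\`a di Bruno term of $\frac{d^m}{dx^m}G(u(t_\eta))$ with $j$ inner derivatives is therefore bounded by
\[
\eta^{-\alpha m}\,|t|^{-(n^2-j)}\,|t|^{-2j}\le\eta^{-\alpha m}|t|^{-n^2}\lesssim\eta^{-\alpha m}\,\eta^{2n}.
\]
This gives $\|g_\eta\|_{W^{n-1,\infty}((-\infty,0])}=\mathcal O(\eta^{2n-\alpha(n-1)})$. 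Since $2n-\alpha(n-1)=n+1+\frac2n\ge n-1+\frac2n$, this is $\mathcal O(\eta^{n-1+2/n})$. The symmetry $1-G(v)=G(1-v)$ gives the same bound for $1-g_\eta$ on $[\eta,\infty)$.

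\textbf{Step 3 (assembly).} The claim for $\chi_{3,\eta}=g_\eta$ on $[-M,0]$ is exactly Step 2.

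For $\chi_{1,\eta}$ on $[-\frac92,M]$, the shifted argument $x+\frac92+\eta\ge\eta$, so the factor $1-g_\eta(x+\frac92+\eta)$ is $\mathcal O(\eta^{2n-\alpha j})$ in its $j$-th derivative. The other factor $1-g_\eta(x)$ contributes at most $\mathcal O(\eta^{-\alpha(m-j)})$ by Step 1. Leibniz then gives $\mathcal O(\eta^{2n-\alpha m})$, which is again $\mathcal O(\eta^{n-1+2/n})$.

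For $\chi_{2,\eta}$ the same reasoning applies on each region:
\begin{itemize}
\item on $[\eta,M]$ the factor $1-g_\eta(x)$ is small;
\item on $[-M,-\frac92-\eta]$ the factor $g_\eta(x+\frac92+\eta)$ is small, since its argument is $\le0$.
\end{itemize}

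The main obstacle is Step 2. It requires making the decay $u^{(k)}=\mathcal O(|t|^{-2})$ rigorous uniformly up to order $n$ from the integral definition of $\widetilde{\DUAF}_n$, and carefully bookkeeping the Fa\`a di Bruno terms so that the vanishing order $n^2$ of $G$ beats the $\eta^{-\alpha m}$ chain-rule losses. I would handle this by bounding every term by $\eta^{-\alpha m}|t|^{-n^2}$, using $|t|\ge1$ to absorb the extra decay.
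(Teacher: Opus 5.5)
Your proposal is correct and follows essentially the same route as the paper. You reduce to $g_\eta=G\circ u\circ t_\eta$, combine the tail decay $u=\mathcal O(|t|^{-1})$, $u^{(j)}=\mathcal O(|t|^{-2})$ (and its mirror for $1-u$ on the left branch) with the order-$n^2$ vanishing of $G$ and $1-G$ in Fa\`a di Bruno to get $\mathcal O(\eta^{2n-\alpha m})$ off the transition layers, use the rough $\mathcal O(\eta^{-\alpha m})$ bound elsewhere, and close with Leibniz on the products in \eqref{chi}. There is one harmless arithmetic slip: $2n-\alpha(n-1)$ equals exactly $n-1+\frac2n$, not $n+1+\frac2n$, and this is precisely the exponent the lemma claims.
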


We will use the estimates from Lemma~\ref{lem:chi-est} repeated throughout the rest of the proof. The proof of this lemma is postponed to Section~\ref{sec:proof-chi-est}.
To simplify the presentation of the five intervals, we introduce the following notation for the transition region and its complements.
\[
T_{\eta}:=\Big[-\frac92-\eta,-\frac92\Big]\cup[0,\eta], \quad
E_\eta:=[-M,M]\setminus T_{\eta}
=
[-M,-\tfrac92-\eta]\cup[-\tfrac92,0]\cup[\eta,M].
\]
We intend to estimate $\|\Phi_{\eta}-\DUAF_n\|_{W^{n-1,\infty}([-M,M])}$. In order to do so, we split the estimate into two parts:
\[
\|\Phi_{\eta}-\DUAF_n\|_{W^{n-1,\infty}([-M,M])}\le
\|\Phi_{\eta}-\DUAF_n\|_{W^{n-1,\infty}(E_{\eta})}
+
\|\Phi_{\eta}-\DUAF_n\|_{W^{n-1,\infty}(T_{\eta})}.
\]
\medskip
\noindent\textbf{For $\|\Phi_{\eta}-\DUAF_n\|_{W^{n-1,\infty}(E_{\eta})}$.}

Applying the triangle inequality to \eqref{Phi-DUAF}, we obtain
\[
\begin{aligned}
\|\Phi_\eta-\DUAF_n\|_{W^{n-1,\infty}(E_\eta)}
\le{}&
\|(\psi_\delta^*-\DUAF_n)\chi_{1,\eta}\|_{W^{n-1,\infty}(E_\eta)}
\\
&+
\|(d_n^{-1}\widetilde{\DUAF}_n-\DUAF_n)\chi_{2,\eta}\|_{W^{n-1,\infty}(E_\eta)}
\\
&+
\|(\psi_\delta-\DUAF_n)\chi_{3,\eta}\|_{W^{n-1,\infty}(E_\eta)}.
\end{aligned}
\]
We estimate the three terms separately. First, on \([-M,-\frac92-\eta]\), by the choice of
\(\delta=\delta(\eta)\),
\[
\|(\psi_\delta^*-\DUAF_n)\chi_{1,\eta}\|_{W^{n-1,\infty}([-M,-\frac92-\eta])}
=
\mathcal{O}(\eta).
\]
On \([-\frac92,0]\cup[\eta,M]\), Lemma~\ref{lem:chi-est} gives $\|\chi_{1,\eta}\|_{W^{n-1,\infty}([-\frac92,M])}
=\mathcal{O}(\eta^{\,n-1+\frac2n})$.
Moreover,
\[
\|\psi_\delta^*-\DUAF_n\|_{W^{n-1,\infty}([-M,M])}\le 2K.
\]
Hence, by the product estimate,
\[
\begin{aligned}
\|(\psi_\delta^*-\DUAF_n)\chi_{1,\eta}\|_{W^{n-1,\infty}([-\frac92,0]\cup[\eta,M])}
&\le
C K
\|\chi_{1,\eta}\|_{W^{n-1,\infty}([-\frac92,M])}
=
\mathcal{O}\!\left(\eta^{\,n-1+\frac2n}\right).
\end{aligned}
\]
Therefore,
\[
\|(\psi_\delta^*-\DUAF_n)\chi_{1,\eta}\|_{W^{n-1,\infty}(E_\eta)}
=
\mathcal{O}(\eta)+\mathcal{O}\!\left(\eta^{\,n-1+\frac2n}\right)=\mathcal{O}(\eta).
\]

For the third term, on \([\eta,M]\), by the choice of \(\delta=\delta(\eta)\),
\[
\|(\psi_\delta-\DUAF_n)\chi_{3,\eta}\|_{W^{n-1,\infty}([\eta,M])}
=
\mathcal{O}(\eta).
\]
On \([-M,-\frac92-\eta]\cup[-\frac92,0]\), Lemma~\ref{lem:chi-est} gives $\|\chi_{3,\eta}\|_{W^{n-1,\infty}([-M,0])}
=
\mathcal{O}(\eta^{\,n-1+\frac2n})$. Since
\[
\|\psi_\delta-\DUAF_n\|_{W^{n-1,\infty}([-M,M])}\le 2K,
\]
the product estimate gives
\[
\begin{aligned}
\|(\psi_\delta-\DUAF_n)\chi_{3,\eta}\|_{W^{n-1,\infty}([-M,-\frac92-\eta]\cup[-\frac92,0])}
&\le
C K
\|\chi_{3,\eta}\|_{W^{n-1,\infty}([-M,0])}
=
\mathcal{O}\!\left(\eta^{\,n-1+\frac2n}\right).
\end{aligned}
\]
Thus
\[
\|(\psi_\delta-\DUAF_n)\chi_{3,\eta}\|_{W^{n-1,\infty}(E_\eta)}
=
\mathcal{O}(\eta)+\mathcal{O}\!\left(\eta^{\,n-1+\frac2n}\right)=\mathcal{O}(\eta).
\]

It remains to estimate the middle term. On \([-\frac92,0]\), by the definition of
\(\widetilde{\DUAF}_n\),
\[
\widetilde{\DUAF}_n=d_n\DUAF_n,
\qquad\text{hence}\qquad
d_n^{-1}\widetilde{\DUAF}_n-\DUAF_n=0.
\]
Therefore the middle term vanishes identically on \([-\frac92,0]\). On the two outer regions
\([-M,-\frac92-\eta]\cup[\eta,M]\), Lemma~\ref{lem:chi-est} yields
\[
\|\chi_{2,\eta}\|_{W^{n-1,\infty}([-M,-\frac92-\eta]\cup[\eta,M])}
=
\mathcal{O}\!\left(\eta^{\,n-1+\frac2n}\right).
\]
Also, since \(d_n\) is fixed and \(\widetilde{\DUAF}_n,\DUAF_n\) have fixed Sobolev bounds on
\([-M,M]\), namely
\[
\|d_n^{-1}\widetilde{\DUAF}_n-\DUAF_n\|_{W^{n,\infty}([-M,M])}
\le (d_n^{-1}+1)K.
\]
Hence the product estimate gives
\[
\begin{aligned}
\|(d_n^{-1}\widetilde{\DUAF}_n-\DUAF_n)\chi_{2,\eta}\|_{W^{n-1,\infty}(E_\eta)}
&\le
(d_n^{-1}+1)K
\|\chi_{2,\eta}\|_{W^{n-1,\infty}([-M,-\frac92-\eta]\cup[\eta,M])}
\\
&=
\mathcal{O}\!\left(\eta^{\,n-1+\frac2n}\right).
\end{aligned}
\]

Combining the three estimates, we obtain
\[
\|\Phi_\eta-\DUAF_n\|_{W^{n-1,\infty}(E_\eta)}
=
\mathcal{O}(\eta)+\mathcal{O}\!\left(\eta^{\,n-1+\frac2n}\right)=\mathcal{O}(\eta).
\]
\noindent\textbf{For $\|\Phi_{\eta}-\DUAF_n\|_{W^{n-1,\infty}(T_{\eta})}$.}

\noindent\emph{(A) Derivatives bounds for  $d_n^{-1}\widetilde{\DUAF}_n-\DUAF_n$ on $T_\eta$.}

Recall that on $[0,1]$ we have $\DUAF_n(x)=q_n(2x)$. Denote $I_n=\int_0^{1} t^{\,n}(1-t)^{n}\,dt$ and fix $r\in\{0,1,\dots,n\}$, $u\in[0,1]$. For $r=0$, using $(1-t)^n\le 1$,
\[
0\le q_n(u)=\frac{1}{I_n}\int_0^{u} t^{\,n}(1-t)^{n}\,dt\le \frac{1}{I_n}\int_0^{u} t^{\,n}\,dt=\frac{u^{n+1}}{(n+1)I_n}.
\]
For $r=1$, differentiating under the integral sign once gives $q_n'(u)=\frac{u^{n}(1-u)^{n}}{I_n}$. 
For $1\le r\le n$, repeatedly differentiating the polynomial
$u^n(1-u)^n$, we obtain a constant $E_1=E_1(n)>0$ which depends only on $n$ such that
\[
\begin{aligned}
|q_n^{(r)}(u)|
&\le
\frac{1}{I_n}\sum_{j=0}^{r-1}\binom{r-1}{j}
\frac{(n!)^2}{(n-j)!(n+j+1-r)!}\,
u^{\,n-j}(1-u)^{\,n+j+1-r} \\
&\le
E_1\,u^{\,n-r+1},\qquad u\in[0,1],
\end{aligned}
\]
where the second inequality is due to $(1-u)^{n+j+1-r}\le1$ in $[0,1]$.
Combining the cases $r=0$ and $1\le r\le n$, and making $E_1$ larger if necessary, we have the bounds $|q_n^{(r)}(u)|\le E_1\,u^{\,n+1-r}$.
Therefore, for $x\in[0,\eta]\subset[0,\frac{1}{2}]$ and $0\le r\le n$,
\begin{equation}\label{eq:sigma-small-from-def}
|\DUAF_n^{(r)}(x)|
=
2^{r}\,|q_n^{(r)}(2x)|
\le
2^{r}E_1(2x)^{n-r+1}
\le
2^{n+1}E_1\,\eta^{\,n+1-r}.
\end{equation}

Next, on $[0,\infty)$ we have $\widetilde{\DUAF}_n'(x)=\frac{c_n\,\DUAF_n(x)}{(x+1)^2}$. Fix $r\in\{1,\dots,n\}$ and differentiate $r-1$ times:
\[
\widetilde{\DUAF}_n^{(r)}(x)
=
\partial_x^{\,r-1}\!\Big(\frac{c_n\,\DUAF_n(x)}{(x+1)^2}\Big)
=
c_n\sum_{j=0}^{r-1}\binom{r-1}{j}\,
\DUAF_n^{(j)}(x)\,\partial_x^{\,r-1-j}(x+1)^{-2}.
\]
Since $\sup_{x\in[0,1]}|\partial_x^{\,p}(x+1)^{-2}|$ is bounded by constants depending only on the order $p$, using
\eqref{eq:sigma-small-from-def} yields a constant $E_2=E_2(n)>0$ such that for $x\in[0,\eta]$ and $1\le r\le n$,
\[
|\widetilde{\DUAF}_n^{(r)}(x)|
\le
E_2\,\eta^{\,n+2-r},
\qquad
|\widetilde{\DUAF}_n(x)|\le \sup_{t\in[0,\eta]}|\widetilde{\DUAF}_n'(t)||x|\le E_2 \,\eta^{n+2},
\]
where the second inequality comes from using $\widetilde{\DUAF}_n(0)=0$ and applying the mean value theorem. Consequently, for each $m\in\{0,1,\dots,n-1\}$,
\[
\bigl\|(d_{n}^{-1}\widetilde{\DUAF}_n-\DUAF_n)^{(m)}\bigr\|_{L^\infty([0,\eta])}
\le
d_{n}^{-1}\|\widetilde{\DUAF}_n^{(m)}\|_{L^\infty([0,\eta])}+\|\DUAF_n^{(m)}\|_{L^\infty([0,\eta])}
=
\mathcal{O}(\eta^{n+1-m}), 
\]
We also verify the left transition.  Put $a:=-9/2$ and
$F(x):=\DUAF_n(x)+9/5$ for $x\le a$.  By the endpoint flatness of the polynomial
piece defining $\DUAF_n$, $F^{(j)}(a)=0$ for $0\le j\le n$; hence, for
$x\in[a-\eta,a]$ and $0\le j\le n$,
\[
|F^{(j)}(x)|\le C(n)\eta^{n+1-j}.
\]
On the left of $a$, the defining relation gives
\[
\widetilde{\DUAF}_n'(x)=-d_n\frac{F(x)}{x^2}.
\]
Differentiating this identity $r-1$ times and using that $x$ stays in a compact
interval away from $0$, we obtain, for $1\le r\le n$,
\[
|\widetilde{\DUAF}_n^{(r)}(x)|
\le C(n)\eta^{n+2-r},
\qquad x\in[a-\eta,a].
\]
Together with $\widetilde{\DUAF}_n(a)=d_n\DUAF_n(a)$, the mean value theorem gives
the same order for $r=0$.  Combining these bounds with the preceding bounds for
$\DUAF_n^{(m)}$ yields, for every $0\le m\le n-1$,
\[
\bigl\|(d_{n}^{-1}\widetilde{\DUAF}_n-\DUAF_n)^{(m)}\bigr\|_{L^\infty([a-\eta,a])}
=\mathcal{O}(\eta^{n+1-m}).
\]
Therefore, we derive an overall bound
\begin{equation}\label{eq:dD-D-T}
\|(d_{n}^{-1}\widetilde{\DUAF}_n-\DUAF_n)^{(m)}\|_{L^\infty(T_\eta)}=\mathcal{O}(\eta^{n+1-m}).    
\end{equation}

\smallskip
\noindent
\emph{(B) Estimating the contribution.}

On the transition regions  \(T_{\eta}=\left[-\frac92-\eta,-\frac92\right]\cup[0,\eta]\), we still use
\[
\Phi_\eta-\DUAF_n
=
(\psi_\delta^*-\DUAF_n)\chi_{1,\eta}
+
(d_n^{-1}\widetilde{\DUAF}_n-\DUAF_n)\chi_{2,\eta}
+
(\psi_\delta-\DUAF_n)\chi_{3,\eta}.
\]
By the choice of \(\delta=\delta(\eta)\),
\[
\|(\psi_\delta^*-\DUAF_n)\chi_{1,\eta}\|_{W^{n-1,\infty}([-\frac92-\eta,-\frac92])}
=
\mathcal{O}(\eta),
\qquad
\|(\psi_\delta-\DUAF_n)\chi_{3,\eta}\|_{W^{n-1,\infty}([0,\eta])}
=
\mathcal{O}(\eta).
\]
On the remaining transition pieces, we use Lemma~\ref{lem:chi-est}. Since
\(\|\psi_\delta^*-\DUAF_n\|_{W^{n,\infty}}\le 2K\) and
\(\|\psi_\delta-\DUAF_n\|_{W^{n,\infty}}\le 2K\), the product estimate gives
\[
\|(\psi_\delta^*-\DUAF_n)\chi_{1,\eta}\|_{W^{n-1,\infty}([0,\eta])}
=
\mathcal{O}\!\left(\eta^{\,n-1+\frac2n}\right),
\]
and
\[
\|(\psi_\delta-\DUAF_n)\chi_{3,\eta}\|_{W^{n-1,\infty}([-\frac92-\eta,-\frac92])}
=
\mathcal{O}\!\left(\eta^{\,n-1+\frac2n}\right).
\]
Hence
\[
\begin{aligned}
&\|(\psi_\delta^*-\DUAF_n)\chi_{1,\eta}\|_{W^{n-1,\infty}(T_\eta)}
+
\|(\psi_\delta-\DUAF_n)\chi_{3,\eta}\|_{W^{n-1,\infty}(T_\eta)}
=
\mathcal{O}(\eta)+\mathcal{O}\!\left(\eta^{\,n-1+\frac2n}\right).
\end{aligned}
\]

It remains to estimate the middle term. By \eqref{eq:dD-D-T} and Lemma~\ref{lem:chi-est}, on the transition regions,
\[
\|(d_n^{-1}\widetilde{\DUAF}_n-\DUAF_n)^{(i)}\|_{L^\infty(T_\eta)}
=
\mathcal{O}(\eta^{n+1-i}),
\quad
\|\chi_{2,\eta}^{(i)}\|_{L^\infty(T_\eta)}
=
\mathcal{O}(\eta^{-\alpha i}),
\quad 0\le i\le n-1.
\]
Therefore, by the Leibniz rule, for \(0\le m\le n-1\),
\[
\begin{aligned}
&\left\|\partial_x^m\left[
(d_n^{-1}\widetilde{\DUAF}_n-\DUAF_n)\chi_{2,\eta}
\right]\right\|_{L^\infty(T_\eta)}
\\
&\qquad\le
\sum_{i=0}^m
\binom{m}{i}
\mathcal{O}(\eta^{n+1-i})
\mathcal{O}(\eta^{-\alpha(m-i)})
\\
&\qquad=
\sum_{i=0}^m
\mathcal{O}\!\left(\eta^{n+1-\alpha m+(\alpha-1)i}\right)
=
\mathcal{O}\!\left(\eta^{n+1-\alpha m}\right),
\end{aligned}
\]
where we used \(\alpha>1\). The order minimizes at \(m=n-1\), we get
\[
\|(d_n^{-1}\widetilde{\DUAF}_n-\DUAF_n)\chi_{2,\eta}\|_{W^{n-1,\infty}(T_\eta)}
=
\mathcal{O}\!\left(\eta^{\,n+1-\alpha(n-1)}\right)
=
\mathcal{O}\!\left(\eta^{2/n}\right).
\]

Combining the three terms yields
\[
\|\Phi_\eta-\DUAF_n\|_{W^{n-1,\infty}(T_\eta)}
=
\mathcal{O}(\eta)
+
\mathcal{O}\!\left(\eta^{\,n-1+\frac2n}\right)
+
\mathcal{O}\!\left(\eta^{2/n}\right)
=\mathcal{O}(\eta)+\mathcal{O}\!\left(\eta^{2/n}\right).
\]
This completes the proof.
\end{proof}

\subsection{Proof of Lemma~\ref{lem:chi-est}}\label{sec:proof-chi-est}
Let us recall the definition of $g_{\eta}$ 
\[
G(x):=\frac{x^{n^2}}{x^{n^2}+(1-x)^{n^2}},\qquad
g_{\eta}(x)
:=G\circ u\circ t_\eta(x),
\]
where $t_\eta(x):=\eta^{-\alpha}(\frac{\eta}{2}-x)$ and $u(t):=\frac{1}{2}(1-\widetilde{\DUAF}_n(t))$. We will use the following elementary facts about $G$ repeatedly throughout the proof:
  For each integer $m$ with $0\le m\le n-1$, there exists a constant $C_{n}^{*}>0$
  such that
 \begin{equation}\label{eq:G-global-u}
 |G^{(m)}(x)| \;\le\; C_{n}^{*}\,x^{\,n^2-m},\qquad |(1-G)^{(m)}(x)| \;\le\; C_{n}^{*}\,(1-x)^{\,n^2-m},
 \qquad \,x\in[0,1].
 \end{equation}
 Instead of directly estimating $\chi_{1, \eta}, \chi_{2, \eta},\chi_{3, \eta}$, it is more convenient to estimate their building block $g_{\eta}$.
 
\noindent (A) \textbf{Estimating $g_{\eta}$ on $[-2M, 0]$}

 For $t>0$, by the definition of $\widetilde{\DUAF}_n$,
\[
1-\widetilde{\DUAF}_n(t)=\int_{t}^{\infty}\frac{c_n\,\DUAF_n(s)}{(s+1)^2}\,ds
\le
\int_{t}^{\infty}\frac{c_n}{(s+1)^2}\,ds
=
\frac{c_n}{t+1}.
\]
Notice that when $x\in[-2M,0]$, we always have $t_\eta(x)\ge \frac{\eta^{1-\alpha}}{2}>0$. Therefore, 
\begin{equation}\label{eq:left-u0}
u(t_\eta(x))=\frac{1-\widetilde{\DUAF}_n(t_\eta(x))}{2}
\le
\frac{c_n}{2(t_\eta(x)+1)}
\le
\frac{c_n}{2}\,\eta^{\alpha-1}.
\end{equation}
Moreover, for $t>0$ we have $\widetilde{\DUAF}_n'(t)=c_n\,\DUAF_n(t)(t+1)^{-2}$.
Differentiating this identity repeatedly and using the global bounds
$\|\DUAF_n^{(k)}\|_{L^\infty(\R)}\le K$ for $0\le k\le n$, we obtain constants $D_n>0$ such that for every
$1\le j\le n-1$, $\bigl|\widetilde{\DUAF}_n^{(j)}(t)\bigr|
\le
D_nK(t+1)^{-2}$.

Consequently, by the chain rule, for $1\le j\le n-1$ and $x\in[-2M,0]$,
\begin{equation}\label{eq:left-uj}
\bigl|(u\circ t_\eta)^{(j)}(x)\bigr|
=
\frac{\eta^{-\alpha j}}{2}\,\bigl|\widetilde{\DUAF}_n^{(j)}(t_\eta(x))\bigr|
\le
\frac{\eta^{-\alpha j}}{2}\cdot \frac{D_n K}{(t_\eta(x)+1)^{2}}
\le
\frac{D_nK}{2}\,\eta^{2(\alpha-1)-\alpha j}.
\end{equation}

Recall that $g_\eta(x)=G\big(u(t_\eta(x))\big)$. 
Fix $m\in\{0,1,\dots,n-1\}$. By repeated application of the chain rule and product rule,
$g_\eta^{(m)}(x)$ is a finite sum of terms of the form
\[
G^{(k)}\big(u(t_\eta(x))\big)
\prod_{\ell=1}^{m}
\bigl((u\circ t_\eta)^{(\ell)}(x)\bigr)^{r_\ell}, \qquad 0\le k \le m,
\]
where the nonnegative integers $\{r_\ell\}$ satisfy
$\sum_{\ell} r_\ell=k$ and $\sum_{\ell} \ell r_\ell=m$.
Using \eqref{eq:G-global-u}, namely
$|G^{(k)}(u)|\le C_n^{*}\,u^{\,n^2-k}$ for $0\le k\le n-1$, together with bounds \eqref{eq:left-u0} and~\eqref{eq:left-uj}
on $u\circ t_\eta$ and its derivatives, we obtain
\[
\begin{aligned}
\bigl|G^{(k)}(u(t_\eta(x)))\bigr|
&\le A_1\,\eta^{(\alpha-1)(n^2-k)},\\
\prod_{\ell}\bigl|(u\circ t_\eta)^{(\ell)}(x)\bigr|^{r_\ell}
&\le A_1\,\eta^{2k(\alpha-1)-\alpha m}
 \le A_1\,\eta^{k(\alpha-1)-\alpha m},
\end{aligned}
\]
where $A_1=A_1(M, n)$ is a constant only depending on $M$ and $n$. Multiplying these estimates shows that every such term is bounded by 
$\mathcal{O}(\eta^{(\alpha-1)(n^2-k)}\,\eta^{k(\alpha-1)-\alpha m})
= \mathcal{O}(\eta^{(\alpha-1)n^2-\alpha m})
$.
Since the number of terms depends only on $m\le n-1$, it can be absorbed into the
constant. Therefore, there exists a constant $A_2=A_2(M, n)$ such that
\begin{equation}\label{eq:left-gW-fixed-1}
\begin{aligned}
\|g_\eta^{(m)}\|_{L^\infty([-2M,0])}
&\le
A_2\,\eta^{2n-\alpha m},
\\
\implies\quad
\|g_{\eta}\|_{W^{n-1,\infty}([-2M,0])}
&\le
A_2\,\eta^{\,n-1+\frac2n}.
\end{aligned}
\end{equation}
due to the equality \((\alpha-1)n^2=2n\).

\noindent (B) \textbf{Estimating $1-g_{\eta}$ on $[\eta, 2M]$}

For $t<-\frac92$, by the definition of $\widetilde{\DUAF}_n$ and the normalization of $d_n$,
\[
1+\widetilde{\DUAF}_n(t)
=
-d_n\int_{-\infty}^{t}\frac{\DUAF_n(s)+\frac95}{s^2}\,ds,
\qquad
\left|1+\widetilde{\DUAF}_n(t)\right|
\le
\frac{d_n(K+\frac95)}{|t|}.
\]
For $x\in[\eta,2M]$, recall that $\eta<10^{-\frac{1}{\alpha-1}}$, we have
\[
t_\eta(x)=\eta^{-\alpha}\left(\frac\eta2-x\right)
\le -\frac12\eta^{1-\alpha}<-5<-\frac{9}{2}.
\]
Using the fact that $|t_{\eta}(x)|\ge|t_{\eta}(\eta)|=\frac{1}{2}\eta^{1-\alpha}$ whenever $x\ge\eta$, we deduce that
\begin{equation}\label{eq:one-minus-u-eta}
1-u(t_\eta(x))
=
\frac{1+\widetilde{\DUAF}_n(t_\eta(x))}{2}
\le
d_n\left(K+\frac95\right)\eta^{\alpha-1}
,
\qquad
x\in[\eta,2M].
\end{equation}
Moreover, since
\[
\widetilde{\DUAF}_n'(t)
=
-d_n\frac{\DUAF_n(t)+\frac95}{t^2},
\qquad t<-\frac{9}{2},
\]
repeated differentiation and the global bounds
\(\|\DUAF_n^{(j)}\|_{L^\infty(\mathbb R)}\le K\), \(0\le j\le n\), give a constant
\(\tilde D_n>1\) such that
\[
\left|\widetilde{\DUAF}_n^{(j)}(t)\right|
\le
\tilde D_n d_n\left(K+\frac95\right)|t|^{-2},
\qquad 1\le j\le n-1.
\]
Therefore, for \(1\le j\le n-1\),
\begin{equation}\label{eq:ucirc-t-eta}
\left|(u\circ t_\eta)^{(j)}(x)\right|
=
\frac{\eta^{-\alpha j}}{2}
\left|\widetilde{\DUAF}_n^{(j)}(t_\eta(x))\right|
\le
\tilde D_n d_n\left(K+\frac95\right)
\eta^{2(\alpha-1)-\alpha j}.
\end{equation}
Denote $B_1:=\tilde D_n d_n\left(K+\frac95\right)$ and fix \(m\in\{0,\dots,n-1\}\). Since
\[
1-g_\eta(x)=(1-G)(u(t_\eta(x))),
\]
the chain rule gives that \(\partial_x^m(1-g_\eta)\) is a finite sum of terms
\[
(1-G)^{(k)}(u(t_\eta(x)))
\prod_{\ell=1}^{m}
\bigl((u\circ t_\eta)^{(\ell)}(x)\bigr)^{r_\ell},
\qquad
\sum_{\ell}r_\ell=k,\quad
\sum_{\ell}\ell r_\ell=m.
\]
Using
\[
|(1-G)^{(k)}(y)|\le C_n^*(1-y)^{n^2-k},
\qquad y\in[0,1],
\]
together with \eqref{eq:one-minus-u-eta} and \eqref{eq:ucirc-t-eta}, each such term is bounded by
\[
\left(C_n^{*}
B_1^{n^2-k}
\eta^{(\alpha-1)(n^2-k)}\right)
\cdot
\left(B_1^k
\eta^{2k(\alpha-1)-\alpha m}\right)
\le
C_n^{*}
B_1^{n^2}
\eta^{(\alpha-1)n^2-\alpha m}.
\]
Therefore, there exists a constant $B_2=B_2(M, n)>0$ such that
\begin{equation}\label{eq:left-gW-fixed-2}
\begin{aligned}
\|(1-g_\eta)^{(m)}\|_{L^\infty([\eta,2M])}
&\le
B_2
\eta^{(\alpha-1)n^2-\alpha m}
=B_2
\eta^{2n-\alpha m},
\\
\implies\quad
\|1-g_\eta\|_{W^{n-1,\infty}([\eta,2M])}
&\le
B_2\,
\eta^{\,n-1+\frac2n}.
\end{aligned}
\end{equation}
due to the equality \((\alpha-1)n^2=2n\).

\smallskip
\noindent (C) \textbf{A rough estimate for $1-g_\eta$ and $g_\eta$ on $[0,\eta]$. }

We first observe that we have the following rough bound in general, 
\[
\|(u\circ t_\eta)^{(\ell)}\|_{L^\infty([0,\eta])}
=
\frac{\eta^{-\alpha\ell}}{2}\,\|(1-\widetilde{\DUAF}_n)^{(\ell)}\|_{L^\infty(t_\eta([0,\eta]))}
\le
\frac{1+K}{2}\,\eta^{-\alpha\ell}, \qquad 0\le \ell \le n-1.
\]
Using \eqref{eq:G-global-u} with $y=u(t_\eta(x))\in[0,1]$ gives the uniform bound
\[
|G^{(k)}(u(t_\eta(x)))|\le C_n^{*},
\quad
|(1-G)^{(k)}(u(t_\eta(x)))|\le C_n^{*}, 
\qquad 0\le k \le n-1.
\]
By repeated application of the chain rule and product rule as we did for the previous two intervals, we conclude that there exists a constant $C_2=C_2(M,n)>0$ such that

\begin{equation}\label{eq:trans-1g-rough-full}
\max\left\{\|g_\eta^{(m)}\|_{L^\infty([0,\eta])}, \|(1-g_\eta)^{(m)}\|_{L^\infty([0,\eta])}\right\}
\le
C_2\,\eta^{-\alpha m},
\qquad 0\le m\le n-1.
\end{equation}
\smallskip
\noindent (D) \textbf{Sobolev estimate of $\chi_{1,\eta}$, $\chi_{3,\eta}$, and $\chi_{2,\eta}$ on the relevant subintervals }

Summarizing inequalities \eqref{eq:left-gW-fixed-1}, \eqref{eq:left-gW-fixed-2}, and \eqref{eq:trans-1g-rough-full}, we obtain the following bounds in big-\(\mathcal{O}\) notation, where the implicit constants may depend on \(M\) and \(n\), but are independent of \(\eta\).
\[
\begin{aligned}
\left\{
\begin{aligned}
g_\eta^{(m)} &= \mathcal{O}(\eta^{2n-\alpha m}) && \text{on } [-2M,0],\\
(1-g_\eta)^{(m)} &= \mathcal{O}(\eta^{2n-\alpha m}) && \text{on } [\eta,2M],
\end{aligned}
\right.
\qquad
\left\{
\begin{aligned}
g_\eta^{(m)} &= \mathcal{O}(\eta^{-\alpha m}) && \text{on } [0,\eta],\\
(1-g_\eta)^{(m)} &= \mathcal{O}(\eta^{-\alpha m}) && \text{on } [0,\eta].
\end{aligned}
\right.
\end{aligned}
\]
for \(0\le m\le n-1\). 

All subsequent estimates are obtained by repeated applications of the Leibniz rule, combined with the above pointwise bounds for $g_\eta$ and $1-g_\eta$ on each region.

On \([-\frac92,M]\), for
\(\chi_{1,\eta}=(1-g_\eta(x+\frac92+\eta))(1-g_\eta(x))\),
the shifted factor is evaluated in \([\eta,2M]\) and contributes
\(\mathcal{O}(\eta^{2n-\alpha i})\), while the second factor is controlled by
\(\mathcal{O}(\eta^{-\alpha j})\). Hence
\[
\|\chi_{1,\eta}\|_{W^{n-1,\infty}([-\frac92,M])}
= \mathcal{O}(\eta^{2n-\alpha(n-1)})=\mathcal{O}(\eta^{\,n-1+\frac2n}).
\]

On \([-M,0]\), since \(\chi_{3,\eta}=g_\eta\),
\[
\|\chi_{3,\eta}\|_{W^{n-1,\infty}([-M,0])}
= \mathcal{O}(\eta^{2n-\alpha(n-1)})=\mathcal{O}(\eta^{\,n-1+\frac2n}).
\]

For \(\chi_{2,\eta}=g_\eta(x+\frac92+\eta)(1-g_\eta(x))\), on the outer regions
\([-M,-\frac92-\eta]\cup[\eta,M]\), one factor is small and the other is rough,
thus
\[
\|\chi_{2,\eta}\|_{W^{n-1,\infty}([-M,-\frac92-\eta]\cup[\eta,M])}
= \mathcal{O}(\eta^{2n-\alpha(n-1)})=\mathcal{O}(\eta^{\,n-1+\frac2n}).
\]

On the transition regions \([-\frac92-\eta,-\frac92]\cup[0,\eta]\), one of the two factors is in its transition region, while the other one is uniformly bounded and its derivatives are controlled by the corresponding off-transition estimates. Hence, by the Leibniz rule, for every \(0\le m\le n-1\),
\[
\|\chi_{2,\eta}^{(m)}\|_{L^\infty([-\frac92-\eta,-\frac92]\cup[0,\eta])}
=
\mathcal{O}(\eta^{-\alpha m}).
\]
This completes the proof.

\section{Conclusion}
\label{sec:conclusion}

This paper develops a fixed-size approximation theory for neural networks in
Sobolev norms.  We first showed that the elementary universal activation function
\EUAF{} yields arbitrary-accuracy approximation in \(W^{1,\infty}\) for
\(W^{2,\infty}\) targets.  We then introduced the differentiable universal
activation functions \(\DUAF_n\) and \(\DUAF_\infty\), and proved fixed-size
approximation of \(W^{s,\infty}\) functions in the \(W^{s-1,\infty}\)-norm for
\(1\le s\le n+1\).  We further constructed bounded monotone sigmoidal variants
\(\widetilde{\DUAF}_n\), showing that the fixed-size Sobolev approximation
phenomenon persists in a sigmoidal class.  For structured Sobolev targets admitting
regular Kolmogorov-type superpositions, we also obtained architectures with
dimensionally more efficient width.

These results show that super-expressive activations can be designed not only for
function-value approximation in \(C([a,b])\) or \(L^p([a,b])\), but also for controlling
weak derivatives.  Several questions remain open.  It would be interesting to
generalize the \DUAF construction, optimize the dependence of the architecture on
\(s\), \(d\), and \(n\), and study the optimization error and generalization errors
of such fixed-size networks.

A particularly natural question is whether there exists a single activation that is
elementary, analytic, sigmoidal, and super-expressive in Sobolev
norms.

\begin{Backmatter}

%
\paragraph{Funding statement}
H. Y. was partially supported by the US National Science Foundation under awards DMS-2244988, the Office of Naval Research Award N00014-23-1-2007, and the Department of Energy ASCR Award DE-SC0026052. 
S. Z. was partially supported by
start-up fund  P0053092  from the
Hong Kong Polytechnic University.

\paragraph{Competing interests}
The authors declare none.

\paragraph{Data availability statement}
Data sharing is not applicable to this article as no datasets were generated or analyzed.

\paragraph{Author contributions}
All authors contributed to the development of the theory and preparation of the manuscript.


\bibliographystyle{abbrvnat}
\bibliography{references}

\end{Backmatter}

\end{document}